\pgfplotsset{width=8cm,compat=newest}
\newtheorem*{theorem*}{Theorem}
\def\colorful{1}
\newcommand{\TV}{\dist_{\mathrm{TV}}}
\newcommand{\error}{\mathrm{error}}
\newcommand{\Unif}{\mathrm{Unif}}
\newcommand{\depth}{\mathrm{depth}}
\newcommand{\train}{\mathrm{train}}
\newcommand{\test}{\mathrm{test}}
\newcommand{\remaining}{\mathrm{remaining}}
\newcommand{\FindSubcube}
{\textsc{FindSubcubeList}}
\newcommand{\TreeLearn}
{\textsc{TreeLearn}}
\newcommand{\FindTree}
{\textsc{FindTree}}
\newcommand{\bmcH}
{\boldsymbol{\mcH}}
\newcommand{\epsa}
{\eps_{\mathrm{additional}}}
\DeclarePairedDelimiter\ceil{\lceil}{\rceil}
\DeclarePairedDelimiter\abs{|}{|}
\DeclarePairedDelimiter\paren{(}{)}
\DeclarePairedDelimiter\bracket{[}{]}
\DeclarePairedDelimiter\sset{\{}{\}}
\newcommand{\iid}{\overset{\mathrm{iid}}{\sim}}
\newlist{enumprop}{enumerate}{1} 
\setlist[enumprop]{label=\arabic*.,ref=\theproposition.\arabic*}
\newtheorem*{rep@theorem}{\rep@title}
\newcommand{\newreptheorem}[2]{
\newenvironment{rep#1}[1]{
 \def\rep@title{#2 \ref{##1}}
 \begin{rep@theorem}\itshape}
 {\end{rep@theorem}}}
\newcommand{\pparagraph}[1]{\bigskip \noindent {\bf {#1}}}
\def\coltformat{0}
\begin{document}
\title{A Distributional-Lifting Theorem for PAC Learning 
}

\author{ 
Guy Blanc \vspace{6pt} \\ 
{{\sl Stanford}}  \and 
\hspace{5pt} Jane Lange \vspace{6pt} \\
\hspace{5pt} {{\sl MIT}} \vspace{10pt}  \and 
\hspace{5pt} Carmen Strassle \vspace{6pt} \\
\hspace{5pt} {{\sl Stanford}} \vspace{10pt} \and 
Li-Yang Tan \vspace{6pt}  \\
\hspace{-10pt} {{\sl Stanford}}
}

\date{\vspace{15pt}\small{\today}}

\maketitle

\begin{abstract}
The apparent difficulty of efficient distribution-free PAC learning has led to a large body of work on distribution-specific learning. Distributional assumptions facilitate the design of efficient algorithms but also limit their reach and relevance. Towards addressing this, we prove a {\sl distributional-lifting theorem}: This upgrades a learner that succeeds with respect to a limited distribution family $\mathcal{D}$ to one that succeeds with respect to {\sl any} distribution $D^\star$, with an efficiency overhead that scales with the complexity of expressing $D^\star$ as a mixture of distributions in $\mathcal{D}$. 

Recent work of Blanc, Lange, Malik, and Tan considered the special case of lifting  uniform-distribution learners and designed a lifter that uses a  conditional sample oracle for $D^\star$, a strong form of access not afforded by the standard PAC model. Their approach, which draws on ideas from semi-supervised learning, first learns $D^\star$ and then uses this information to lift. 

We show that their approach is information-theoretically intractable with access only to random examples, thereby giving formal justification for their use of the conditional sample oracle. We then take a different approach that sidesteps the need to learn $D^\star$, yielding a lifter that works in the standard PAC model and enjoys additional advantages: it works for all base distribution families, preserves the noise tolerance of learners, has better sample complexity, and is simpler. \ifnum\coltformat=1 \footnote{Extended abstract. Full version available on arxiv.}\fi \ifnum\coltformat=0 \footnote{Accepted for presentation at the
Conference on Learning Theory (COLT) 2025}\fi

\end{abstract}

\ifnum\coltformat=1
\begin{keywords}
PAC learning, distribution-specific learning, distributional decomposition
\end{keywords}
\fi

\ifnum\coltformat=0
\thispagestyle{empty}
\newpage 
\setcounter{page}{1}
\fi

\section{Introduction}

\paragraph{Distribution-free PAC learning.} In PAC learning~\citep{Val84}, the learner is given labeled examples $(\bx,f(\bx))$ where $f$ is an unknown target function and $\bx$ is distributed according to an unknown distribution $D$. While~$f$ is assumed to belong to a known  concept class $\mathcal{C}$, no assumptions are made about $D$. This {\sl distribution-free} aspect of the model is a notable strength: the guarantees of a PAC learning algorithm hold for {\sl all} distributions, regardless of how simple or complicated they may be.

On the other hand, it is also this very aspect of the model that has skewed the balance between positive and negative results heavily towards the latter. Indeed, the introduction of the model quickly led to a flurry of works (e.g.~\cite{KLPV87,PV88,LV88,PW88,BR88,Han89,PW89,KV89}\,...)~showing that even simple concept classes $\mathcal{C}$ can be computationally hard to learn. The unifying idea across all such results is that many canonical hard problems---from factoring to NP-complete problems---can, roughly speaking, be embedded within the distribution~$D$. This therefore shows that the simplicity of $\mathcal{C}$ can be negated by the complexity of $D$.



\paragraph{Distribution-specific PAC learning.} This suggests the distribution-{\sl specific} variant of PAC learning where the underlying distribution $D$ is fixed, known, and ``nice".  Kearns, Li, Pitt, and Valiant~\citep{KLPV87} were the first to  consider this setting, citing it as a way of circumventing the hardness of distribution-free learning. 
Other early works include~\cite{BI91,Nat92}. 



Allowing for distributional assumptions has indeed led to a bounty of positive results, with most of them assuming that $D$ belongs to the class $\mathcal{D}_{\mathrm{prod}}$ of product distributions. 
We now have fast product-distribution learners for basic classes such as DNF formulas~\citep{Ver90,Jac97,GKK08} and intersections of halfspaces~\citep{KOS04,Kane14-intersections}. These contrast with the strong lower bounds that have been shown for these classes in the distribution-free setting~\citep{KS07,KS09,RS10,She10,She13}.  The connection between product-distribution learning and the Fourier analysis of boolean functions has given us general and powerful techniques for designing learners~\citep{LMN93,KM93}. Notably, these techniques extend to the {\sl agnostic} setting~\citep{KSS94,GKK08,KKMS08}, yielding fast and noise-tolerant learners for many expressive classes. This again contrasts with the distribution-free setting where even the agnostic learning of {\sl disjunctions} is well-known to be a difficult open problem.

The main caveat that comes with product-distribution learners is, of course, that the assumption of independence is a strong and stylized one. Much of the richness and complexity of real-world data distributions come from the correlations among features. Therefore, while product-distribution PAC learning has become an important area of study within theoretical computer science, it admittedly departs from Valiant's original intention for the PAC model to serve as a theoretical framework for machine learning.  

\paragraph{The two extremes of distributional assumptions.} Summarizing, much of the work in PAC learning has focused on two extremes in terms of distributional assumptions. At one extreme  we have distribution-free PAC learning where  no assumptions are made about the underlying distribution. Algorithms in this setting come with an extremely strong guarantee, but obtaining such algorithms has been correspondingly challenging. At the other extreme we have distribution-specific  PAC learning. This setting has enjoyed  algorithmic success in the form of fast algorithms and sophisticated techniques, but at the price of a strong, arguably unrealistic, distributional assumption. Research on these settings has resulted in two large and mostly separate islands of work.


\subsection{Prior work of~\cite{BLMT-lifting}}

\paragraph{Interpolating between the extremes via distributional lifting.} Towards addressing this, recent work of Blanc, Lange, Malik, and Tan~\citep{BLMT-lifting} proposed the possibility of  {\sl interpolating} between uniform-distribution learning, the simplest case of product-distribution learning, and distribution-free learning. Their motivation was twofold. First, if the distribution-free setting is overly demanding and the assumption of independence is overly strong, perhaps we should search for fruitful middle grounds that allow for efficient algorithms whose guarantees hold for limited yet still broad classes of distributions. Second, just as PAC learning has benefited from the study of function complexity---under the hood of every new PAC learning algorithm are new insights about the structure of functions in the concept class---perhaps it could also benefit from the equally large literature on  distributional complexity.

\cite{BLMT-lifting} instantiated this proposal with decision tree complexity as the interpolating notion of distributional complexity. The decision tree complexity of a distribution $D^\star$ over $\{\pm 1\}^n$ is the smallest integer~$d$ such that $D^\star$ can be decomposed into a depth-$d$ decision tree of uniform pieces. This is therefore a complexity measure that indeed interpolates between the uniform distribution $(d=0)$ and the class of all distributions $(d=n)$. Small depth decision tree distributions were first considered by Feldman, O'Donnell, and Servedio~\citep{FOS08} in the context of distribution learning. Such distributions generalize junta distributions~\citep{ABR16} and are in turn a special case of mixtures of subcubes~\citep{CM19} and mixtures of product distributions~\citep{FM99,Cry99,CGG01,FOS08}.

\cite{BLMT-lifting}'s main result is a {\sl distributional-lifting} theorem that  transforms, in a blackbox fashion, a uniform-distribution learner into one that works for {\sl any} distribution $D^\star$, with an overhead the scales with the decision tree complexity of $D^\star$: \medskip 

\noindent {\bf \cite{BLMT-lifting}'s lifter.} {\it Let $\mathcal{C}$ be a concept class over $\{\pm 1\}^n$ that is closed under restrictions. Suppose there is an algorithm that learns $\mathcal{C}$ under the uniform distribution in $\poly(n)$ time. There is an algorithm that, given a conditional sample oracle to any depth-$d$ decision tree distribution $D^\star$, learns $\mathcal{C}$ under~$D^\star$ in $n^{O(d)}$ time. 
}\medskip

The conditional sample oracle~\citep{CRS15,BC18,CCKLW21,CJLW21} allows the learner to specify any subset $S\sse \{\pm 1\}^n$ and receive unlabeled samples drawn from the conditional distribution of $D^\star$ over $S$. We will soon say more about this oracle.  Setting it aside for now, we note that the $n^{O(d)}$ runtime of~\citep{BLMT-lifting}'s lifted learner indeed scales with the inherent complexity of $D^*$, retaining the base learner's $\poly(n)$ runtime for constant $d$, and remaining subexponential for $d$ as large as $\Theta(n^{0.99})$. On a more technical level, their result carries the appealing message  that the large technical toolkit for uniform-distribution learning---all of which relies crucially on the assumption of independence among features---now have implications on settings where distributions have highly-correlated features. 


\paragraph{Lifting replicable PAC learners.} Recent work of Kalavasis, Karbasi, Velegkas, and Zhou~\citep{KKVZ24} extended~\cite{BLMT-lifting}'s lifter to the setting of {\sl replicable} PAC learning~\citep{ILPS22}, achieving  similar parameters under the same assumptions.

\subsection{Downsides of~\cite{BLMT-lifting}'s lifter} While~\cite{BLMT-lifting}'s work shows that distributional lifting is a viable concept, their  lifter comes with three important downsides. 

\begin{enumerate}
    \item The {\bf \emph{conditional sample oracle}} is a  strong form of access not afforded by the standard PAC model. In fact,~\cite{BLMT-lifting}'s lifter uses this oracle with a near-maximal ``amount of conditioning": it requests for samples of $D^\star$ conditioned on sets $S$ of constant size, allowing it to receive samples that are extremely unlikely under $D^\star$ itself.

This is a serious  downside as the lifted learner now has active control over the samples that it receives, allowing it to ``home in" on small and specific parts of the distribution. 
The assumption of uniformity is therefore traded for what is arguably an equally unrealistic assumption that the learner has access to such an oracle. Put differently, their result does not perform distributional lifting {\sl within} the PAC model: it takes a random-example PAC learner and lifts it to one that is no longer a random-example PAC learner.\footnote{\cite{BLMT-lifting} and~\cite{KKVZ24} do show that the oracle is not necessary in the case of {\sl monotone} distributions $D^\star$.}

    \item Their lifter {\bf \emph{only lifts uniform-distribution base learners}}. Notably, as we explain in the body of the paper, their lifter is unable to even lift product-distribution base learners. (It is easy to see that most product distributions require the maxmimal decision tree depth $n$ to decompose into uniform pieces.)  This is especially unfortunate since the analyses of uniform-distribution learners tend to extend straightfowardly, sometimes even in a blackbox fashion, to the setting of general product distributions. In particular, Fourier-based uniform-distribution learners all have product-distribution analogues.  

Ideally, one would like to lift base learners for an arbitrary class $\mathcal{D}$ of distributions into one that succeeds under any distribution $D^\star$, with an overhead that scales with the complexity of expressing $D^\star$ as a decision tree of distributions in $\mathcal{D}$. \ifnum\coltformat=1
See \Cref{figure:DT-leaf-dist}.
\fi

\ifnum\coltformat=0
\bigskip 
\fi

\ifnum\coltformat=0
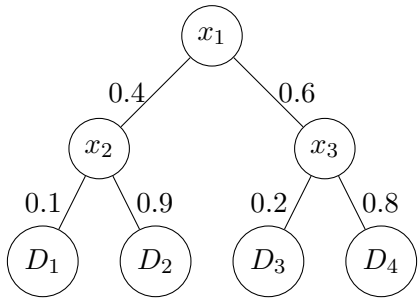
\begin{figure}[h]
    \centering
    \begin{tikzpicture}[level distance=15mm]
        \node[draw, circle] {$x_1$} [sibling distance=30mm]
            child { node[draw, circle] {$x_2$} [sibling distance=15mm]
                child { node[draw, circle] {$D_1$} edge from parent node[left] {$0.1$} }
                child { node[draw, circle] {$D_2$} edge from parent node[right] {$0.9$} }
                edge from parent node[left] {$0.4$} }
            child { node[draw, circle] {$x_3$} [sibling distance=15mm]
                child { node[draw, circle] {$D_3$} edge from parent node[left] {$0.2$} }
                child { node[draw, circle] {$D_4$} edge from parent node[right] {$0.8$} }
                edge from parent node[right] {$0.6$} };
    \end{tikzpicture}
    \caption{A decision tree with leaves $D_i \in \mathcal{D}$.}
    \label{figure:DT-leaf-dist}
\end{figure}
\fi

   \item Their lifter {\bf \emph{does not preserve the noise tolerance of learners}}. That is, even if the base learners are noise-tolerant, the resulting lifted learner is not guaranteed to be.  This is again unfortunate, given the large literature on noise-tolerant learning under product distributions and other distribution-specific settings.

\end{enumerate}

\section{This work}

Our main result is a new lifter that addresses all three downsides:


\begin{tcolorbox}[colback = white,arc=1mm, boxrule=0.25mm]
\vspace{5pt}
\begin{theorem}[Informal; see~\Cref{thm:lift-DT} for the formal version]
\label{thm:main intro}    
Let $\mathcal{C}$ be a concept class and $\mathcal{D}$ be a family of distributions over $\bits^n$ that is closed under restrictions. Suppose there is an algorithm $A$ that learns $\mathcal{C}$ under any $D \in \mathcal{D}$ in $\poly(n)$ time. 

There is an algorithm that, for any $D^\star$ that admits a depth-$d$ decomposition into distributions in $\mathcal{D}$, learns $\mathcal{C}$ under $D^\star$ in $n^{O(d)}$ time and with $2^{O(d)}\cdot \poly(n)$ samples. Furthermore, if the base learner $A$ is noise tolerant, then so is the lifted learner. 
\end{theorem}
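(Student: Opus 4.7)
The plan is to sidestep learning $D^\star$ entirely and instead enumerate candidate hypotheses at the level of subcubes, then combine them by dynamic programming. The crucial observation is that although the unknown decomposition tree $T^\star$ has $n^{\Theta(2^d)}$ possible shapes as an object, every subtree in every candidate is rooted at some partial assignment $\pi$ of length at most $d$, and there are only $|\Pi_d| = \sum_{k \leq d} \binom{n}{k} 2^k = n^{O(d)}$ such assignments. Because the base-distribution family $\mathcal{D}$ is closed under restrictions, for \emph{every} leaf $\pi_\ell$ of $T^\star$ the conditional distribution $D^\star \mid \pi_\ell$ lies in $\mathcal{D}$, so random samples from $D^\star$ that happen to be consistent with $\pi_\ell$ can be fed directly to the base learner $A$.

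Concretely, I would draw a training set $S_{\train}$ and an independent validation set $S_{\test}$ from $D^\star$, each of size $2^{O(d)} \cdot \poly(n, 1/\varepsilon)$. For every $\pi \in \Pi_d$ whose estimated mass under $D^\star$ exceeds a threshold of order $\varepsilon/2^d$, I would restrict $S_{\train}$ to the samples consistent with $\pi$---which are distributed as $D^\star \mid \pi$---and run $A$ on them to obtain a leaf hypothesis $h_\pi$. Then a memoized dynamic program over $\Pi_d$: define $\mathrm{Opt}[\pi]$ as the minimum over (i) the leaf option (use $h_\pi$, with estimated error on the portion of $S_{\test}$ consistent with $\pi$), and (ii) the split option, which for $|\pi| < d$ chooses some variable $x_i \notin \pi$ and combines $\mathrm{Opt}[\pi \cup \{x_i \to 0\}]$ and $\mathrm{Opt}[\pi \cup \{x_i \to 1\}]$ weighted by the empirical branching probabilities. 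The total DP work is $O(n \cdot |\Pi_d|) = n^{O(d)}$, and the final hypothesis $H$ is the tree realizing $\mathrm{Opt}[\emptyset]$.

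The correctness argument has three pieces. First, whenever $\pi_\ell$ is a sufficiently heavy leaf of $T^\star$, closure of $\mathcal{D}$ under restrictions together with the fact that filtered samples are exactly $D^\star \mid \pi_\ell$-distributed let me invoke $A$'s PAC guarantee to conclude that $h_{\pi_\ell}$ has error at most $\varepsilon_A$ on $D^\star \mid \pi_\ell$; summing over leaves shows that the tree $T^\star$, viewed as a particular choice of splits walkable by the DP, achieves overall error at most $\varepsilon_A + \varepsilon$ on $D^\star$, where the $+\varepsilon$ absorbs the total mass of leaves below the heaviness threshold. Second, because $\mathrm{Opt}[\pi]$ is the minimum over all candidate subtrees rooted at $\pi$ (including the one $T^\star$ would prescribe whenever $\pi$ lies on a $T^\star$-path), the DP's output is at least as good as $T^\star$ up to validation-set estimation error. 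Third, uniform convergence of empirical validation errors across all $n^{O(d)}$ subproblems follows from a Hoeffding plus union bound that sets $|S_{\test}|$. Noise tolerance transfers essentially for free: if $D^\star$'s labels are noisy (random classification noise or fully agnostic), the labels in $S_{\train}\mid\pi$ inherit the same noise structure, so a noise-tolerant $A$ yields per-leaf agnostic error $\mathrm{opt}_\ell + \varepsilon$, and these combine via the inequality $\mathrm{opt}(D^\star) \geq \sum_\ell p_\ell \cdot \mathrm{opt}_\ell$ to give the agnostic guarantee on $D^\star$.

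The main obstacle I anticipate is calibrating the heavy-leaf threshold so that we can simultaneously (i) afford enough training samples in each heavy leaf for $A$'s PAC guarantee and (ii) keep overall sample complexity at $2^{O(d)} \cdot \poly(n)$; the $2^{O(d)}$ blowup is unavoidable because leaf probabilities can be as small as $\varepsilon/2^d$ while still mattering. A secondary delicate point is that $T^\star$ need not be literally reconstructed by the DP---the DP may pick a different tree with equal or smaller empirical error---so the proof must rely only on the existence of \emph{some} tree in the DP's search space (namely $T^\star$) whose true error is small, and then transport this bound to the DP's output via uniform convergence rather than by structural identification of $T^\star$.
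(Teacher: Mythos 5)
Your proposal is essentially the paper's own approach: train the base learner on every depth-$\le d$ restriction of $S_{\train}$, then search over trees via a recursive/memoized minimization of test error, with correctness argued by (i) the ground-truth tree $T^\star$ being in the search space and achieving low true error via a heavy-leaf mass argument, (ii) the search returning the empirical minimizer, and (iii) uniform convergence transporting test error to true error. One small imprecision worth flagging: the union bound should be taken over the $n^{2^d}$ candidate depth-$d$ trees (giving $\log$-cardinality $2^d\log n$ and hence $m_{\test}=O(2^d\log n/\eps^2)$), not over the $n^{O(d)}$ subproblems alone, since controlling each restriction's conditional test error to a fixed precision does not directly control the composed tree's error when leaves can have small mass.
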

\vspace{1pt}
\vspace{-8pt}
\end{tcolorbox}



\paragraph{Overcoming an impossibility result.} Our approach to obtaining~\Cref{thm:main intro} is entirely different from~\cite{BLMT-lifting}'s---necessarily so, as we now explain. Drawing on ideas from semi-supervised learning,~\cite{BLMT-lifting} advocated for a two-step approach to distributional-lifting: 
\begin{itemize}
    \item[] {\sl Step 1:} Learn the decision tree decomposition of $D^\star$ into uniform pieces.
    \item[] {\sl Step 2:} Exploit this knowledge of the decomposition to lift.
\end{itemize}

The bulk of their work goes into designing an algorithm for Step 1---with the decomposition in hand, a lifter follows quite straightforwardly. It is also Step 1 for which they use the conditional sample oracle, and in their paper they pointed to several aspects of their algorithm for which the oracle is necessary. 

The starting point of our work is an impossibility result (\Cref{thm:BLMT-lowerbound}) for the whole two-step approach: We show, via a reduction to uniformity testing, that it is information-theoretically impossible to carry out the first step using only random examples. The novelty of our work is therefore a lifter that, by necessity, sidesteps the need to learn $D^\star$. While the main conceptual message of~\cite{BLMT-lifting}'s work is that techniques from distributional learning can be useful for the distributional lifting of PAC learners, our work provides the complementary message that an algorithm for the former is {\sl not} a prerequisite for the latter. 


We prove this impossibility result in~\Cref{sec:BLMT limitations}, where we also explain why~\cite{BLMT-lifting}'s approach leads to the the second and third downsides listed above.  


\paragraph{Additional advantages of~\Cref{thm:main intro}.} In addition to addressing these downsides, our new approach comes with a couple of additional advantages: 

\begin{enumerate}
    \item {\bf \emph{Improved sample complexity.}} The sample complexity of our lifted learner is $2^{O(d)}\cdot \poly(n)$, a substantial improvement over~\cite{BLMT-lifting}'s  $n^{O(d)}$. In particular, our sample complexity remains $\poly(n)$ for $d = \Theta(\log n)$ whereas theirs becomes $\mathrm{superpoly}(n)$ for any $d = \omega(1)$. 
    
    \item {\bf \emph{Handles general subcube partitions.}}  We are able to handle a complexity measure that is significantly more expressive than decision tree depth. A depth-$d$ decision tree induces a partition of  $\bn$ into subcubes of codimension $d$, but the converse is not true: a partition of $\bn$ into subcubes of codimension $d$ may not be realized by a decision tree. Furthermore, we only require that the subcubes of codimension $d$ partition $\supp(D^\star)$ instead of all of $\bn$. See~\Cref{fig:DT vs subcube}.

\ifnum\coltformat=0
\begin{figure}[h!]
    \centering
\begin{tikzpicture}
    \draw (0, 0) rectangle (3, 3); 
    \draw (0, 1.5) -- (3, 1.5); 
    \draw (1.5, 1.5) -- (1.5, 3); 

    \draw (4, 0) rectangle (7, 3); 
    \draw (4.3, 0.7) rectangle (5.4, 1.5); 
    \draw (5.8, 0.6) rectangle (6.7, 1.4); 
    \draw (5.5, 1.8) rectangle (6.6, 2.8); 
\end{tikzpicture}
    \caption{Decision tree partition of $\bits^n$ (left) vs.~subcube partition of $\supp(D^\star)$ (right)}
    \label{fig:DT vs subcube}
\end{figure}
\fi

As we note in \Cref{sec:prelim}, there is an exponential separation in the expressiveness of these measures. It is easy to construct distributions $D^\star$ whose support can be partitioned into uniform distributions over subcubes of codimension $d$, and yet any decision tree decomposition of $D^\star$ into uniform pieces requires depth $2^{\Omega(d/\log d)}$. 

    \cite{BLMT-boosting}'s lifter appears  to rely quite crucially on the hierarchical structure of the partitions induced by a decision tree. Our analysis, on the other hand, extends to handle subcube partitions with the same performance guarantees. This extension requires a few additional ideas beyond those that go into~\Cref{thm:main intro}; we describe them in~\Cref{sec:technical overview}.
\end{enumerate}

\subsection{Other related work}

\paragraph{The work of~\cite{BOW10}.} While~\cite{BLMT-lifting} was the first to explicitly study distributional lifting of PAC learners, the general idea dates back (at least) to the work of Blais, O'Donnell, and Wimmer~\citep{BOW10}. The main result of their paper was an efficient algorithm for performing polynomial regression  under product distributions. As an application, they observed that their algorithm can be used as a base learner in conjunction with a distributional learning algorithm of Feldman, O'Donnell, and Servedio~\citep{FOS08} for learning mixtures of product distributions.  

Like~\cite{BLMT-lifting},~\cite{BOW10}'s observation is based on a two-stage approach of first learning the marginal distribution and then exploiting this information to learn the target function. (This general approach is ubiquitous in semi-supervised learning; see e.g.~\citep{GBDBGTU19} and the references therein.) As discussed, the crux of our work is in departing from such an approach.

\paragraph{Composition theorems in PAC learning.}  Distributional-lifting theorems may be viewed as the distributional analogue of {\sl composition theorems} in PAC learning~\citep{KLPV87,PW90,AKMW96,BDBK97,Bsh98,BM06}. At a high level, these theorems show how learning algorithms for a concept class $\mathcal{C}$ (or a family of concept classes) can be generically lifted to give learning algorithms for larger concept classes.  

\section{Technical Overview}

\label{sec:technical overview}

As mentioned, obtaining a distributional-lifting theorem that uses only random examples necessitates an approach that is different from~\cite{BLMT-lifting}'s. Since the challenges are evident even in their specific setting of lifting uniform-distribution learners,  in this overview we specialize our discussion to that setting. Our approach also extends rather easily from that setting to that of lifting from arbitrary distribution families, so this overview captures the essential ideas. 

\subsection{Quick overview of~\cite{BLMT-lifting}'s approach} We briefly describe~\cite{BLMT-lifting}'s two-stage approach while also introducing notation that will be useful for describing and contrasting our approach. Suppose $D^\star$ admits a decomposition into a small-depth decision tree $T^\star$ of uniform pieces. Let us denote this as $D = T^\star \circ \mathrm{unif}$.  \cite{BLMT-lifting} first uses the conditional sample oracle to learn $D^\star$, i.e.~find a hypothesis tree $T$ such that $\dist_{\mathrm{TV}}(T \circ \mathrm{unif},T^{\star} \circ \mathrm{unif})$ is small\ifnum\coltformat=0
:
\fi
\ifnum\coltformat=1
\ (see \Cref{fig:BLMT}).
\fi

\ifnum\coltformat=0
\begin{center}
\begin{figure}[h]
    \centering
\begin{tikzpicture}
    \draw (0,0) -- (2,4) -- (4,0) -- cycle;
    \node at (2, 1.7) {\LARGE $T$};

    \foreach \x in {0.5, 1.5, 2.5, 3.5} {
        \draw (\x, -0.25) -- (\x, 0.25);
        \node at (\x, -0.5) {\footnotesize Unif};
    }

    \draw (6,0) -- (8,4) -- (10,0) -- cycle;
    \node at (8.2, 1.7) {\LARGE $T^\star$};

    \foreach \x in {6.5, 7.5, 8.5, 9.5} {
        \draw (\x, -0.25) -- (\x, 0.25);
        \node at (\x, -0.5) {\footnotesize Unif};
    }

    \node at (5, 2) {\LARGE $\approx$};
\end{tikzpicture}    
    \caption{First step of~\citep{BLMT-lifting}'s lifter: Learning the decision tree decomposition}
    \label{fig:BLMT}
\end{figure}
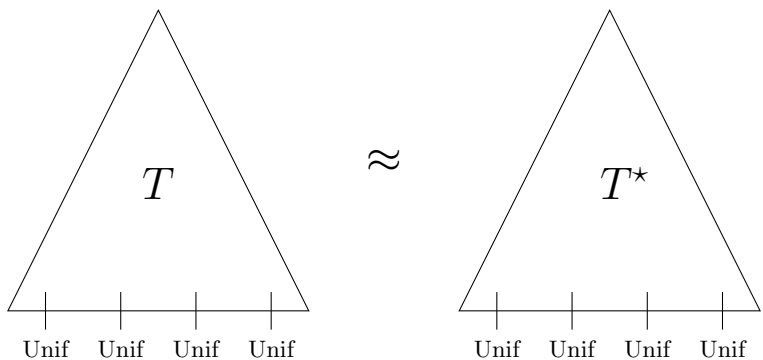
\end{center}
\vspace{-25pt}
\fi

With $T$ in hand, they  draw a training set $\bS$ from $D^\star$. This tree $T$ induces a natural partition of $\bS$ into $\bS = \bS_1 \sqcup \cdots \sqcup \bS_s$ where $s$ is the number of leaves in $T$.  They then run the base learner $A$ on $\bS_1,\ldots,\bS_s$ parts to obtain high-accuracy hypotheses $h_1,h_2,\ldots, h_s$. Their final lifted hypothesis will be $T$ composed with $h_1,h_2,\ldots,h_s$\ifnum\coltformat=0
:
\fi
\ifnum\coltformat=1
\ (see \Cref{fig:tree-composed}).
\fi
\ifnum\coltformat=0
\begin{center}
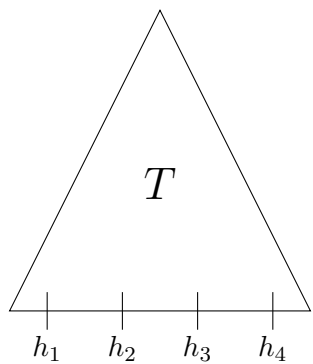
\begin{figure}[h]
\centering
\begin{tikzpicture}
    
\draw (0,0) -- (2,4) -- (4,0) -- cycle;
\node at (2, 1.7) {\LARGE $T$};

\foreach \x [count=\i] in {0.5, 1.5, 2.5, 3.5} {
    \draw (\x, -0.25) -- (\x, 0.25);
    \node at (\x, -0.5) {\normalsize $h_{\i}$};
}
\end{tikzpicture}
\caption{A tree composed with leaf-hypotheses}
\label{fig:tree-composed}
\end{figure}
\end{center}
\fi

Let us denote this hypothesis as $H \coloneqq T \circ (h_1,\ldots,h_s)$. It is straightforward to see that, with an appropriate choice of parameters, $H$ will w.h.p.~be a high-accuracy hypothesis for the target function $f$ with respect to $D^\star$. The simple but crucial fact here is that if $\bS$ is drawn according to $D^\star$, then each of the induced parts $\bS_1,\ldots,\bS_s$ will be distributed near-uniformly. We will return to this point in \Cref{subsec:DT-lifter-overview}. 


\subsection{A first attempt and why it fails}

We know that we cannot hope to learn the decomposition of $D^\star$ from random examples only. Our approach is therefore more direct in nature, by necessity: Rather than first searching for a tree $T$ that is a good approximation of $T^\star$ and then running the base learner $A$ on the partition induced by $T$, we will search directly for a tree $T$  such that running $A$ on the partition induced by $T$ yields a lifted hypothesis that is close to the target function. Indeed, this seems like such a direct approach would even result in an algorithm that is simpler than~\cite{BLMT-lifting}'s.

However, the fact that $T$ may not be a good approximation of $T^\star$, i.e.~that $T$ may not satisfy the property depicted in~\Cref{fig:BLMT}, poses difficulties for the analysis.  To illustrate these difficulties, consider the following candidate algorithm (which {\sl is} much simpler than~\cite{BLMT-lifting}'s):  

\begin{quote}
\begin{tcolorbox}[colback = white,arc=1mm, boxrule=0.25mm]
\begin{center}
\vspace{5pt}
{\bf A first attempt} 
\begin{enumerate}[leftmargin=15pt]
    \item Draw a set $\bS$ of random examples from $D^\star$. 
    \item Search for a tree $T$ such that $A$ returns high-accuracy hypotheses $h_1,\ldots,h_s$ when run on each of the parts in the partition $\bS = \bS_1 \sqcup \cdots \sqcup \bS_s$ induced by~$T$.   \item Return the lifted hypothesis $H = T \circ (h_1,\ldots,h_s)$.
\end{enumerate} 
\end{center}
\end{tcolorbox}
\end{quote}

Note that the  search criterion for $T$ here is indeed laxer than that of~\cite{BLMT-lifting}'s:

\begin{itemize}
    \item [$\Rightarrow$] If $\dist_{\mathrm{TV}}(T^\star \circ \mathrm{unif},T \circ\mathrm{unif})$ is small then $T$ decomposes $D^\star$ into near-uniform pieces. Each part $\bS_1,\ldots,\bS_s$ will therefore be distributed near-uniformly and $A$ will return high-accuracy hypotheses when run on them.  
    \item [$\not\Leftarrow$] Just because $A$ returns high-accuracy hypotheses when run on $\bS_1,\ldots, \bS_s$ does {\sl not} imply that they are distributed near-uniformly.  The base learner $A$ is guaranteed to succeed when run on data drawn from near-uniform distributions, but it could  also succeed on data drawn from distributions that are far from uniform. 
 \end{itemize}

\paragraph{The issue: Having to run $A$ on distributions we have no guarantees for.} When we say that ``$A$ returns a high-accuracy hypothesis when run on $\bS_i$" above, we mean a hypothesis with high accuracy as measured according to $\bS_i$, i.e.~one that achieves high {\sl training} accuracy.  We would be done if we were then able to infer that this hypothesis also achieves high {\sl true} accuracy, i.e.~accuracy as measured according to $D^\star$. \cite{BLMT-lifting} are able to make this inference since they are running $A$ on  near-uniform distributions for which its performance guarantees hold: As long as the size of $\bS_i$ is at least the sample complexity of $A$, its training error will be close to its true error---its hypothesis will generalize. However, in our case we may be running $A$ on data drawn from distributions that are far from uniform, for which its guarantees no longer hold. Indeed, given our impossibility result we know that we {\sl will} be running $A$ on such data---otherwise, we would have learned the decomposition of $D^\star$, which we have shown is impossible.  When this happens, we do not have {\sl any} bound on the number of samples sufficient for generalization.

\subsection{Our actual lifter}
\label{subsec:DT-lifter-overview}

Our actual lifter remains almost as simple as the first attempt:


\begin{quote}
\begin{tcolorbox}[colback = white,arc=1mm, boxrule=0.25mm]
\begin{center}
\vspace{5pt}
{\bf Our actual lifter} 
\begin{enumerate}[leftmargin=15pt]
    \item Draw independent sets $\bS_{\mathrm{train}}$ and $\bS_{\mathrm{test}}$ of random examples from $D^\star$. 
    \item For each candidate tree $T$, let $h_1,\ldots,h_s$ denote the hypotheses returned by $A$   when run on the parts in the partition $\bS_{\mathrm{train}} = (\bS_{\mathrm{train}})_1 \sqcup \cdots \sqcup (\bS_{\mathrm{train}})_s$ induced by $T$.    \item Return the lifted hypothesis $H = T \circ (h_1,\ldots,h_s)$ with the highest accuracy on $\bS_{\mathrm{test}}$.
\end{enumerate} 
\end{center}
\end{tcolorbox}
\end{quote}

We implement this lifter in a way that avoids exhaustively searching through the roughly $n^{2^d}$ candidate depth-$d$ trees over $n$ variables, and instead runs in $n^{O(d)}$ time (see \Cref{sec:time-efficiency}).

\paragraph{Overview of our analysis.} Rather than showing that each of the hypotheses of $A$ generalizes (as~\cite{BLMT-lifting} does),  we directly show that our lifted hypothesis $H$ generalizes.  To describe our analysis, we need to be more explicit about how our lifter depends on the samples it draws: For a tree $T$ and a set $S$ of samples, we write $T_S$ to denote $T \circ (h_1,\ldots,h_s)$ where $h_1,\ldots,h_s$ are the hypotheses that $A$ returns when run on the parts in the partition of $S$ induced by $T$. 

There are three components to our analysis: 

\begin{enumerate}
    \item {\bf Our hypothesis minimizes test loss:} By definition, our lifter returns the hypothesis \[ \text{$T_{\bS_{\mathrm{train}}}$ that minimizes test loss, $\error(T_{\bS_{\mathrm{train}}},\bS_\test)$.}\]  
    \item {\bf True tree has low true loss:} The true tree $T^\star$ induces a partition of $\bS_{\mathrm{train}}$ into parts that are each near-uniformly distributed. If each of these parts is sufficiently large---at least $m$, the sample complexity of $A$---the hypotheses returned by $A$ will generalize. Therefore, as long as $\bS_{\mathrm{train}}$ is sufficiently large, \[  \text{$T^\star_{\bS_{\mathrm{train}}}$ will have low true loss, $\error(T^\star_{\bS_{\mathrm{train}}},D^\star)$.}\]
    \item {\bf Test loss $\approx$ true loss:} Finally, we show that if $\bS_{\mathrm{test}}$ is sufficiently large, the test loss will be close to the true loss: 
    \[ \error(\,\cdot\,,\bS_{\test}) \approx \error(\,\cdot\,,D^\star). \]
    Applying this twice, once to $T_{\bS_{\mathrm{train}}}$ and once to $T^\star_{\bS_{\mathrm{train}}}$, yields the desired conclusion that  our lifted hypothesis achieves low true loss. 
    
Since we would like to apply this to $T_{\bS_{\mathrm{train}}}$ where $T$ is {\sl not}  guaranteed to partition $\bS_{\mathrm{train}}$ into near-uniform pieces, we cannot rely on the sample complexity guarantee of $A$ to reason about how large $\bS_{\mathrm{test}}$ has to be. To get around this, we use the fact that each tree $T$ {\sl determines} $A$'s behavior on the partitions it induces. Our analysis in fact shows that $\bS_{\mathrm{test}}$'s size actually only needs to depend on the number of possible trees, not on $A$'s sample complexity. 
\end{enumerate}
We remark that this approach of using a training set to build a ``cover" guaranteed to have at least one good hypothesis, and a test set to select the best hypothesis is fairly standard; see e.g.~\cite{HKLM22} and the references therein. The key innovation in our setting is, for the particular cover we build, we are able to search for the best hypothesis in a computationally efficient manner.

\subsection{Extension to subcube partitions}
Our approach can handle distributions that decompose into subcubes but do not have the hierarchical 
structure of a decision tree. In this setting,
we assume that the support of $\mcD$ can be decomposed into $s$ disjoint
subcubes, each of codimension $\le d$, such that the distribution over each subcube is in the base distribution
family (see \Cref{def:subcube-decomp}). The high-level structure of our algorithm is similar to the decision tree lifter described in \Cref{subsec:DT-lifter-overview}.

\begin{quote}
\begin{tcolorbox}[colback = white,arc=1mm, boxrule=0.25mm]
\begin{center}
\vspace{5pt}
{\bf Our lifter for subcube partitions} 
\begin{enumerate}[leftmargin=15pt]
    \item Draw independent sets $\bS_{\mathrm{train}}$ and $\bS_{\mathrm{test}}$ of random examples from $D^\star$. 
	\item Run $\mcA$ on all possible subcubes of codimension $\le d$, returning a hypothesis for each.
    \item Run the approximation algorithm described in \Cref{lem:subcube-approx-intro} to find a hypothesis with low test error.
\end{enumerate} 
\end{center}
\end{tcolorbox}
\end{quote}
\paragraph{New technical ingredients for this setting.}
However, the decision tree setting provides a convenience we cannot use in the subcube partitions setting:
the existence of a $n^{O(d)}$-time recursive search process that minimizes test loss over all depth-$d$ candidate trees.
Without this nice bit of structure, it is not clear that one can do any better than enumerating all 
$\approx \binom{n^d}{s}$ possible subcube partitions.
In \Cref{lem:subcube-approx-intro}, we circumvent this issue by relaxing the problem in the following ways:
\begin{itemize}
		\item[$\circ$] \textbf{A more expressive hypothesis:} 
Though our assumption is that the distribution has a ground-truth subcube partition,
we return a global hypothesis where the individual subcube hypotheses are not indexed
by a subcube partition; rather, an ordered list of subcubes such that each $x \in \bits^n$ maps to the first subcube in the list consistent with it.
This ``list hypothesis'' need not have subcubes that are disjoint, and it also
may be of length up to $s \log(1/\eps)$ rather than $s$.
		\item[$\circ$] \textbf{Approximate test error minimization:} 
On each restriction, we run $\mcA$ on $S_\train$ to error $\frac{\eps}{\log(1/\eps)}$. This means that
the ground truth partition would have true loss $\frac{\eps}{\log(1/\eps)}$, however,
we return a hypothesis of true loss $\eps$.
\end{itemize}
With these relaxations, we are able to prove the following.
\begin{lemma}[Informal version of \Cref{lem:subcube-approx}]
    \label[lemma]{lem:subcube-approx-intro}
    There is an efficient algorithm that takes as input a test set $S_{\test}$ and hypotheses $h_{\rho}$ for every restriction $|\rho| \leq d$ with the following guarantee: If there is size-$s$ subcube partition $P \coloneqq (\rho_1, \ldots, \rho_s)$ for which $P \circ (h_{\rho_1}, \ldots, h_{\rho_s})$ has test error at most $\eps$, then the algorithm returns a size-$(k \coloneqq O(s\ln(1/\eps)))$ subcube list $L \coloneqq (\pi_1, \ldots, \pi_k)$ for which $L \circ (h_{\pi_1}, \ldots, h_{\pi_k})$ has test error at most $O(\eps \ln(1/\eps))$.
\end{lemma}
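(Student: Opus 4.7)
I would analyze a greedy ``accurate-coverage'' algorithm that maintains an uncovered subset $U_j \subseteq S_\test$ (with $U_0 = S_\test$). For any restriction $\pi$ with $|\pi| \le d$, let $B(\pi) \coloneqq \{x \in S_\test : x \text{ consistent with } \pi \text{ and } h_\pi(x) \ne y(x)\}$ denote the test points mislabeled by $h_\pi$, and let $\pi \cap U_j$ denote the uncovered test points consistent with $\pi$. At iteration $j$ I set the adaptive threshold $\alpha_j \coloneqq 2\eps|S_\test|/|U_j|$, restrict attention to restrictions satisfying $|B(\pi) \cap U_j| \le \alpha_j \cdot |\pi \cap U_j|$ (conditional error rate on $U_j$ at most $\alpha_j$), append to the list the $\pi_j$ maximizing the uncovered coverage $|\pi \cap U_j|$, and set $U_{j+1} \coloneqq U_j \setminus \pi_j$. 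The algorithm halts once $|U_j| \le 2\eps|S_\test|$. Since the hypotheses $h_\pi$ are given as input and there are only $n^{O(d)}$ candidate restrictions to scan per iteration, this runs in time polynomial in $n^d$, $s$, $|S_\test|$, and $1/\eps$.

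\textbf{Bounding the list size via a Markov witness.} The partition $P = (\rho_1, \ldots, \rho_s)$ enters only as a non-constructive certificate that a good greedy move exists. At step $j$, the conditional error rates of the $\rho_i$'s on $U_j$, weighted by $|\rho_i \cap U_j|/|U_j|$, have weighted average $\sum_i |B(\rho_i) \cap U_j|/|U_j| \le \eps|S_\test|/|U_j|$. By Markov, the $\rho_i$'s with rate at most $\alpha_j$ collectively cover at least $|U_j|/2$ points of $U_j$, so (among the at most $s$ such $\rho_i$'s) one has $|\rho_i \cap U_j| \ge |U_j|/(2s)$ and meets the filter; greedy's pick inherits this bound. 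The resulting geometric decay $|U_{j+1}| \le |U_j|(1 - 1/(2s))$ then yields $|U_k| \le 2\eps|S_\test|$ after at most $k = O(s\ln(1/\eps))$ iterations, matching the target list size.

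\textbf{Bounding the test error via telescoping.} The list's test error is at most $|U_k| + \sum_{j<k} |B(\pi_j) \cap U_j|$, since each $x \in S_\test$ is either uncovered or is classified by the first $\pi_j$ consistent with it. The filter guarantees $|B(\pi_j) \cap U_j| \le \alpha_j |\pi_j \cap U_j| = 2\eps|S_\test| \cdot (|U_j| - |U_{j+1}|)/|U_j|$, and the elementary inequality $1 - t/u \le \ln(u/t)$ (valid for $0 < t \le u$) lets the first $k-1$ of these telescope to $2\eps|S_\test|\ln(|U_0|/|U_{k-1}|) \le 2\eps|S_\test|\ln(1/(2\eps))$, using $|U_{k-1}| > 2\eps|S_\test|$ from the stopping rule. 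The last step contributes at most $\alpha_{k-1}|U_{k-1}| = 2\eps|S_\test|$, so $\sum_{j<k} |B(\pi_j) \cap U_j| = O(\eps\ln(1/\eps))|S_\test|$. Combined with $|U_k| \le 2\eps|S_\test|$, the total test error is $O(\eps\ln(1/\eps))$, as desired.

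\textbf{Main obstacle.} The crux is calibrating $\alpha_j$: too strict a threshold leaves the filter empty (so the Markov witness fails), while too loose a threshold inflates the accumulated error $\sum_j |B(\pi_j)\cap U_j|$. The adaptive choice $\alpha_j = 2\eps|S_\test|/|U_j|$ is precisely the sweet spot at which the weighted-Markov existence bound (giving list size $O(s\ln(1/\eps))$) balances against the telescoping error bound (giving error $O(\eps\ln(1/\eps))$), producing the matched $\ln(1/\eps)$ factors of the lemma. A secondary technical point is that the algorithm must know $\eps$ in order to set $\alpha_j$, but a standard geometric sweep over candidate values of $\eps$ (combined with a final test-error check) preserves all bounds up to constants.
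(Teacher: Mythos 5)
Your proposal is correct and follows the same high-level strategy as the paper: a set-cover-style greedy that, at each step, finds a restriction simultaneously covering a $\Omega(1/s)$-fraction of the remaining test points and having small conditional error on them, followed by a telescoping bound using $1 - t/u \leq \ln(u/t)$ to control the accumulated error. The difference is in the greedy selection rule: the paper's $\FindSubcube$ filters restrictions by \emph{coverage} (demanding $|(S_\remaining)_\rho|/|S_\remaining| \geq 1/(2s)$) and picks the one with \emph{minimum} error, whereas you filter by \emph{error rate} (threshold $\alpha_j$) and pick \emph{maximum} coverage. Both dual variants work, and the witness from the ground-truth partition plays the identical role in each. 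One small advantage of the paper's variant is that it never needs to know the value of $\eps$ achieved by the ground-truth partition---the coverage filter and min-error selection are parameter-free---whereas your threshold $\alpha_j = 2\eps|S_\test|/|U_j|$ does need $\eps$; you correctly flag this and propose a geometric sweep, which works but is an extra wrinkle the paper avoids. One minor imprecision in your Markov step: the claim that the low-error $\rho_i$'s ``collectively cover at least $|U_j|/2$'' implicitly assumes $P$ covers all of $U_j$; in general $P$ may fail to cover up to $\eps|S_\test|$ test points (they count as errors of $P \circ \mcH$), so the low-error $\rho_i$'s cover at least $|U_j|/2 - \eps|S_\test|$ of $U_j$, which is still positive while the loop runs but can be close to zero near the stopping threshold. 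This degrades the per-step coverage fraction from $1/(2s)$ to roughly $1/(4s)$ unless you raise the stopping threshold to, say, $4\eps|S_\test|$; either fix is a constant-factor adjustment and does not change the $O(s\ln(1/\eps))$ list size or the $O(\eps\ln(1/\eps))$ error bound.
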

The algorithm in \Cref{lem:subcube-approx-intro} uses a greedy approach inspired by the classical approximation algorithm for set cover. We show, at each step, there is a restriction that can be added to $L$ that simultaneously covers a large portion of the remaining test examples and has small test error.

\section{Discussion and Future Work}

The data that a PAC learner receives is jointly determined by the unknown target function~$f$ and the unknown distribution $D$. However, its performance guarantees are allowed to scale with the complexity of $f$---for example, we seek $\poly(n,s)$-time algorithms for learning size-$s$ boolean circuits over $n$ variables---but usually not with the complexity of $D$. On one hand, this seems reasonable since it has long been known that the {\sl sample} complexity of learning is determined by the complexity of~$f$ (i.e.~the VC dimension of the concept class that $f$ belongs to) and suffices for {\sl all} distributions. On the other hand, given the apparent difficulty of obtaining {\sl computationally}-efficient learners even for simple concept classes, perhaps the {\sl computational} complexity of learning depends jointly on the complexity of $f$ and the complexity of~$D$ for most learning tasks. Our work contributes to the understanding of this phenomenon by giving a general method for designing efficient learners that succeed for natural notions of ``sufficiently simple" distributions, but there is much more to be done. 

For example, the focus of our work is mostly algorithmic in  nature. An avenue for future work is to study these questions from the perspective of lower bounds.  As mentioned in the introduction, the long list of lower bounds for distribution-free PAC learning shows that canonical hard problems from complexity theory and cryptography can be embedded in underlying distribution $D$, thereby negating the assumed simplicity of the target function $f$. Can we formalize and quantify this? For each concept class for which there are distribution-free lower bounds, can we show that these lower bounds in fact hold for distributions $D$ that are ``complicated" in a certain formal sense? Such an understanding will in turn suggest ways in which these lower bounds can be circumvented, and pave the way to efficient algorithms for learning these concept classes under restricted but still broad classes of distributions.

\section{Preliminaries}
\label[section]{sec:prelim}
\paragraph{Notation.}
We use \textbf{boldfont} to denote random variables and calligraphic font to denote distributions (e.g. $\bx \sim \mcD$). For any set $S$, we use $\bx \sim S$ as shorthand for $\bx$ sampled uniformly from $S$. The \textsl{support} of a distribution $\mcD$ denotes all points with nonzero probability mass. We write $[n]$ as shorthand for the set $\sset{1,2,\ldots,n}$. For any function $f:\bits^n \to \bits$ and distribution $\mcD$ over $\bits$, we use $\mcD_f$ as shorthand for the distribution of $(\bx, f(\bx))$ where $\bx \sim \mcD$.

\begin{definition}[Total variation distance]
    \label[definition]{def:TV}
    For any distributions $\mcD_1$ and $\mcD_2$ over the same finite domain $X$, the \emph{total variation distance} between $\mcD_1$ and $\mcD_2$ is defined as,
    \begin{equation*}
        \TV(\mcD_1, \mcD_2) \coloneqq \frac{1}{2}\cdot \sum_{x \in X} \abs*{\mcD_1(x) - \mcD_2(x)}
    \end{equation*}
    where $\mcD(x)$ denotes the probability mass function of $\mcD$ evaluated at $x$. 
\end{definition}

\pparagraph{Learning theory basics.}
We operate in the distribution-specific model of PAC learning. First, we define the error of a hypothesis.
\begin{definition}[Error of a hypothesis]
    For any distribution $D$ over $\bits^n \times \bits$ and hypothesis $h:\bits^n \to \bits$, we define,
    \begin{equation*}
        \error(h, D) \coloneqq \Prx_{(\bx,\by)\sim D}[h(\bx) \neq \by].
    \end{equation*}
    In a slight abuse of notation, for any $S \in \bits^n \times \bits$, we similarly define.
    \begin{equation*}
        \error(h, S) \coloneqq \Prx_{(\bx,\by)\sim S}[h(\bx) \neq \by].
    \end{equation*}
\end{definition}

\begin{definition}[Distribution-specific PAC learning]
    For any $\eps, \delta > 0$, we say a learner $A$ $(\eps,\delta)$-learns a concept class $\mcC$ over distribution $D$ if the following holds: For any $f \in \mcC$, given iid samples from $D_f$, with probability at least $1 - \delta$, $A$ returns a hypothesis $h$ with $\error(h, D_f) \leq \eps$.
\end{definition}
To simplify analysis and notation, we will drop the dependence on the failure probability and focus only on the expected accuracy.
\begin{definition}[PAC learning with expected error]
    For any $\eps > 0$, we say a learner $A$ $\eps$-learns a concept class $\mcC$ over distribution $D$ if the following holds: For any $f \in \mcC$, given iid samples from $D_f$, the hypothesis $\bh$ that $A$ returns satisfies
    \begin{equation*}
        \Ex[\error(\bh, D_f)] \leq \eps,
    \end{equation*}
    where the randomness of $\bh$ is over the randomness of the sample and $A$.
\end{definition}
It is possible to generically convert expected error learners to those that succeed with very high probability.
\begin{fact}[Boosting success probability, folklore]
    \label[fact]{fact:boost}
    For any $\eps, \delta > 0$ and efficient learner $A$ using $m$ samples, there is an efficient learner $A'$ using
    \begin{equation*}
        M \coloneqq O\paren*{m\log(1/\delta) + \frac{\log(1/\delta)}{\eps^2}}
    \end{equation*}
    with the following guarantee: If $A$ $\eps$-learns a concept class $\mcC$ over distribution $\mcD$ (in expected error), then $A'$ $(2\eps, \delta)$-learns $\mcC$ over distribution $\mcD$.
\end{fact}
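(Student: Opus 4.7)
The plan is the standard two-stage ``generate and test'' amplification: use part of the sample budget to produce a short list of candidate hypotheses at least one of which has small true error, and use the remaining budget as an independent validation set to pick a candidate whose empirical error closely tracks its true error.

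First I would run $A$ independently $k = \Theta(\log(1/\delta))$ times on disjoint batches of $m$ fresh samples each, producing candidates $\bh_1,\ldots,\bh_k$. By Markov's inequality applied to the guarantee $\Ex[\error(\bh_i, \mcD_f)] \leq \eps$, a single run satisfies $\error(\bh_i, \mcD_f) \leq (3/2)\eps$ with probability at least $1/3$. Choosing the constant in $k$ appropriately, the probability that \emph{none} of the candidates achieves error $\leq (3/2)\eps$ is at most $(2/3)^k \leq \delta/2$. This stage costs $km = O(m\log(1/\delta))$ samples.

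Next I would draw an independent test set $\bS_{\test}$ of size $m_{\test} = \Theta(\log(k/\delta)/\eps^2) = \Theta(\log(1/\delta)/\eps^2)$ from $\mcD_f$ and output the candidate minimizing $\error(\cdot, \bS_{\test})$. A Hoeffding bound combined with a union bound over the $k$ candidates shows that, except with probability $\delta/2$, every $\bh_i$ satisfies $|\error(\bh_i, \bS_{\test}) - \error(\bh_i, \mcD_f)| \leq \eps/4$. On the intersection of the two good events (total failure probability $\leq \delta$), some $\bh_{i^\star}$ has empirical error at most $(3/2)\eps + \eps/4$, so the minimizer $A'$ returns has empirical error at most $(7/4)\eps$ and hence true error at most $(7/4)\eps + \eps/4 = 2\eps$. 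The total sample budget is $km + m_{\test} = O(m\log(1/\delta) + \log(1/\delta)/\eps^2)$, matching the claimed $M$.

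There is no real obstacle here; the argument is folklore. The only things to be careful about are choosing the Markov constant strictly below $2$ so that the two $\eps/4$ Hoeffding slacks still fit under the target error $2\eps$, and ensuring that $\bS_{\test}$ is drawn independently of the batches used to produce $\bh_1,\ldots,\bh_k$ so the union-bounded Hoeffding estimate is valid for each (fixed, conditional on its training batch) candidate.
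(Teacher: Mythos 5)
Your proposal is correct and is exactly the ``run $O(\log(1/\delta))$ independent copies, then pick the best on a held-out test set'' argument the paper sketches; you simply fill in the Markov and Hoeffding constants, and those constants work out. No issues.
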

The proof of \Cref{fact:boost} is simple: $A'$ runs $O(\log(1/\delta))$ independent copies of $A$. With very high probability, at least one will have low error. It then uses a test set to determine which hypothesis has the lowest error. 

With slightly more cumbersome notation, our results can be directly proven for $(\eps,\delta)$-learning. Given \Cref{fact:boost}, our focus on expected error learning is without loss of generality and simplifies notation.

\pparagraph{Restrictions.}
To formalize our lifters, we will use the concept of a \textsl{restriction}, which gives a convenient way of specifying a subcube of the domain.
\begin{definition}[Restrictions]
    \label[definition]{def:restriction}
    For the domain $\bits^n$, a \emph{restriction} is specified by a string $\rho \in \{\pm 1, \star\}^n$. An input $x \in \bits^n$ is \emph{consistent} with $\rho$, denoted $x \in \rho$, if
    \begin{equation*}
        x_i = \rho_i \text{ or }\rho_i = \star\quad\text{for all }i \in [n].
    \end{equation*}
    The \emph{depth} of $\rho$, denoted $\depth(\rho)$, is the number of $i \in [n]$ for which $\rho_i \neq \star$.
    
    For any $i \in [n]$ and $b \in \bits$, we use $x_i = b$ as shorthand for the depth-$1$ restriction $\rho$ where $\rho_i = b$ and for all $j \neq i$, $\rho_j = \star$. Similarly, for any restriction $\rho$ where $\rho_i = \star$, we use $\rho' \coloneqq \rho \cap (x_i = b)$ to denote the restriction satisfying $\rho'_i = b$ and for all $j \neq i$, $\rho'_j = \rho_j$.
\end{definition}
We extend the notation of restrictions to sets and distributions. For any $S \subseteq \bits^n$, we define,
\begin{equation*}
    S_{\rho} \coloneqq \{x\mid x \in S\text{ and } x \in \rho\}.
\end{equation*}
Similarly, for any distribution $\mcD$ over $\bits^n$, we use $\mcD_{\rho}$ to denote the distribution of $\bx \sim \mcD$ conditioned on $\bx \in \rho$.

In a slight abuse of notation, we will also extend this notation to \textsl{labeled} sets and distributions over $\bits^n \times \bits$. For any $S \subseteq \bits^n \times \bits$, we define,
\begin{equation*}
    S_{\rho} \coloneqq \sset{(x,y)\mid (x,y) \in S\text{ and } x \in \rho}.
\end{equation*}
Similarly, for a distribution $\mcD$ over $\bits^n \times \bits$, we use $\mcD_{\rho}$ to denote the distribution of $(\bx,\by) \sim \mcD$ conditioned on $\bx \in \rho$.

As an example of combining this notation, let $\mcD$ be a distribution on $\bits^n$ and $f:\bits^n \to \bits$ be some function. Then notation $(\mcD_f)_{x_i = 1}$ denotes the distribution of $(\bx, f(\bx))$ when $\bx \sim \mcD$ conditioned on $\bx_i = 1$. 

Our lifters will assume that the family of base distributions is closed under restrictions.
\begin{definition}[Closed under restrictions]
    A family of distributions, $\mcD$, is \emph{closed under restrictions} if for any $D \in \mcD$ and any restriction $\rho$, $D_{\rho} \in \mcD$.
\end{definition}
For example, the family of product distributions is closed under restrictions. Even when a family of distributions is not already closed under restrictions, this assumption is mild. Typically, it is only easier to learn over $D_{\rho}$ than $D$, so learners that succeed over some $D$ typically succeed over all restrictions of $D$.

\pparagraph{Decomposed distributions.}
The main focus of this work is to learn over a distribution $D^{\star}$ which can be decomposed into simpler distributions contained within some family of distributions, $\mcD$. We define two related ways of decomposition distributions, one based on \textsl{decision trees} and one based on \textsl{subcube covers}. We view a depth-$d$ decision tree as a collection of $2^d$ restrictions, with each restriction denoting the portion of the domain $\bits^n$ that reaches a single leaf. 
\begin{definition}[Decision trees]
    For any $0 \leq d \leq n$, a \emph{depth-$d$ decision tree $T$} over domain $\bits^n$ is a collection of $2^d$ leaf restrictions defined recursively:
    \begin{enumerate}
        \item[$\circ$] If $d = 0$, $T = \{\ell_{\mathrm{all}}\}$ just contains the single restriction $(\ell_{\mathrm{all}})_i = \star$ for all $i \in [n]$.
        \item[$\circ$] If $d > 1$, $T$ is defined by a root query $i \in [n]$ and two subtrees $T_{x_i = -1}$ and $T_{x_i = +1}$ with the property $\ell_i = \star$ for every $\ell \in T_{x_i = -1} \cup T_{x_i = 1}$. It contains the restrictions,
        \begin{equation*}
            T \coloneqq \{\ell \cap (\rho_i = -1) \mid \ell \in T_{x_i = -1}\} \cup \{\ell \cap (\rho_i = 1) \mid \ell \in T_{x_i = 1}\}.
        \end{equation*}
    \end{enumerate}
\end{definition}
\begin{definition}[Decomposition distributions via decision trees]
    \label[definition]{def:DT-decomp}
    For a family of distributions $\mcD$ over $\bits^n$, we say that a distribution $D^{\star}$ has a \emph{depth-$d$ decomposition into distributions in $\mcD$} if there is some depth-$d$ decision tree $T$ for which $(D^{\star})_{\ell} \in \mcD$ for every $\ell \in T$.
\end{definition}
We similarly decompose distributions into \emph{subcube partitions}. Note that we do not require our partition to cover the entire domain $\bits^n$. Eventually, when we apply these covers in \Cref{def:subcube-decomp}, we will require the partitions to cover the support of the distribution $D^{\star}$, which may not be all of $\bits^n$.
\begin{definition}[Subcube partitions]
    A \emph{size-$s$ depth-$d$} subcube partition is a collection of $s$ restrictions, $P \coloneqq (\rho^{(1)}, \ldots, \rho^{(s)})$, with the following properties.
    \begin{enumerate}
        \item[$\circ$] \textbf{Disjointness:} For every $i \neq j \in [s]$, there is no $x \in \bits^n$ for which $x \in \rho^{(i)}$ and $x \in \rho^{(j)}$.
        \item[$\circ$] \textbf{Codimension $d$:} For every $i \in [s]$, the number of $j \in [n]$ for which $(\rho^{(i)})_j \neq \star$ is at most $d$.
    \end{enumerate}
\end{definition}
\begin{definition}[Decomposing distributions via subcube partitions]
    \label[definition]{def:subcube-decomp}
    For a family of distributions $\mcD$ over $\bits^n$, we say that a distribution $D^{\star}$ has a \emph{depth-$d$ size-$s$ subcube decomposition into distributions in $\mcD$} if there is some size-$s$ depth-$d$ subcube partition $P$ satisfying,
    \begin{enumerate}
        \item[$\circ$] \textbf{Coverage:} For every $x$ in the support of $D^{\star}$, there is some $\rho \in P$ for which $x \in \rho$.
        \item[$\circ$] \textbf{Valid decomposition:} $(D^{\star})_{\rho} \in \mcD$ for every $\rho \in P$.
    \end{enumerate}
\end{definition}
It's straightforward to see that if $D^{\star}$ has a depth-$d$ decomposition into distributions in $\mcD$ (by decision trees), it has a depth-$d$ size-$2^d$ subcube decomposition into distributions in $\mcD$, but the converse is not true: There are depth-$O(\log n)$ size-$O(n)$ subcube decompositions that require a depth-$\tilde{\Omega}(n)$ decomposition by decision trees. Therefore, our lifter for subcube covers can handle a much more expressive class of distributions.\footnote{This separation is achieved by setting $D^{\star}$ to be the uniform distribution over accepting inputs of the $\textsc{Tribes}$ function and $\mcD$ to consist of the uniform distribution and all its restrictions. See \citep{ODBook} for the definition of and standard properties of $\textsc{Tribes}$ that imply the desired separation.}

\ifnum\coltformat=0
\section{Our lifter for decision tree decompositions and its analysis}
\fi
\ifnum\coltformat=1
\section{Lifting learners over decision tree decompositions}
\fi
\newcommand{\out}{\mathrm{out}}

\begin{theorem}[Our lifter for decision tree decompositions]
    \label{thm:lift-DT}
Let $\mcC$ be a concept class and $\mathcal{D}$ be a class of distributions that is closed under restrictions. Let $A$ be a base learner that uses $t$ time and $m$ samples to learn $\mcC$ to expected error $\leq \eps$ under any distribution $D \in \mcD$. Then, for
\begin{equation*}
    m' \coloneqq O\paren*{\frac{2^d m}{\eps} + \frac{2^d \log n}{\eps^2}} \quad\quad\text{and} \quad\quad t' \coloneqq \frac{n^{O(d)} m't \log t}{\eps^2}.
\end{equation*}
there is a $t'$-time $m'$-sample algorithm that learns $\mcC$ to expected error $\eps$ under any distribution $D^{\star}$ that has a depth-$d$ decomposition into distributions in $\mcD$.
\end{theorem}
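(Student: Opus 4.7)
My algorithm is exactly the one sketched in Section~3.3: draw independent labeled training and test sets $\bS_{\train}, \bS_{\test}$ from $D^\star_f$; for every restriction $\rho$ with $\depth(\rho) \leq d$, run the base learner $A$ on $(\bS_{\train})_\rho$ (if it has $\geq m$ examples) to obtain a hypothesis $h_\rho$, and evaluate each $h_\rho$ on $(\bS_{\test})_\rho$; finally select, among all depth-$d$ trees $T$, the composed hypothesis $T \circ (h_{\ell_1},\ldots,h_{\ell_{2^d}})$ of minimum empirical error on $\bS_{\test}$. The overall argument has three ingredients: (i) a ``true-tree'' guarantee that shows the optimum over trees has low true error; (ii) a uniform-convergence bound that transfers test-error minimization to true-error minimization; and (iii) an efficient dynamic-programming implementation of the outer minimization over trees.

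\textbf{Ingredient (i): true error of $T^\star_{\bS_{\train}}$.} Let $T^\star$ be the depth-$d$ decision tree witnessing the decomposition, with leaves $\ell_1,\ldots,\ell_{2^d}$ and leaf probabilities $p_i = \Pr_{D^\star}[\bx \in \ell_i]$. Call a leaf \emph{light} if $p_i \le \eps/2^{d+2}$ and \emph{heavy} otherwise; light leaves contribute total mass at most $\eps/4$ to any error. For each heavy leaf, the expected number of training samples landing in $\ell_i$ is $p_i \cdot |\bS_{\train}| \gtrsim m/\eps \cdot p_i \cdot 2^d \geq m$; a Chernoff bound (applied to $2^d$ leaves) shows that with our choice of $|\bS_{\train}| = \Theta(2^d m/\eps)$ every heavy leaf receives $\ge m$ samples except with a small failure probability. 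Conditioned on this, $(\bS_{\train})_{\ell_i}$ is an i.i.d.\ sample of size $\geq m$ from $(D^\star)_{\ell_i} \in \mcD$ (using closure under restrictions), so the base-learner guarantee yields $\E[\error(h_{\ell_i},(D^\star_f)_{\ell_i})] \leq \eps$. Summing over heavy leaves and absorbing the light mass gives $\E[\error(T^\star_{\bS_{\train}}, D^\star_f)] \leq \eps + \eps/4 \lesssim \eps$.

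\textbf{Ingredient (ii): uniform convergence.} Condition on $\bS_{\train}$ and on $A$'s internal randomness; then for each restriction $\rho$ the hypothesis $h_\rho$ is fixed, so there are at most $N \coloneqq n^{O(d)}$ distinct leaf hypotheses, and hence the set of candidate composed hypotheses $\{T \circ (h_{\ell_1},\ldots,h_{\ell_{2^d}})\}$ has size at most $N^{2^d} = n^{O(d \cdot 2^d)}$, with logarithm $O(d\, 2^d \log n)$. Since $\bS_{\test}$ is independent of everything we have conditioned on, Hoeffding plus a union bound gives that with $|\bS_{\test}| = \Theta(2^d \log n / \eps^2)$, simultaneously for every candidate $H$, $|\error(H,\bS_{\test}) - \error(H, D^\star_f)| \leq \eps/4$. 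Combining with (i): the returned hypothesis $\widehat{H}$ minimizes test error, so $\error(\widehat{H}, D^\star_f) \leq \error(\widehat{H}, \bS_{\test}) + \eps/4 \leq \error(T^\star_{\bS_{\train}}, \bS_{\test}) + \eps/4 \leq \error(T^\star_{\bS_{\train}}, D^\star_f) + \eps/2$, which in expectation is $O(\eps)$ (and we rescale $\eps$ by a constant to conclude).

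\textbf{Ingredient (iii): time efficiency.} The naive outer search ranges over $n^{\Theta(2^d)}$ trees, which is too expensive. The key observation is that the test-error objective additively decomposes over leaves, namely $\error(T \circ (h_{\ell_1},\ldots,h_{\ell_s}),\bS_{\test}) \cdot |\bS_{\test}| = \sum_{\ell \in T} \mathrm{mistakes}(h_\ell, (\bS_{\test})_\ell)$. Define $\mathrm{OPT}(\rho, d')$ as the minimum total mistakes on $(\bS_{\test})_\rho$ achievable by a depth-$\leq d'$ subtree rooted at $\rho$; then $\mathrm{OPT}(\rho,0) = \mathrm{mistakes}(h_\rho, (\bS_{\test})_\rho)$ and for $d' \geq 1$, $\mathrm{OPT}(\rho,d') = \min\{\mathrm{OPT}(\rho,0), \min_{i : \rho_i = \star} \mathrm{OPT}(\rho \cap (x_i{=}{-}1), d'{-}1) + \mathrm{OPT}(\rho \cap (x_i{=}{+}1), d'{-}1)\}$. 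There are only $n^{O(d)}$ pairs $(\rho, d')$, and each subproblem requires one run of $A$ on $(\bS_{\train})_\rho$ (cost $O(t)$, reusable across depths) and $O(n \cdot |\bS_{\test}|)$ work for the recursion. The total time is $n^{O(d)} \cdot (t + m' \log t) = n^{O(d)} m' t \log t / \eps^2$ after accounting for the factor from boosting base-learner success via \Cref{fact:boost} (so we can drive $A$'s error down to $O(\eps/2^d)$ per leaf cleanly at cost $\log t$).

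\textbf{Main obstacle.} The real subtlety is ingredient~(ii): because our search criterion does \emph{not} guarantee that the chosen tree actually decomposes $D^\star$ into near-base pieces, we cannot invoke the sample-complexity bound of $A$ to control generalization of an individual $h_\rho$ on an arbitrary $\rho$. We circumvent this by bounding the \emph{total} cardinality of the candidate hypothesis class (which depends on the number of trees and the finite menu of per-restriction hypotheses produced by $A$ on $\bS_{\train}$), rather than on $A$'s sample complexity. This is exactly the point that distinguishes our analysis from the two-stage approach of~\cite{BLMT-lifting}.
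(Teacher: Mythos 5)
Your approach closely mirrors the paper's: your three ingredients correspond to its \Cref{claim:T-with-lowloss} (existence of a low-error tree), \Cref{claim:emploss-closs-trueloss} (uniform convergence over the candidate class), and \Cref{lemma:dtlearn-efficient}/\Cref{claim:minimize-emp-loss} (an efficient recursive search that exactly minimizes empirical loss), and your ``main obstacle'' paragraph captures exactly the key insight of the paper. However, there are two technical slips that need to be fixed.

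In ingredient~(ii) you bound the size of the candidate hypothesis class by $N^{2^d}$, where $N = n^{O(d)}$ is the number of depth-$\leq d$ restrictions, which gives $\log|\mcG| = O(d \cdot 2^d \log n)$---but you then assert that $|\bS_{\test}| = \Theta(2^d \log n / \eps^2)$ suffices, which silently drops the factor of $d$. The fix is to count \emph{trees}, not tuples of leaf hypotheses: once $\mcH$ is fixed after training, each composed hypothesis $T \circ \mcH$ is determined by the tree $T$ alone, and there are at most $n^{2^d}$ depth-$d$ trees, so the correct bound is $\log|\mcG| = O(2^d \log n)$. This is the count the paper uses.

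In ingredient~(i) you condition on the event that every heavy leaf receives $\geq m$ training samples, bound the conditional expected error by roughly $5\eps/4$, and then conclude the unconditional expectation is $O(\eps)$---but you never account for the complementary event. A union bound over the at most $2^d$ heavy leaves gives failure probability $\leq 2^d e^{-\Omega(m)}$, which is only $O(\eps)$ if $m = \Omega(d + \log(1/\eps))$, an assumption the theorem does not make. The paper sidesteps this by arguing \emph{per leaf}: it bounds $p(\ell) \cdot \Pr\bracket*{|(\bS_{\train})_\ell| \leq m} \leq \eps/2^d$ directly (using that $p \mapsto p\,e^{-m_{\train} p/8}$ is decreasing once $p \geq 8/m_{\train}$) and then sums over the $2^d$ leaves, requiring no additional hypothesis on $m$. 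Finally, the parenthetical in your ingredient~(iii) about ``driving $A$'s error down to $O(\eps/2^d)$ per leaf'' via \Cref{fact:boost} is unnecessary and does not follow from that fact: per-leaf expected error $\eps$ already suffices, since the leaf errors are summed with the leaf probabilities $p(\ell)$, which total $1$.
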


The definition of a depth-$d$ decomposition, given informally in the introduction, is detailed in \Cref{def:DT-decomp}. The algorithm used in the proof of \Cref{thm:lift-DT} is the algorithm $\TreeLearn_{A}$, described in~\Cref{fig:dtlearn}. We further show the noise tolerance of this algorithm in \Cref{thm:lift-robustly}.

In that algorithm, we use notation to compose a decision tree with hypotheses:
\begin{definition}[Decision tree composed hypotheses]
    For any decision tree $T$ and mapping $\mcH$ from restrictions to hypotheses, we use $T \circ \mcH$ to denote the function that, on input $x$, finds the unique leaf $\rho \in T$ for which $x \in \rho$ and outputs $\mcH(\rho)(x)$.
\end{definition}
We analyze $\TreeLearn_{A}$'s time complexity in~\Cref{sec:time-efficiency}, and then prove its correctness and analyze its sample complexity~\Cref{sec:correctness}. We analyze its noise tolerance in~\Cref{sec:noise tolerance}.

\begin{figure}[h] 
  \captionsetup{width=.9\linewidth}
    
    \begin{tcolorbox}[colback = white,arc=1mm, boxrule=0.25mm]
    \vspace{2pt} 
    $\TreeLearn_{A}(S_{\train}, S_{\test}, d):$\vspace{6pt}
    
    \textbf{Input:} A training set $S_{\train}$, test set $S_{\test}$, and depth $d$. \vspace{6pt}

    \textbf{Output:} A hypothesis $H$.\vspace{6pt}

    \textbf{Training phase:} For every depth-$d$ restriction $\rho \in \sset{\pm 1, \star}^n$, set $h_{\rho} \leftarrow A((S_{\train})_{\rho})$. Let $\mcH$ denote the mapping from $\rho$ to $h_{\rho}$.\vspace{6pt}

    \textbf{Test phase:} Let $T \leftarrow \FindTree(S_{\test}, d, \mcH, \rho_{\mathrm{all}})$, where $(\rho_{\mathrm{all}})_i = \star$ for all $i \in [n]$, be the decision tree minimizing $\error(T \circ \mcH, S_{\test})$. Return $H \coloneqq T \circ \mcH$.\vspace{2pt}

    \end{tcolorbox}

     \begin{tcolorbox}[colback = white,arc=1mm, boxrule=0.25mm]
    \vspace{2pt} 
    $\FindTree_{A}(S_{\test}, d, \mcH, \rho):$\vspace{6pt}
    
    \textbf{Input:} A test set $S_{\test}$, depth $d$, mapping from restrictions to hypotheses, $\mcH$, and current restriction $\rho$.\vspace{6pt}

    \textbf{Output:} A depth-$d$ tree $T$ that minimizes $\error(T \circ \mcH, (S_{\test})_{\rho})$ .

    \begin{enumerate}
        \item If $d = 0$, return $T \coloneqq \sset{\rho_{\mathrm{all}}}$.
        \item Otherwise, for each $i \in \rho^{-1}(\star)$,
        \begin{itemize}
            \item[$\circ$] \textbf{Recursive step:} Let $T_{x_i =1} \leftarrow \FindTree_{A}(S_{\test}, d-1, \mcH, \rho \cap (x_i = 1))$ and  $T_{x_i =-1} \leftarrow \FindTree_{A}(S_{\test}, d-1, \mcH, \rho \cap (x_i = -1))$.
            \item[$\circ$] \textbf{Combine trees:} Let $T^{(i)}$ be the tree with $x_i$ at the root and $T_{x_i =-1}$ and $T_{x_i =1}$ as its left and subtrees respectively.
        \end{itemize}
        \item Return the $T^{(i)}$ the minimizes $\error(T^{(i)}, (S_{\test})_{\rho})$ among $i \in \rho^{-1}(\star)$.
    \end{enumerate}
    \end{tcolorbox}
\caption{The lifting algorithm with the guarantees in \Cref{thm:lift-DT}.}
\label{fig:dtlearn}
\end{figure}



\ifnum\coltformat=0
\subsection{Time-efficiency of $\TreeLearn_A$}\label{sec:time-efficiency}

\begin{lemma}[Time Efficiency]\label{lemma:dtlearn-efficient}
Let $A$ be a time-$t$ learner, $|S_{\train}| = m_{\train}$ and $|S_{\test}| = m_{\test}$. $\TreeLearn$ terminates in time,
\begin{equation*}
    t' \coloneqq O\paren*{n^{O(d)} \cdot (m_{\train} + m_{\test}) \cdot t \log t}.
\end{equation*}
\end{lemma}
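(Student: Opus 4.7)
The plan is to bound the running time of $\TreeLearn_A$'s two phases---training and test---separately, then sum them and fold in low-order data-structure overhead. Both bounds should follow from elementary enumerations once the right combinatorial counts are in place.

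For the \emph{training phase}, the number of depth-exactly-$d$ restrictions in $\sset{\pm 1,\star}^n$ is $\binom{n}{d}\cdot 2^d \le (2n)^d = n^{O(d)}$. For each such $\rho$, a single linear scan through $S_{\train}$ checking consistency with the at most $d$ non-star coordinates produces $(S_{\train})_{\rho}$ in time $O(m_{\train}\cdot d)$, and then the base learner runs in time $t$. Summed over all $\rho$, the training phase therefore costs $n^{O(d)}\cdot (m_{\train}\cdot d + t)$.

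For the \emph{test phase}, I would set up a recurrence on the depth parameter of $\FindTree_A$. Let $T(k)$ bound the time for a depth-$k$ call. The base case $T(0)$ is $O(1)$ (returning a singleton tree), and for $k \ge 1$ the call enumerates at most $n$ candidate root variables, makes two recursive depth-$(k-1)$ subcalls per variable, and then computes $\error(T^{(i)},(S_{\test})_\rho)$. A single-sample evaluation of the candidate tree routes the sample through a depth-$\le d$ tree and then invokes the leaf hypothesis, whose representation has size at most $t$ (since $A$ produced it in time $t$) and is therefore evaluable in $O(t)$ time. Each call thus incurs $O(n\cdot m_{\test}\cdot t)$ extra work beyond its recursive subcalls, and unrolling $T(k) \le 2n\cdot T(k-1) + O(n\cdot m_{\test}\cdot t)$ gives $T(d) = n^{O(d)}\cdot m_{\test}\cdot t$.

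Summing the two phases produces the dominant cost $n^{O(d)}\cdot (m_{\train}+m_{\test})\cdot t$. The $\log t$ factor in the stated bound is pure bookkeeping: the $n^{O(d)}$ precomputed hypotheses each have representation size $O(t)$, and indexing into the resulting hypothesis table via a standard dictionary costs $O(\log t)$ per access. The only delicacy I anticipate is ensuring the recurrence does not double-count work---this is cleanly resolved either by having each recursive call return its error alongside its tree, reducing the combine step to $O(m_{\test})$, or by directly absorbing the $O(m_{\test}\cdot t)$ per-subproblem evaluation cost as above; neither option alters the final $n^{O(d)}$ scaling.
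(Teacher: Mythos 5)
Your overall structure mirrors the paper's: bound the training phase by enumerating the $n^{O(d)}$ restrictions, bound the test phase by a recurrence on the depth of $\FindTree$, and multiply through by the per-sample hypothesis-evaluation cost. That much is correct and the recurrence you set up gives the right $n^{O(d)}$ count of recursive calls.

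Where you diverge from the paper, and where your explanation is actually wrong, is the origin of the $\log t$ factor. You claim the leaf hypothesis is ``evaluable in $O(t)$ time'' merely because its representation has size at most $t$, and then you attribute the extra $\log t$ to dictionary lookups into the hypothesis table. Neither claim holds up. First, a representation of size $t$ does not by itself imply evaluation in time $O(t)$; what one actually knows is that $A$ runs in time $t$, so the hypothesis it writes down can be realized as a circuit, and evaluating that circuit is where the $O(t\log t)$ comes from --- the paper cites the standard fact (see Arora--Barak) that the hypothesis is representable as a size-$t$ circuit evaluable in $O(t\log t)$ time. Second, the hypothesis table is indexed by the $n^{O(d)}$ restrictions, so a dictionary lookup would cost $O(d\log n)$, not $O(\log t)$; the $\log t$ you are trying to account for simply does not arise from table access. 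None of this breaks the stated bound, since $O(t) \le O(t\log t)$ and the $n^{O(d)}$ factor absorbs the lookup cost, but the justification you give for the $\log t$ term should be replaced by the circuit-evaluation argument.
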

\begin{proof}
    In the training phase, $\TreeLearn$ one call to $A$ for each of the $n^{O(d)}$ depth-$d$ restrictions $\rho$. For each such restriction, $\TreeLearn$ needs to filter the size $m_{\train}$ dataset to just those samples that are consistent with $\rho$ and make a call to $A$. The total time for this is easily upper bounded by $n^{O(d)} \cdot (m_{\train} t)$.

    In the test phase, we observe that $\FindTree$ makes a total of $1 + 2n + (2n)^2 +\cdots + (2n)^{d} = n^{O(d)}$ recursive calls. The most expensive computation each call makes is to determine the accuracy of $n$ hypotheses. Since $A$ runs in time-$t$, the largest hypothesis it can output is a size-$t$ circuit, which can be evaluated in time $O(t \log t)$ \cite{AB09}. Therefore, this step takes at most $n^{O(d)} \cdot m_{\test} \cdot O(t \log t)$ time.
\end{proof}

    
    

    

\subsection{Correctness of $\TreeLearn$}\label{sec:correctness}

\begin{lemma}[Correctness]\label{lemma:dtlearn-correct} 
Let $\mcC$ be a concept class, $\mathcal{D}$ be a class of distributions that is closed under restrictions, $D^{\star}$ a distribution that has a depth-$d$ decomposition into distributions in $\mcD$, and $A$ be a base learner using $m$ samples to learns $\mcC$ to expected error $\leq \eps$ under any distribution $D \in \mcD$. Then for any $f \in \mcC$ and,
 \begin{equation*}
     m_{\train} \coloneqq O\paren*{\frac{2^d m}{\eps}} \quad\quad\text{and}\quad\quad m_{\test} \coloneqq O\paren*{\frac{2^d \ln n}{\eps^2}},
 \end{equation*}
if $\bS_{\mathrm{train}}$ and $\bS_{\mathrm{test}}$ are samples consisting $m_{\train}$ and $m_{\test}$ respectively i.i.d.~examples from $D^*_f$, $\TreeLearn_A(\bS_{\train}, \bS_{\test}, d)$ returns a hypothesis $\bH$ satisfying $\Ex[\error(\bH, D^{\star}_f)] \leq 4\eps$.
\end{lemma}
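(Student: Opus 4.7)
The plan is to execute the three-step analysis sketched in \Cref{subsec:DT-lifter-overview}: (1) show that the ``reference'' hypothesis $T^\star \circ \mcH$, built by composing the true decomposition tree $T^\star$ with the trained per-restriction hypotheses $\mcH$, has low expected true error; (2) establish uniform convergence between test and true error across all candidate hypotheses that $\FindTree$ searches over; and (3) combine using the fact that $H$ minimizes test error. Since $D^\star$ has a depth-$d$ decomposition into $\mcD$, there is a decision tree $T^\star$ of depth $d$ with leaves $\ell_1,\ldots,\ell_{2^d}$ such that $(D^\star)_{\ell_i} \in \mcD$ for every $i$; in particular $T^\star$ lies in the search space examined by $\FindTree$. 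For step (1), fix $f \in \mcC$, let $p_i \coloneqq \Prx_{\bx \sim D^\star}[\bx \in \ell_i]$, and split leaves into \emph{light} ($p_i < \eps/2^d$) and \emph{heavy}. The light leaves contribute at most $\sum_{\text{light}} p_i \leq \eps$ to $\error(T^\star \circ \mcH, D^\star_f)$ regardless of how $\mcH$ is chosen. For each heavy leaf, the $N_i$ samples in $(\bS_{\train})_{\ell_i}$ are i.i.d.\ from $(D^\star_f)_{\ell_i}$ conditional on $N_i$, and $\Ex[N_i] = m_{\train} p_i \geq \Omega(m)$ by our choice $m_{\train} = \Omega(2^d m/\eps)$. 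A multiplicative Chernoff lower-tail bound then forces $N_i \geq m$ with probability at least $1-\eps$, and the base learner's expected-error guarantee (combined with a standard subsampling convention when $N_i > m$) yields $\Ex[\error(h_{\ell_i}, (D^\star_f)_{\ell_i})] \leq O(\eps)$. Weighting by $p_i$ and summing over heavy leaves gives $\Ex[\error(T^\star \circ \mcH, D^\star_f)] \leq O(\eps)$.

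For step (2), I would condition on $\bS_{\train}$ so that $\mcH$ is fixed, and observe that the collection of candidate hypotheses $\{T \circ \mcH : T \text{ is a depth-}d \text{ tree on } n \text{ variables}\}$ has cardinality at most $n^{2^d}$. Each such hypothesis is a fixed $\bits^n \to \bits$ function independent of $\bS_{\test}$, so by Hoeffding plus a union bound, our choice $m_{\test} = \Omega(2^d \ln n / \eps^2)$ guarantees $|\error(T \circ \mcH, \bS_{\test}) - \error(T \circ \mcH, D^\star_f)| \leq \eps$ for all such $T$ simultaneously, except with probability at most $\eps$. Combining in step (3): on this good event, since $H$ minimizes test error over the cover and $T^\star$ is in the cover, $\error(H, D^\star_f) \leq \error(H, \bS_{\test}) + \eps \leq \error(T^\star \circ \mcH, \bS_{\test}) + \eps \leq \error(T^\star \circ \mcH, D^\star_f) + 2\eps$. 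Taking expectation over both samples and applying step (1) then yields $\Ex[\error(H, D^\star_f)] \leq 4\eps$ after absorbing the $\eps$ bad-event contribution into the constants of the Chernoff and Hoeffding thresholds.

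The main obstacle I anticipate is the bookkeeping in step (1): the per-leaf counts $N_i$ are random binomials, the base learner's guarantee is stated for exactly $m$ samples, and the argument must marry these via a multiplicative Chernoff lower-tail bound (Markov is not enough) together with a carefully tuned $\eps/2^d$ heavy/light threshold so that the heavy-leaf Chernoff slack and the total light-leaf mass each contribute only $O(\eps)$, not $O(2^d\eps)$. Once step (1) is established, step (2) is a routine uniform-convergence argument that crucially exploits that the effective hypothesis class, once $\mcH$ is fixed by $\bS_{\train}$, has size only $n^{O(2^d)}$---far smaller than what one would get by treating the leaf hypotheses as unrestricted---so that $m_{\test}$ can be taken as small as $O(2^d \ln n / \eps^2)$. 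The one-line inequality chain of step (3) then closes out the argument.
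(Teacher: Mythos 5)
Your proposal matches the paper's proof almost step-for-step: it is the same three-part decomposition (true tree has low true loss, test loss $\approx$ true loss via a union bound over the $n^{2^d}$ candidate trees, and $\FindTree$ minimizes test loss), and the heavy/light leaf split with a multiplicative Chernoff lower tail is exactly how the paper proves its Claim~\ref{claim:T-with-lowloss}. Two small points where you are looser than the paper. First, your intermediate claim that ``Chernoff forces $N_i \geq m$ with probability at least $1-\eps$'' for each heavy leaf is not quite right as stated: the Chernoff bound gives $\Pr[N_i < m] \leq e^{-\Theta(m)}$, which is $\leq \eps$ only if $m \gtrsim \log(1/\eps)$, an assumption not made about $A$. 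The paper avoids this by bounding the \emph{product} $p_i \cdot \Pr[N_i < m]$ directly, observing that $p \mapsto p\, e^{-p m_\train/8}$ is decreasing for $p \geq 8/m_\train$, so this product is $\leq \eps/2^d$ uniformly over heavy leaves; your own ``weighting by $p_i$ and summing'' step is the correct endgame, so this is a fixable imprecision rather than a wrong approach. Second, you take ``$H$ minimizes test error over the cover'' as a given, but this is not immediate from the recursive structure of $\FindTree$ and the paper spends Claim~\ref{claim:minimize-emp-loss} proving it by induction on depth, using that the (raw) error of a tree decomposes additively over its two subtrees. You would need to supply this short inductive argument. Aside from those two gaps, the proposal is correct and tracks the paper's own proof; the only genuine stylistic divergence is that you use Hoeffding plus a union bound to get a high-probability uniform-convergence statement, whereas the paper bounds the expected supremum directly via sub-Gaussianity (Facts~\ref{fact:Ber-sub-Gaussian} and~\ref{fact:max-sub-gaussian}), which avoids having to separately account for a bad event and gives cleaner constants.
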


Our proof has three parts. First, we show that $\FindTree_A$ returns the hypothesis with the lowest error on $S_{\test}$ (\Cref{claim:minimize-emp-loss}). Next, we show, in expectation over the randomness of $\bS_{\train}$ and the learner $A$, there is a tree $T$ for which $\Ex[\error(T \circ \mcH, D^{\star}_f)] \leq \eps$. With these two results, the proof would be complete if $\error(T \circ \mcH, D^{\star}_f) =\error(T \circ \mcH, S_{\test})$. Indeed, using \Cref{claim:emploss-closs-trueloss}, we show that these two values cannot differ much (in expectation) for any depth-$d$ tree $T$.
\begin{claim}[$\FindTree$ minimizes empirical loss.]\label{claim:minimize-emp-loss}
    For any test set $S_{\test}$, depth $d$, mapping of restrictions from hypothesis $\mcH$, and restriction $\rho$, $\FindTree_A(S_{\test}, d, \mcH, \rho)$ returns a tree $T$ satisfying,
    \begin{equation}
        \label{eq:best-tree}
        \error(T \circ \mcH, (S_{\test})_{\rho}) = \min_{T'\text{ is a depth-$d$ decision tree}}\sset*{\error(T' \circ \mcH, (S_{\test})_{\rho})}
    \end{equation}
\end{claim}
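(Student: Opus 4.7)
The plan is to prove the claim by induction on $d$, mirroring the recursive structure of $\FindTree_A$. For concreteness, I would interpret the minimum in \Cref{eq:best-tree} as being taken over depth-$d$ decision trees whose internal queries all lie in $\rho^{-1}(\star)$; this is exactly the class of trees that $\FindTree_A$ explicitly searches over, and for the outer call from $\TreeLearn_A$ (with $\rho = \rho_{\mathrm{all}}$) it coincides with the minimization over \emph{all} depth-$d$ trees, which is what the downstream analysis needs.

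The base case $d = 0$ is immediate: the unique depth-$0$ tree is $\sset{\rho_{\mathrm{all}}}$, which is exactly what $\FindTree_A$ returns in this case, so the minimization is vacuous.

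For the inductive step, assume the claim holds at depth $d - 1$ under any restriction. Fix a candidate depth-$d$ tree $T'$ with root query $i \in \rho^{-1}(\star)$ and depth-$(d-1)$ subtrees $T'_{x_i = -1}$ and $T'_{x_i = +1}$. Since each sample $(x, y) \in (S_{\test})_\rho$ is routed to exactly one branch of $T'$ based on the value of $x_i$, the mistakes of $T' \circ \mcH$ on $(S_{\test})_\rho$ partition cleanly across the two subsamples $(S_{\test})_{\rho \cap (x_i = \pm 1)}$, giving the count-weighted identity
\begin{align*}
|(S_{\test})_\rho| \cdot \error(T' \circ \mcH, (S_{\test})_\rho) = \sum_{b \in \bits} |(S_{\test})_{\rho \cap (x_i = b)}| \cdot \error(T'_{x_i = b} \circ \mcH, (S_{\test})_{\rho \cap (x_i = b)}).
\end{align*}
The crucial feature of this decomposition is that the weights depend only on $i$ (not on the choice of subtrees), so for any fixed root $i$, the minimum-error depth-$d$ tree is obtained by independently minimizing each branch's contribution. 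By the inductive hypothesis, the two recursive calls $\FindTree_A(S_{\test}, d-1, \mcH, \rho \cap (x_i = \pm 1))$ return exactly these branch-optimal subtrees, and $\FindTree_A$ then takes the outer minimum over $i \in \rho^{-1}(\star)$, closing the induction.

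I do not anticipate any serious obstacle here; this is essentially a textbook correctness proof for a dynamic program over decision trees. The only moving part worth stating carefully is the form of the decomposition identity above: writing the errors in \emph{count-weighted} rather than \emph{frequency-weighted} form is what makes the weights manifestly depend only on $i$ and allows the two branches to be optimized independently. Once this identity is in hand, the induction is routine.
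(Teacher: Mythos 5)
Your proof is correct and follows essentially the same route as the paper's: both induct on $d$, both use the count-weighted ("raw error") decomposition of a tree's error into the errors of its two branches on the corresponding subsamples, and both observe that this decomposition lets the two branches be optimized independently so the inductive hypothesis closes the argument after taking the outer minimum over the root variable. Your added remark about reading the minimum as ranging over trees whose queries lie in $\rho^{-1}(\star)$ is a reasonable tightening of a small informality that the paper leaves implicit; it does not change the substance.
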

\begin{proof}
    By induction on the depth $d$. If $d = 0$, the desired claim holds because there is only one possible tree structure of depth $0$.

    For the inductive step, consider any $d > 1$. For any tree $T$ with root $x_i$ and subtrees $T_{-1}$ and $T_1$, we observe its error can be easily decomposed: Using $\mathrm{raw\_error}(h, S) \coloneqq |S| \cdot \error(h)$ to denote the number of points in $S$ that $h$ fails to classify,
    \begin{equation*}
        \mathrm{raw\_error}(T \circ \mcH, (S_{\test})_{\rho}) = \mathrm{raw\_error}(T_1 \circ \mcH, (S_{\test})_{\rho} \cap (x_{i} = 1)) + \mathrm{raw\_error}(T_{-1} \circ \mcH, (S_{\test})_{\rho} \cap (x_{i} = -1)).
    \end{equation*}
    Next, let $T'$ be a tree minimizing the error in \Cref{eq:best-tree} and $x_{i^{\star}}$ be its root. Using the notation in $\FindTree$, we claim that $\error(T^{(i^\star)}, (S_{\test})_{\rho}) \leq \error(T', (S_{\test})_{\rho})$, which implies the desired result since $\FindTree$ returns the tree with minimum error. This follows from the inductive hypothesis: Each of the subtrees ($T_{x_{i^\star} =-1}$ and $T_{x_{i^\star} =1}$) will have error at least as good as the corresponding subtrees of $T'$. Since the error of a tree decomposes in terms of its subtrees' error, this implies $T^{(i^\star)}$ has error at least as good as $T'$. 
\end{proof}

\begin{claim}[The existence of a tree with low error in expectation]\label{claim:T-with-lowloss}
    In the context of \Cref{lemma:dtlearn-correct}, for any,
    \begin{equation*}
        m_{\train} \geq \frac{2^d}{\eps} \cdot \max\paren*{2m, 8},
    \end{equation*}
    and any $f \in \mcC$, let $\bS_{\train}$ consist of $m_{\train}$ i.i.d. examples from $D^{\star}_f$. Then, let $\bmcH$ denote the set of hypotheses ($\mcH$ in \Cref{fig:dtlearn}) found by $\TreeLearn_A$, where the randomness of $\bmcH$ is the randomness of $\bS_{\train}$ and the base learner $A$.
    \begin{equation*}
        \Ex_{\bmcH}\bracket*{\min_{\text{depth-$d$ tree $T$}} \error(T \circ \bmcH, D^{\star}_f)} \leq 2\eps.
    \end{equation*}
\end{claim}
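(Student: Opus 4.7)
The natural candidate is to take $T = T^\star$, the depth-$d$ decision tree that witnesses the decomposition of $D^\star$ into distributions in $\mcD$. Since $T^\star$ is one specific depth-$d$ tree, bounding $\error(T^\star \circ \bmcH, D^\star_f)$ in expectation by $2\eps$ immediately bounds the minimum over all depth-$d$ trees. Writing $p_\rho \coloneqq \Pr_{\bx \sim D^\star}[\bx \in \rho]$ for each leaf $\rho \in T^\star$, the first step is the leaf decomposition
\[
\Ex_{\bmcH}\bigl[\error(T^\star \circ \bmcH, D^\star_f)\bigr] = \sum_{\rho \in T^\star} p_\rho \cdot \Ex_{\bmcH}\bigl[\error(\bmcH(\rho), (D^\star_f)_\rho)\bigr],
\]
which reduces the global bound to a per-leaf bound. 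Since $\mcD$ is closed under restrictions and $T^\star$ witnesses a valid decomposition, each $(D^\star)_\rho$ lies in $\mcD$; moreover, conditioned on the random count $\bn_\rho \coloneqq |(\bS_{\train})_\rho|$, the samples in $(\bS_\train)_\rho$ are i.i.d.\ from $(D^\star_f)_\rho$. Thus whenever $\bn_\rho \geq m$, the base learner's guarantee applies and gives conditional expected error at most $\eps$; otherwise I bound the error trivially by $1$. This yields
\[
\Ex\bigl[\error(\bmcH(\rho), (D^\star_f)_\rho)\bigr] \leq \eps + \Pr[\bn_\rho < m].
\]

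Summing and using $\sum_\rho p_\rho = 1$, the whole task reduces to showing $\sum_\rho p_\rho \Pr[\bn_\rho < m] \leq \eps$. The plan is a standard \emph{heavy/light} split with threshold $\tau \coloneqq \eps/2^d$. Since there are at most $2^d$ leaves, the \emph{light} leaves (those with $p_\rho < \tau$) contribute at most $\sum_{\rho \text{ light}} p_\rho \leq 2^d \tau = \eps$ to the sum. For a \emph{heavy} leaf ($p_\rho \geq \tau$), the hypothesis $m_\train \geq (2^d/\eps)\max(2m,8)$ ensures $\Ex[\bn_\rho] = m_\train p_\rho \geq 2m$, so a multiplicative Chernoff bound (lower tail with $\delta = 1/2$) gives $\Pr[\bn_\rho < m] \leq \exp(-m_\train p_\rho / 8)$. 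Therefore
\[
\sum_{\rho \text{ heavy}} p_\rho \Pr[\bn_\rho < m] \leq \sum_{\rho \text{ heavy}} p_\rho\, e^{-m_\train p_\rho/8} \leq 2^d \cdot \max_{x \geq 0} x e^{-m_\train x/8} = \frac{8 \cdot 2^d}{e\, m_\train},
\]
using the elementary fact $xe^{-cx} \leq 1/(ce)$. The assumption $m_\train \geq 8 \cdot 2^d/\eps$ renders this $O(\eps)$, absorbable into the constant.

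Combining the three contributions yields $\Ex[\error(T^\star \circ \bmcH, D^\star_f)] \leq \eps + \eps + O(\eps) = O(\eps)$; a careful choice of the threshold and Chernoff parameters (e.g.\ using $\tau = \eps/2^{d+1}$ so that light leaves contribute only $\eps/2$, and tuning constants against the assumed $m_\train \geq (2^d/\eps)\max(2m,8)$) tightens this to $2\eps$ as claimed. The main thing to be careful about is that the per-leaf error random variable $\error(\bmcH(\rho), (D^\star_f)_\rho)$ depends on the random count $\bn_\rho$, so conditioning on $\bn_\rho \geq m$ (rather than on a fixed sample size) is essential for invoking $A$'s guarantee; and the constants in the Chernoff/heavy-light trade-off must be balanced so that both error terms fit under the budget. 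Neither step should present a conceptual obstacle beyond this constant-tracking.
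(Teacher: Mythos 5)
Your proposal is correct and follows essentially the same route as the paper's proof: take $T = T^\star$, decompose the error leaf-by-leaf, apply the base learner's guarantee when a leaf receives at least $m$ training samples (which works since $(D^\star)_\rho \in \mcD$ and, conditioned on the count, those samples are i.i.d.\ from $(D^\star_f)_\rho$), and control the total mass of under-sampled leaves via a Chernoff bound together with a heavy/light split on $p_\rho$ at threshold $\approx \eps/2^d$. The one small inefficiency is that you bound the heavy-leaf contribution via $\max_{x\ge 0} x e^{-m_\train x/8}$, whereas the paper instead notes that $p \mapsto p e^{-m_\train p/8}$ is decreasing for $p \ge 8/m_\train$, so each heavy leaf already contributes at most $\eps/2^d$ at the threshold itself and the constants come out to exactly $2\eps$ without retuning; your remedy of lowering the threshold to $\eps/2^{d+1}$ is a perfectly valid alternative.
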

Our proof of \Cref{claim:T-with-lowloss} will use the following standard Chernoff bound.
\begin{fact}[Chernoff bound]
    \label{fact:chernoff-lower}
    Let $\bz$ be the sum of independent random variables each supported on $\zo$ and $\mu$ its mean. Then,
    \begin{equation*}
        \Pr[\bz \leq \mu/2] \leq e^{-\mu/8}.
    \end{equation*}
\end{fact}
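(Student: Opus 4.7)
The plan is to prove this standard lower-tail Chernoff bound via the exponential moment method (the Chernoff trick), specialized to the deviation parameter $\delta = 1/2$. Write $\bz = \sum_i \bz_i$ where the $\bz_i \in [0,1]$ are independent and $\mu = \sum_i \E[\bz_i]$. The main steps will be (i) an exponential Markov inequality, (ii) a product-of-MGFs bound using independence, (iii) a pointwise linear upper bound on $e^{-tx}$ over $[0,1]$, and (iv) a choice of the free parameter $t$.

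Specifically, for any $t > 0$, since $e^{-tx}$ is monotone decreasing the event $\{\bz \leq \mu/2\}$ coincides with $\{e^{-t\bz} \geq e^{-t\mu/2}\}$, so Markov's inequality gives
\begin{equation*}
\Pr[\bz \leq \mu/2] \;\leq\; e^{t\mu/2}\,\E\bigl[e^{-t\bz}\bigr] \;=\; e^{t\mu/2}\prod_i \E\bigl[e^{-t\bz_i}\bigr],
\end{equation*}
where the equality uses independence of the $\bz_i$. Next, since $e^{-tx}$ is convex and $\bz_i \in [0,1]$, the line through $(0,1)$ and $(1, e^{-t})$ lies above the curve, giving $e^{-t\bz_i} \leq 1 - (1-e^{-t})\bz_i$ pointwise. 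Taking expectations and applying $1 + u \leq e^u$ yields $\E[e^{-t\bz_i}] \leq \exp(-(1-e^{-t})\E[\bz_i])$, so the product bound becomes
\begin{equation*}
\Pr[\bz \leq \mu/2] \;\leq\; \exp\!\Bigl(\mu\bigl(\tfrac{t}{2} - (1 - e^{-t})\bigr)\Bigr).
\end{equation*}

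Finally I would choose $t$ to minimize the exponent $g(t) \coloneqq t/2 - 1 + e^{-t}$. Setting $g'(t) = 1/2 - e^{-t} = 0$ gives $t = \ln 2$, at which $g(\ln 2) = \tfrac{1}{2}(\ln 2 - 1) \approx -0.1534$. Since $0.1534 > 1/8$, this yields $\Pr[\bz \leq \mu/2] \leq \exp(\mu \cdot g(\ln 2)) \leq e^{-\mu/8}$, as desired. The only ``obstacle'' here is the final numerical check that $\tfrac{1}{2}(1-\ln 2) \geq \tfrac{1}{8}$, i.e.\ $\ln 2 \leq 3/4$, which holds since $\ln 2 < 0.694$; this is what forces the constant $1/8$ rather than a tighter one. (Alternatively, one can invoke the standard textbook form $\Pr[\bz \leq (1-\delta)\mu] \leq \exp(-\delta^2 \mu/2)$ and specialize to $\delta = 1/2$.)
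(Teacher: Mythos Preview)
Your proof is correct and follows the standard exponential-moment (Chernoff) argument. The paper does not actually prove this statement: it is presented as a ``Fact'' (a standard Chernoff bound) and cited without proof, so there is no paper proof to compare against. Your derivation is the textbook one and is entirely appropriate here.
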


\begin{proof}[Proof of \Cref{claim:T-with-lowloss}]
Let $T^{\star}$ be the depth-$d$ tree decomposing $D^{\star}$ in $\mcD$ guaranteed to exist by \Cref{def:DT-decomp}. We will show that
\begin{equation*}
    \Ex_{\bmcH}\bracket*{ \error(T^{\star} \circ \bmcH, D^{\star}_f)} \leq 2\eps,
\end{equation*}
which implies the desired bound. The key property of this $T^{\star}$ is: for every $\ell \in T^{\star}$, conditioned on there being at least $m$ points in $(\bS_{\train})_{\rho}$, the expected error of $\mcH(\ell)$ on $(D^{\star}_f)_{\ell}$ is at most $\eps$. This follows from $(D^{\star})_\ell$ being within the family of distributions $\mcD$ and the guarantees of the learner $A$ in \Cref{thm:lift-DT}.

Therefore, the main technical step in this proof is to argue that all leaves $\ell \in T$ for which $\abs*{(\bS_{\train})_{\ell}} \leq m$ collectively have small mass (under $D^{\star}$) in expectation (over the randomness of $\bS_{\train}$). To formalize this analysis, let $p(\ell)$ be shorthand for
\begin{equation*}
    p(\ell) \coloneqq \Prx_{\bx \sim D^{\star}}[\bx \in \ell].
\end{equation*}
We will show that for any leaf $\ell$,
\begin{equation}
    \label{eq:weight-inequality}
    p(\ell) \cdot \Pr\bracket*{\abs*{(\bS_{\train})_{\rho}} \leq m} \leq \frac{\eps}{2^d}. 
\end{equation}
If $p(\ell) \leq \frac{\eps}{2^d}$, clearly \Cref{eq:weight-inequality}. Otherwise by the Chernoff bound \Cref{fact:chernoff-lower} and the requirement that $m_{\train} \geq \frac{2m 2^d}{\eps}$,
\begin{equation*}
    \Pr\bracket*{\abs*{(\bS_{\train})_{\rho}} \leq m}  \leq e^{-m_{\train} \cdot p(\ell)/8}.
\end{equation*}
Here, we use that the function $p \mapsto pe^{-p m_{\train}/8}$ is decreasing whenever $p\geq 8/m_{\train}$. By the requirement that $m_{\train} \geq \frac{8 \cdot 2^d}{\eps}$, we therefore have the quantity $ p(\ell) \cdot \Pr\bracket*{\abs*{(\bS_{\train})_{\rho}} \leq m}$ is maximized when $p(\ell) =  \frac{\eps}{2^d}$, in which case \Cref{eq:weight-inequality} still holds.

Finally, we bound the expected error of $T^{\star} \circ \bmcH$:
\begin{align*}
    &\Ex_{\bmcH}\bracket*{ \error(T^{\star} \circ \bmcH, D^{\star}_f)} \\
    &\quad\quad= \sum_{\ell \in T^{\star}} \Ex_{\bmcH}\bracket*{ \error(\bmcH(\ell), (D^{\star}_f)_\ell)} \tag{Linearity of expectation}\\
    &\quad\quad\leq \sum_{\ell \in T^{\star}} p(\ell) \cdot \paren*{\Pr\bracket*{\abs*{(\bS_{\train})_{\rho}} \leq m} + \eps} \tag{$A$ has expected error $\eps$ if given $\geq m$ samples}\\
    &\quad\quad\leq \sum_{\ell \in T^{\star}} \frac{\eps}{2^d} + \eps \tag{\Cref{eq:weight-inequality}} \\
    &\quad\quad= 2\eps \tag{$T^{\star}$ has $2^d$ leaves}
\end{align*}
\end{proof}

Next we use a standard argument to connect empirical error to true error.
\begin{claim}[Empirical error is close to true error.]\label{claim:emploss-closs-trueloss}
    Let $\mcG$ be a set of functions and $\bS_{\test}$ consist of $m$ i.i.d. points from $D^{\star}_f$. Then,
    \begin{equation*}
        \Ex\bracket*{\sup_{g \in \mcG}\abs*{\error(g, D^{\star}_f) - \error(g, \bS_{\test}) }} \leq \sqrt{\frac{\ln (2|\mcG|)}{2m}}
    \end{equation*}
\end{claim}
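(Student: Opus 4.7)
The plan is to prove this as a standard uniform convergence statement via Hoeffding's lemma and the moment-generating-function (MGF) method, adapted to bound $\Ex[\sup]$ directly. The overall structure is: (i) reduce the absolute value to a maximum over $N \coloneqq 2|\mcG|$ signed deviations; (ii) show each such deviation is sub-Gaussian; (iii) bound the expected maximum of $N$ sub-Gaussian random variables by a Chernoff/MGF argument.

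For step (i), I would use $|x| = \max(x, -x)$ to write
\[
\sup_{g \in \mcG} \abs*{\error(g, D^{\star}_f) - \error(g, \bS_{\test})} = \max_{g \in \mcG,\ \sigma \in \{\pm 1\}} \sigma \cdot \paren*{\error(g, D^{\star}_f) - \error(g, \bS_{\test})},
\]
reducing the problem to controlling the expected maximum of $N = 2|\mcG|$ real-valued random variables. This is the only trick in the argument; $|\mcG|$ finite means the sup is a max, so no measurability issues arise.

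For step (ii), fix a pair $(g, \sigma)$ and set $\bm{Z}_i \coloneqq \mathbb{1}[g(\bx_i) \neq \by_i]$, where $(\bx_i, \by_i)$ is the $i$-th sample of $\bS_{\test}$. The $\bm{Z}_i$ are i.i.d., take values in $[0,1]$, and have mean $\error(g, D^{\star}_f)$; moreover $\error(g, \bS_{\test}) = \frac{1}{m}\sum_{i=1}^m \bm{Z}_i$. Hoeffding's lemma applied to each $\bm{Z}_i$, combined with independence, yields
\[
\Ex\bracket*{\exp\!\paren*{\lambda \sigma \paren*{\error(g, D^{\star}_f) - \error(g, \bS_{\test})}}} \leq \exp\!\paren*{\lambda^2/(8m)}
\]
for all $\lambda \in \mathbb{R}$, so each of the $N$ signed deviations is sub-Gaussian with variance proxy $1/(4m)$.

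For step (iii), let $\bm{X}_1, \ldots, \bm{X}_N$ denote the signed deviations. By Jensen's inequality followed by a union bound on MGFs, for any $\lambda > 0$,
\[
\exp\!\paren*{\lambda \Ex\bracket*{\max_{i \leq N} \bm{X}_i}} \leq \Ex\bracket*{\exp\!\paren*{\lambda \max_i \bm{X}_i}} \leq \sum_{i=1}^N \Ex\bracket*{\exp(\lambda \bm{X}_i)} \leq N \exp\!\paren*{\lambda^2/(8m)},
\]
so $\Ex[\max_i \bm{X}_i] \leq \ln(N)/\lambda + \lambda/(8m)$. Optimizing at $\lambda = \sqrt{8m \ln N}$ yields $\Ex[\max_i \bm{X}_i] \leq \sqrt{\ln(N)/(2m)} = \sqrt{\ln(2|\mcG|)/(2m)}$, matching the claim. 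This is textbook material and I expect no genuine technical obstacle: the only care needed is to track constants through Hoeffding's lemma (variance proxy $1/4$ per $[0,1]$ variable, hence $1/(4m)$ after averaging $m$ i.i.d.\ copies) and to absorb the absolute value into a doubling of the index set, which is precisely where the factor of $2$ inside the logarithm comes from.
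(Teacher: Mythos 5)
Your proposal is correct and follows essentially the same route as the paper: the paper cites two black-box facts (the mean of $m$ independent $\{0,1\}$ variables is sub-Gaussian with variance proxy $1/(4m)$, and the expected max of $n$ sub-Gaussian variables with variance proxy $\sigma^2$ is at most $\sqrt{2\sigma^2\ln(2n)}$), while you unpack the standard Hoeffding/MGF proofs of both. The constants and the doubling trick that produces the factor $2$ inside the logarithm line up exactly with the paper's cited fact.
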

The proof of \Cref{claim:emploss-closs-trueloss} uses two standard facts about sub-Gaussian random variables. We refer the reader to \cite{Ver18Book} for a more thorough treatment of sub-Gaussian random variables.
\begin{definition}[Sub-Gaussian random variables]
    \label{def:sub-gaussian}
    A random variable $\bz$ is said to be \emph{sub-Gaussian} with variance-proxy $\sigma^2$ if, for all $\lambda \in \R$,
    \begin{equation*}
        \Ex[e^{\lambda \bz}] \leq \exp\paren*{\tfrac{\lambda^2 \sigma^2}{2}}.
    \end{equation*}
\end{definition}
\begin{fact}[Mean of Bernoulli random variables is sub-Gaussian]
    \label{fact:Ber-sub-Gaussian}
    Let $\bz_1, \ldots, \bz_m$ each be independent random variables supported on $\zo$ and $\bZ \coloneqq \frac{\bz_1 + \cdots + \bz_m}{m}$ is sub-Gaussian with variance-proxy $\frac{1}{4m}$.
\end{fact}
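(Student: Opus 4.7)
The plan is to prove this via a single-variable Hoeffding lemma applied to each $\bz_i$ and then combined via independence. One small reading issue first: the definition of sub-Gaussian in \Cref{def:sub-gaussian} forces the variable to be centered (the bound $\Ex[e^{\lambda \bz}] \le e^{\lambda^2 \sigma^2/2}$ cannot hold for all $\lambda \in \R$ unless $\Ex[\bz]=0$), and this is exactly how the fact is invoked in \Cref{claim:emploss-closs-trueloss}, where the relevant object is $\error(g,\bS_{\test}) - \error(g,D^\star_f)$, an average of i.i.d.\ Bernoulli variables shifted by their mean. I will therefore read the statement as asserting that $\bZ - \Ex[\bZ]$ is sub-Gaussian with variance-proxy $\tfrac{1}{4m}$, which is the form used downstream.

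The first step is the single-variable Hoeffding lemma: any random variable $\bw$ supported on $[0,1]$ with mean $\mu$ satisfies $\Ex[e^{\lambda(\bw-\mu)}] \le e^{\lambda^2/8}$ for all $\lambda\in\R$. I would prove this by the standard convexity argument. Since $t\mapsto e^{\lambda t}$ is convex, for $\bw\in[0,1]$,
\[
e^{\lambda \bw} \;\le\; (1-\bw) + \bw\, e^{\lambda},
\]
so $\Ex[e^{\lambda(\bw-\mu)}] \le e^{-\lambda\mu}\bigl((1-\mu)+\mu e^{\lambda}\bigr)$. Let $\psi(\lambda) \coloneqq \log\!\bigl((1-\mu)+\mu e^{\lambda}\bigr) - \lambda\mu$. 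A direct calculation gives $\psi(0)=\psi'(0)=0$, and $\psi''(\lambda) = p(\lambda)(1-p(\lambda))$ with $p(\lambda) \coloneqq \mu e^{\lambda}/((1-\mu)+\mu e^{\lambda}) \in [0,1]$; hence $\psi''(\lambda)\le \tfrac{1}{4}$ everywhere. Taylor's theorem with remainder then yields $\psi(\lambda)\le \lambda^2/8$, finishing this step.

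The second step combines the $m$ bounds using independence. Writing $\bZ - \Ex[\bZ] = \tfrac{1}{m}\sum_{i=1}^{m}(\bz_i-\Ex[\bz_i])$ and using that the $\bz_i$ are independent,
\[
\Ex\!\bigl[e^{\lambda(\bZ-\Ex[\bZ])}\bigr] \;=\; \prod_{i=1}^{m}\Ex\!\bigl[e^{(\lambda/m)(\bz_i-\Ex[\bz_i])}\bigr] \;\le\; \prod_{i=1}^{m} \exp\!\bigl(\tfrac{(\lambda/m)^2}{8}\bigr) \;=\; \exp\!\bigl(\tfrac{\lambda^2}{8m}\bigr),
\]
which matches \Cref{def:sub-gaussian} with variance-proxy $\sigma^2 = \tfrac{1}{4m}$.

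The only nontrivial step is bounding $\psi''$ in Hoeffding's lemma; the rest is mechanical. Since $\psi''$ has the closed form $p(1-p)$ with $p\in[0,1]$, the bound $\tfrac{1}{4}$ is immediate from AM-GM, so I do not expect any real obstacle—this proof is short and entirely standard.
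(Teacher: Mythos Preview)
Your proof is correct and is the standard Hoeffding-lemma argument. The paper, however, does not prove this fact at all: it is stated as a standard result and the reader is referred to \cite{Ver18Book}. Your observation about centering is also apt---the paper's \Cref{def:sub-gaussian} indeed only makes sense for mean-zero variables, and the way the fact is used in \Cref{claim:emploss-closs-trueloss} is exactly for the centered quantity $\error(g,\bS_{\test})-\error(g,D^\star_f)$, so your reading of the statement as concerning $\bZ-\Ex[\bZ]$ is the right one.
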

\begin{fact}[Expected maximum of sub-Gaussian random variables]
    \label{fact:max-sub-gaussian}
    Let $\bz_1, \ldots, \bz_n$ be (not necessarily independent) sub-Gaussian random variables each with variance proxy $\sigma^2$. Then,
    \begin{equation*}
        \Ex\bracket*{\max_{i \in [n]} \abs*{\bz_i}} \leq \sqrt{2 \sigma^2 \ln(2n) }.
    \end{equation*}
\end{fact}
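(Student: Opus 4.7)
The plan is to prove this via the standard exponential-moment (Cram\'er--Chernoff) argument, which avoids both the union bound and any independence assumption. The first step is to reduce bounding $\max_i |\bz_i|$ to bounding the maximum of $2n$ sub-Gaussian variables \emph{without} absolute values. Since $|z| = \max(z, -z)$, we have $\max_{i \in [n]} |\bz_i| = \max_{j \in [2n]} \by_j$, where $\by_1, \ldots, \by_{2n}$ denotes the list $\bz_1, -\bz_1, \bz_2, -\bz_2, \ldots, \bz_n, -\bz_n$. If $\bz$ is sub-Gaussian with variance-proxy $\sigma^2$, then so is $-\bz$: apply \Cref{def:sub-gaussian} with $-\lambda$ in place of $\lambda$. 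Hence each $\by_j$ is sub-Gaussian with the same variance-proxy $\sigma^2$.

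Next, for any $\lambda > 0$, apply Jensen's inequality to the convex function $x \mapsto e^{\lambda x}$, then bound a maximum of nonnegative quantities by their sum, then use the sub-Gaussian moment bound from \Cref{def:sub-gaussian}:
\begin{equation*}
\exp\paren*{\lambda \Ex\bracket*{\max_j \by_j}} \leq \Ex\bracket*{\exp\paren*{\lambda \max_j \by_j}} = \Ex\bracket*{\max_j \exp(\lambda \by_j)} \leq \sum_{j=1}^{2n} \Ex[\exp(\lambda \by_j)] \leq 2n \exp\paren*{\tfrac{\lambda^2 \sigma^2}{2}}.
\end{equation*}
Taking logarithms of both sides and dividing by $\lambda > 0$ yields
\begin{equation*}
\Ex\bracket*{\max_j \by_j} \leq \frac{\ln(2n)}{\lambda} + \frac{\lambda \sigma^2}{2}.
\end{equation*}

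Finally, optimize the right-hand side over $\lambda > 0$. Differentiating in $\lambda$ and setting to zero gives the minimizer $\lambda^\star = \sqrt{2\ln(2n)/\sigma^2}$, and plugging this back in collapses the two terms to the common value $\sqrt{2\sigma^2 \ln(2n)}$. Combined with the identity $\max_i |\bz_i| = \max_j \by_j$ from the first step, this yields exactly the stated bound. There is no substantive obstacle here; the only point worth highlighting is the symmetry trick of doubling $n$ to $2n$ to absorb the absolute values, which is also where the factor of $2$ inside the logarithm originates. Notably, the argument never invokes independence of the $\bz_i$: it relies only on linearity of expectation (for the sum bound) and convexity (for Jensen), which is why the stated bound holds without any joint-distribution assumption on the $\bz_i$.
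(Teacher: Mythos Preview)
Your proof is correct and is precisely the standard Cram\'er--Chernoff argument for this bound. The paper itself does not prove \Cref{fact:max-sub-gaussian}; it is stated as a known fact with a reference to Vershynin's book, so there is no paper proof to compare against. Your derivation, including the doubling trick $\max_i |\bz_i| = \max_j \by_j$ to handle the absolute values and the optimization in $\lambda$, is exactly the textbook route and recovers the stated constant.
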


\begin{proof}[Proof of \Cref{claim:emploss-closs-trueloss}]
    By \Cref{fact:Ber-sub-Gaussian}, for any fixed $g\in \mcG$, the quantity $\error(g, D^{\star}_f) - \error(g, \bS_{\test})$ is sub-Gaussian with variance proxy $\frac{1}{4m}$. The desired bound follows from \Cref{fact:max-sub-gaussian}.
\end{proof}

\paragraph{Proof of~\Cref{lemma:dtlearn-correct}.}
We have now assembled all the ingredients needed to prove the main result of this subsection. 
\begin{proof}
    An easy counting argument gives that there are at most $n^{2^d}$ decision trees of depth $d$. Therefore, by \Cref{claim:emploss-closs-trueloss}, for any fixed $\mcH$ and $m_{\test} = O\paren*{\frac{2^d \ln n}{\eps^2}}$,
    \begin{equation*}
        \Ex_{\bS_{\test}}\bracket*{\sup_{\text{depth-$d$ tree T}}\abs*{\error(T \circ \mcH, D^{\star}_f) - \error(T \circ \mcH, \bS_{\test}) }} \leq \eps.
    \end{equation*}
    If we view $\bmcH$ as a random variable (with randomness based on the randomness of $\bS_{\train}$), then have that $\bmcH$ and $\bS_{\test}$. Therefore, conditioning on any fixed value of $\bmcH$, $\bS_{\test}$ still has the same distribution, and so we still have that,
    \begin{equation}
        \label{eq:tree-close}
        \Ex_{\bmcH, \bS_{\test}}\bracket*{\sup_{\text{depth-$d$ tree T}}\abs*{\error(T \circ \mcH, D^{\star}_f) - \error(T \circ \mcH, \bS_{\test}) }} \leq \eps.        
    \end{equation}
    By \Cref{claim:T-with-lowloss}, over the randomness of $\bmcH$, there is a tree $T^\star$ with expected error on $D^{\star}_f$ at most $2\eps$. By \Cref{eq:tree-close}, over the randomness of $\bmcH$ and $\bS_{\test}$, there is a tree with expected error on $\bS_{\test}$ at most $3\eps$. Since $\TreeLearn$ outputs the tree with minimal test error (\Cref{claim:minimize-emp-loss}), the expected error on $\bS_{\test}$ of the hypothesis that $\FindTree$ outputs is at most $3\eps$. One more application of \Cref{eq:tree-close} gives that the expected error on $D^{\star}_f$ of the hypothesis outputted is at most $4\eps$.
\end{proof}

\fi

\subsection{Noise tolerance}
\label{sec:noise tolerance}
Our lifter generically upgrades \textsl{robust} learners on the base distributions $\mcD$ to a \textsl{robust} learner on $D^{\star}$. We demonstrate this with the following notion of robustness, though similar techniques would apply to other notions.
\begin{definition}[Robust learner]
    \label{def:robust-learner}
    We say a learner $A$ $c$-robustly $\eps$-learns a concept class $\mcC$ over distribution $D$ if the following is true: For any $f \in \mcC$, given iid samples from some $D_f'$, $A$ returns a hypothesis $\bH$ satisfying
    \begin{equation*}
       \Ex[\error(\bH, f)] \leq c \cdot \TV(D_f, D_f') + \eps.
    \end{equation*}
\end{definition}
Put differently, if an adversary corrupts $\eta$-fraction of the labeled distribution, a robust learner should return a hypothesis with $O(\eta)$ error in expectation. Recent work \citep{BV24} showed that this notion of robustness is equivalent to nasty noise \citep{BEK02} and strong contamination \citep{DK23} in the sense that a learner satisfying \Cref{def:robust-learner} can be black-box transformed into a learner that is robust to nasty noise and strong contamination with only a polynomial increase in runtime and sample size.

\begin{theorem}[Lifting robust learners]
    \label{thm:lift-robustly}
    Let $\mcC$ be a concept class and $\mcD = \sset{D_1, D_2, \ldots}$ be a class of distributions that is closed under restrictions. If there is a learning algorithm $A$ running in time $t$ and using $m$ samples that $c$-robustly $\eps$-learns $\mcC$ over every $D \in \mcD$, for
    \begin{equation*}
        T \coloneqq \poly(n^{d}, 1/\eps, t) \quad\quad\text{and}\quad\quad M \coloneqq \poly(2^d, 1/\eps, m, \log n),
    \end{equation*}
    then there is an algorithm $A'$ running in time $T$ and using $M$ samples that $(2c+2)$-robustly $O(\eps)$-learns $\mcC$ over any distribution $D^{\star}$ that has a depth-$d$ decomposition into distribution in $\mcD$.
\end{theorem}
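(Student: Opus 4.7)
The plan is to run the exact same algorithm $\TreeLearn_A$ from \Cref{fig:dtlearn}, enlarging the training and test sample sizes by polynomial factors, and to extend the three-part analysis behind \Cref{thm:lift-DT}---(i) the algorithm minimizes test error, (ii) the true decomposing tree $T^\star$ yields low true error, and (iii) test error tracks true error---each with an additional additive slack proportional to $\TV(D^\star_f, D'_f)$, where $D'_f$ denotes the corrupted distribution from which samples are actually drawn.

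The central new ingredient is a ``TV-partition'' lemma: for any partition $P$ of the domain (in particular, the one induced by a decision tree) and any distributions $\mu, \nu$,
\[
    \sum_{\ell \in P} \mu(\ell) \cdot \TV(\mu_\ell, \nu_\ell) \;\leq\; 2\,\TV(\mu, \nu).
\]
This follows by the triangle inequality $|\mu(x) - (\mu(\ell)/\nu(\ell))\cdot \nu(x)| \leq |\mu(x) - \nu(x)| + \nu(x) \cdot |\nu(\ell) - \mu(\ell)|/\nu(\ell)$, summed over $x \in \ell$ and then over $\ell \in P$, together with the data-processing inequality for $\TV$ applied to the coarsening induced by $P$. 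With this lemma in hand, I would adapt \Cref{claim:T-with-lowloss} as follows. Taking $T^\star$ to be the tree decomposing $D^\star$ into $\mcD$-distributions, the base learner $A$ is trained on samples from $(D'_f)_\ell$ for each leaf $\ell$ of $T^\star$. Since $(D^\star_f)_\ell \in \mcD$, the $c$-robustness of $A$ gives
\[
    \Ex[\error(\bmcH(\ell), (D^\star_f)_\ell)] \leq c \cdot \TV((D^\star_f)_\ell, (D'_f)_\ell) + \eps
\]
whenever $A$ receives at least $m$ samples in that leaf. A Chernoff argument analogous to the one in the original proof (noting that the mass of $\ell$ under $D'$ is within $\TV(D^\star_f, D'_f)$ of its mass under $D^\star$) handles the sample-count event, and leaves of small $D^\star$-mass contribute negligibly. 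Summing over $\ell \in T^\star$ and applying the TV-partition lemma then yields $\Ex[\error(T^\star \circ \bmcH, D^\star_f)] \leq 2c \cdot \TV(D^\star_f, D'_f) + O(\eps)$.

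For the test phase, the uniform-convergence argument of \Cref{claim:emploss-closs-trueloss} (applied to the $n^{2^d}$ candidate depth-$d$ trees) shows that test error is close to error on $D'_f$, and the trivial bound $|\error(H, D'_f) - \error(H, D^\star_f)| \leq \TV(D^\star_f, D'_f)$ bridges back to the true distribution. Applying this twice---once to the tree selected by the algorithm (which minimizes test error by \Cref{claim:minimize-emp-loss}) and once to $T^\star \circ \bmcH$---introduces an additional $2\,\TV(D^\star_f, D'_f)$, giving the final bound $(2c+2)\cdot \TV(D^\star_f, D'_f) + O(\eps)$ as required. The main obstacle is bookkeeping with tight constants: obtaining the factor of $2$ in the TV-partition lemma is essential, as any looser bound would inflate the robustness constant past the claimed $2c+2$. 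A secondary subtlety is that $(D'_f)_\ell$ may be ill-defined on leaves of zero $D'$-mass where no samples appear, but such leaves can be absorbed into the small-mass error budget together with the Chernoff failure event.
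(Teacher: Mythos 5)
Your proof takes essentially the same route as the paper's. The paper reuses $\TreeLearn_A$ unchanged, keeps \Cref{claim:minimize-emp-loss} as is, modifies \Cref{claim:T-with-lowloss} via a "TV distributes over the leaves of a tree" lemma (its \Cref{fact:TV-distribute}, cited as Lemma~B.4 of \cite{BLMT-boosting}), and modifies \Cref{claim:emploss-closs-trueloss} by observing the test set concentrates around error on the \emph{noisy} distribution, then bridges to $D^\star_f$ via the trivial TV bound. Your three adaptations map one-to-one onto these, and your accounting $2c\eta$ (training) $+$ $2\eta$ (two applications of the bridge) $= (2c+2)\eta$ agrees with the paper's. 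The only substantive difference is that you supply a self-contained proof of the TV-partition lemma (triangle inequality pointwise, then data processing on the coarsening induced by the partition) where the paper simply cites it; your derivation is correct, and indeed the factor-of-$2$ you emphasize is exactly what is needed. One place where both your sketch and the paper's gloss a bit: the Chernoff step now bounds leaf sample counts using $D'(\ell)$ rather than $D^\star(\ell)$, and while $|D^\star(\ell)-D'(\ell)|\le\TV(D^\star_f,D'_f)$ per leaf, some additional care (e.g., absorbing leaves where $D'(\ell)$ is much smaller than $D^\star(\ell)$ into the error budget via $\sum_\ell|D^\star(\ell)-D'(\ell)|\le 2\eta$) is needed to make this airtight; you flag this in passing, which is fair for a sketch.
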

The key property we need about total variation distance we need is the following.
\begin{fact}[Total variation distance distributes over the leaves of a tree, Lemma B.4 of \cite{BLMT-boosting}]
    \label[fact]{fact:TV-distribute}
    For any distributions $\mcD, \mcD'$ with $\TV(\mcD, \mcD') \leq \eta$ and decision tree $T$,
    \begin{equation*}
        \sum_{\ell \in T}\Prx_{\bx \sim \mcD'}[\bx\text{ reaches }\ell\text{ in }T] \cdot \TV(\mcD_{\ell}, (\mcD')_{\ell}) \leq 2\eta.
    \end{equation*}
\end{fact}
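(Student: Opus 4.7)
Let me write $p_\ell \coloneqq \Pr_{\bx \sim \mcD}[\bx \in \ell]$ and $q_\ell \coloneqq \Pr_{\bx \sim \mcD'}[\bx \in \ell]$ so that the target inequality becomes $\sum_{\ell \in T} q_\ell \TV(\mcD_\ell, \mcD'_\ell) \leq 2\eta$. Since the leaves of $T$ partition $\bits^n$, every $x$ lies in exactly one leaf $\ell(x)$, and within that leaf we have the pointwise factorizations $\mcD(x) = p_{\ell(x)} \mcD_{\ell(x)}(x)$ and $\mcD'(x) = q_{\ell(x)} \mcD'_{\ell(x)}(x)$. The plan is therefore to split the definition $2\TV(\mcD, \mcD') = \sum_{x} |\mcD(x) - \mcD'(x)|$ leaf-by-leaf and lower-bound the contribution of each leaf by $2q_\ell \TV(\mcD_\ell, \mcD'_\ell)$ up to an error we can control.

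For a fixed leaf $\ell$ and $x \in \ell$, I would write
\begin{equation*}
    p_\ell \mcD_\ell(x) - q_\ell \mcD'_\ell(x) \;=\; q_\ell\bigl(\mcD_\ell(x) - \mcD'_\ell(x)\bigr) \;+\; (p_\ell - q_\ell)\, \mcD_\ell(x),
\end{equation*}
and apply the reverse triangle inequality to get $|\mcD(x) - \mcD'(x)| \geq q_\ell |\mcD_\ell(x) - \mcD'_\ell(x)| - |p_\ell - q_\ell|\, \mcD_\ell(x)$. Summing over $x \in \ell$ (using that $\sum_{x \in \ell} \mcD_\ell(x) = 1$) yields
\begin{equation*}
    \sum_{x \in \ell} |\mcD(x) - \mcD'(x)| \;\geq\; 2 q_\ell \TV(\mcD_\ell, \mcD'_\ell) \;-\; |p_\ell - q_\ell|.
\end{equation*}

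Summing this over all leaves $\ell \in T$ gives $2\TV(\mcD, \mcD') \geq 2\sum_{\ell} q_\ell \TV(\mcD_\ell, \mcD'_\ell) - \sum_\ell |p_\ell - q_\ell|$. The final step is to bound $\sum_\ell |p_\ell - q_\ell|$: this quantity is $2\TV(\mathbf{p}, \mathbf{q})$, where $\mathbf{p}, \mathbf{q}$ are the pushforwards of $\mcD, \mcD'$ under the leaf-assignment map $x \mapsto \ell(x)$. Since total variation distance cannot increase under any (deterministic) function, $\sum_\ell |p_\ell - q_\ell| \leq 2\TV(\mcD, \mcD') \leq 2\eta$. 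Rearranging the inequality above now gives $2 \sum_\ell q_\ell \TV(\mcD_\ell, \mcD'_\ell) \leq 2\TV(\mcD, \mcD') + \sum_\ell |p_\ell - q_\ell| \leq 2\eta + 2\eta$, so dividing by $2$ finishes the proof.

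I do not anticipate a serious obstacle here; the only subtlety is choosing the right decomposition of $p_\ell \mcD_\ell(x) - q_\ell \mcD'_\ell(x)$ so that the leading term carries the factor $q_\ell$ that appears on the right-hand side (as opposed to $p_\ell$, which would not match the statement's weighting by $\mcD'$). A degenerate case to mention briefly is $q_\ell = 0$: then the corresponding term in the sum vanishes regardless of how $\mcD'_\ell$ is defined, so the inequality is not affected by the convention one picks for conditioning on a measure-zero event.
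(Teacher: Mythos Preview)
Your proof is correct. The paper does not actually prove this statement; it is stated as a \emph{Fact} and attributed to Lemma~B.4 of \cite{BLMT-boosting}, so there is no in-paper argument to compare against. Your decomposition $p_\ell \mcD_\ell(x) - q_\ell \mcD'_\ell(x) = q_\ell(\mcD_\ell(x) - \mcD'_\ell(x)) + (p_\ell - q_\ell)\mcD_\ell(x)$ followed by the reverse triangle inequality and the data-processing bound $\sum_\ell |p_\ell - q_\ell| \leq 2\TV(\mcD,\mcD')$ is exactly the standard route to this kind of ``conditional TV distributes over a partition'' inequality, and the bookkeeping checks out (including your remark about the $q_\ell = 0$ edge case).
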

Since the proof of \Cref{thm:lift-robustly} is essentially the same as \Cref{thm:lift-DT}, we just sketch the structure of the argument and (minor) differences. We use the same algorithm, $\TreeLearn$ from \Cref{fig:dtlearn}. Recall there are three key steps to proving the accuracy of that algorithm:
\begin{enumerate}
    \item \Cref{claim:minimize-emp-loss}, that this algorithm finds the tree with the minimum error on the test set. This step remains unchanged in the robust setting.
    \item \Cref{claim:T-with-lowloss}, that there is a tree with low true error. This step is extremely similar, except we now need to apply \Cref{fact:TV-distribute}. If the adversary decided to corrupt $\eta$ fraction of the labeled distribution, by \Cref{fact:TV-distribute}, the average leaf will have at most $2\eta$ corruptions. Hence, using the same logic as in the proof of \Cref{claim:T-with-lowloss}, the expected error of the true tree will be at most $2c\eta + 2\eps$.
    \item \Cref{claim:emploss-closs-trueloss}, that empirical error is close to the true error. This step is also essentially the same, except that the expected difference between the emprical error and true error will be upper bounded by $(\eta + \eps)$ rather than $\eps$. This is because the test set will give an $\eps$-accurate estimate of the hypotheses error on the \textsl{noisy distribution} $(\mcD_c)'$ (as defined in \Cref{def:robust-learner}), and so these estimates may only be $(\eps + \TV(\mcD_f, (\mcD_f)')$ accurate.
\end{enumerate}
Combining these, $\textsc{Learn}_A$ is guaranteed to return a hypothesis with empirical error $2c\eta + \eta = (2c+1)\eta$, which translates to at most $(2c + 1)\eta + \eta = (2c+2)\eta$ error on the true distribution (without noise).
\section{Lifting learners over subcube partition distributions}

\ifnum\coltformat=0\subsection{Subcube lists and the lifting algorithm}
\fi
In this section, we present a learner that works when the distribution has a small \emph{subcube partition}
into instances of the base distribution family (see \Cref{def:subcube-decomp}).
We define a \emph{subcube list hypothesis}, since the hypothesis we return will
have a set of restrictions that are not necessarily disjoint.
\begin{definition}[Subcube list hypothesis]
For an ordered list of restrictions $L$ and a mapping $\mcH$ from restrictions to hypotheses,
the subcube list hypothesis $L \circ \mcH$ is defined as follows:
\[(L \circ \mcH)(x) = \begin{cases} \mcH(\rho)(x)\text{ for the first subcube $\rho \in L$ consistent with $x$}~&|~x\text{ is consistent with some }\rho \in L\\
		\bot~&|~\text{ otherwise.}\end{cases}\]
\end{definition}

The main result of this section is the following:
\begin{theorem}[Lifting learners using subcube lists]
\label{thm:subcube-partition}
Let $\mcC$ be a concept class and $\mcD$ be a class of distributions closed under restrictions.
Let $A$ be a base learner that learns $\mcC$ over distributions in $\mcD$ to expected error $\le \frac{\eps}{\log(1/\eps)}$, with sample complexity $m$ and time complexity $t$.
Then there is an algorithm that learns $\mcC$ over any distribution that admits
a size-$s$ subcube partition of codimension $d$ into distributions in $\mcD$, with expected error $O(\eps)$.
The algorithm has sample complexity 
\[m' \coloneqq O\paren*{\frac{ms \log(1/\eps)}{\eps} + \frac{sd\log (n) \log^3(1/\eps)}{\eps^2}}\] and time complexity 
\[t' \coloneqq n^{O(d)} \cdot \paren*{t \log t + \frac{\log^4(1/\eps)}{\eps^2}}.\]
\end{theorem}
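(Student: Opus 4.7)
My plan is to follow the three-step template used in the analysis of $\TreeLearn$ (\Cref{lemma:dtlearn-correct}), with two substantive modifications: (i) run the base learner $A$ at the tightened accuracy $\eps' \coloneqq \eps/\log(1/\eps)$, so that the $\log(1/\eps)$-factor loss in \Cref{lem:subcube-approx-intro} still leaves total error $O(\eps)$, and (ii) replace the exact ``best tree'' subroutine $\FindTree$ by the greedy approximation algorithm of \Cref{lem:subcube-approx-intro}. Throughout, let $\bh_\rho \coloneqq A((\bS_{\train})_\rho)$ for every codimension-$\le d$ restriction $\rho$, and let $\bmcH$ denote the full mapping $\rho \mapsto \bh_\rho$.

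First I would show existence of a low-true-error hypothesis. Let $P^\star = (\rho_1^\star,\ldots,\rho_s^\star)$ be the size-$s$, codimension-$d$ subcube partition of $\supp(D^\star)$ into $\mcD$-distributions from \Cref{def:subcube-decomp}. Since each $(D^\star)_{\rho_i^\star} \in \mcD$, conditioned on $|(\bS_{\train})_{\rho_i^\star}| \ge m$ the learner returns $\bh_{\rho_i^\star}$ with expected error $\le \eps'$ on $(D^\star_f)_{\rho_i^\star}$. Replacing the count $2^d$ by $s$ in the Chernoff-plus-mass-bookkeeping argument of \Cref{claim:T-with-lowloss}, $m_{\train} = \Theta(sm/\eps') = \Theta(ms\log(1/\eps)/\eps)$ suffices to guarantee that the $D^\star$-mass of ``under-sampled'' leaves of $P^\star$ is at most $O(\eps')$. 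Linearity of expectation then gives $\Ex[\error(P^\star \circ \bmcH, D^\star_f)] \le O(\eps')$, and iterated expectation promotes this into $\Ex[\error(P^\star \circ \bmcH, \bS_{\test})] \le O(\eps')$ as well.

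Next I would establish uniform convergence at the scale needed both to invoke \Cref{lem:subcube-approx-intro} and to interpret its output. The algorithm's output is a subcube list $\bL$ of length $k = O(s\log(1/\eps))$ whose entries are among the $n^{O(d)}$ codimension-$\le d$ restrictions, so the candidate class $\mcG$ has $|\mcG| \le n^{O(sd\log(1/\eps))}$. Conditioning on any fixed $\bmcH$, the sub-Gaussian/union-bound argument of \Cref{claim:emploss-closs-trueloss} yields $\Ex[\sup_L |\error(L\circ\bmcH, D^\star_f) - \error(L\circ\bmcH, \bS_{\test})|] \le O(\eps')$ for $m_{\test} = O(sd \log(n) \log^3(1/\eps)/\eps^2)$, where the extra polylog factors (relative to \Cref{thm:lift-DT}) come from needing convergence at the tighter scale $\eps'$ rather than $\eps$. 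Combining, \Cref{lem:subcube-approx-intro} invoked with parameter $O(\eps')$ returns $\bL$ with $\Ex[\error(\bL \circ \bmcH, \bS_{\test})] \le O(\eps' \log(1/\eps')) = O(\eps)$, and a second application of the uniform convergence bound converts this into true error $O(\eps)$ on $D^\star_f$. Sample complexity $m' = m_{\train} + m_{\test}$ matches the claim, and the runtime decomposes into $n^{O(d)}$ training-phase calls to $A$ (each costing $t$, with per-point evaluation cost $O(t\log t)$) plus the runtime of \Cref{lem:subcube-approx-intro}.

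The main obstacle is the tight coupling of scales: because \Cref{lem:subcube-approx-intro} inflates error by a factor of $\log(1/\eps)$, every earlier step must hit the tighter target $\eps' = \eps/\log(1/\eps)$. This forces $A$ to be run at the stricter accuracy, forces $m_{\train}$ to absorb enough under-sampled-leaf mass at scale $\eps'/s$, and forces uniform convergence to hold at scale $\eps'$ over a substantially larger candidate family (length-$O(s\log(1/\eps))$ subcube lists rather than depth-$d$ trees). A secondary subtlety is converting the deterministic ``if there is a partition with test error $\le \eps$'' hypothesis of \Cref{lem:subcube-approx-intro} into the expected-error guarantee required here, which I would handle either by amplifying Step 1 into a high-probability bound via \Cref{fact:boost}, or by carrying the expectation through the approximation analysis directly.
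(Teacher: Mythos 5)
Your proposal matches the paper's approach: run $A$ at accuracy $\eps/\log(1/\eps)$ on every codimension-$\le d$ restriction, adapt \Cref{claim:T-with-lowloss} with $s$ in place of $2^d$ to get a low-error ground-truth partition, prove uniform convergence at scale $\eps/\log(1/\eps)$ over the class of length-$O(s\log(1/\eps))$ subcube lists, and select via the greedy set-cover style algorithm of \Cref{lem:subcube-approx}. The expectation-versus-threshold subtlety you flag at the end is resolved in the paper exactly by your second suggestion — since the greedy lemma's output bound is linear in the ground-truth list's actual test error $\error(L^\star\circ\mcH, S_\test)$, one can take expectations through directly without any high-probability amplification.
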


We will analyze the sample complexity and correctness in \Cref{lem:partitionlearn-correct}, and the time complexity in \Cref{lem:time-partition}
\ifnum\coltformat=1
both in \Cref{appendix:subcube}
\fi
.
The algorithm is described as follows. 
\begin{figure}[h] 
  \captionsetup{width=.9\linewidth}
\begin{tcolorbox}[colback = white,arc=1mm, boxrule=0.25mm]
$\textsc{PartitionLearn}_A(d, s, \eps, S_{\train}, S_{\test})$:
\vspace{6pt} 

\textbf{Input:} codimension bound $d$, size bound $s$, error bound $\eps$, failure probability bound $\delta$, sample sets $S_{\train}, S_{\test}$ drawn from
the unknown distribution $D^\star$.
\vspace{6pt}

\textbf{Output:} hypothesis $h$.
\vspace{6pt}

\textbf{Training phase:} For every restriction $\rho \in \sset{\pm 1, \star}^n$ with depth at most $d$, set $h_{\rho} \leftarrow A((S_{\train})_\rho)$. Let $\mcH$ denote the mapping from $\rho$ to $h_{\rho}$.\vspace{6pt}

\textbf{Test phase:} Let $L \leftarrow$ \FindSubcube$(S_{\test}, \mcH, \eps, s)$, as defined in \Cref{fig:findSubcube}. Return $H \coloneqq L \circ \mcH$.
\end{tcolorbox}
\caption{An algorithm with the guarantees described in \Cref{thm:subcube-partition}}
\label{fig:partition-learn}
\end{figure} 

\ifnum\coltformat=0
\subsection{Correctness of PartitionLearn}
\begin{lemma}[Correctness]
\label{lem:partitionlearn-correct}
Let $\mcC$ be a concept class and $\mcD$ be a class of distributions closed under restrictions,
$D^\star$ a distribution that has a depth-$d$, size-$s$ subcube partition into $\mcD$.
Let $A$ be a base learner that learns $\mcC$ over distributions in $\mcD$ to expected error $\le \frac{\eps}{\log(1/\eps)}$, with sample complexity $m$.
Then for any $f \in \mcC$, with samples $\bS_\train$ and $\bS_\test$ drawn i.i.d. from $D^\star_f$ of sizes
\[m_\train \coloneqq O\paren*{\frac{s \log(1/\eps)}{\eps}  \cdot 
m} \quad \text{and} \quad m_\test \coloneqq O\paren*{\frac{sd \log(n) \log^3(1/\eps)}{\eps^2}},\]
{\sc PartitionLearn}$_A(d, s, \eps, \bS_\train, \bS_\test)$ returns a hypothesis $\bH$ satisfying $\E[\error(\bH, D^\star_f)] \le O(\eps)$.
    
\end{lemma}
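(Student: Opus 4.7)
The plan is to mirror the structure of the proof of Lemma~\ref{lemma:dtlearn-correct}, replacing \FindTree{} (which exactly minimizes test error) with \FindSubcube{} (which achieves only an $O(\log(1/\eps))$-approximation via Lemma~\ref{lem:subcube-approx}). The extra $\log(1/\eps)$ factors in the sample bounds are precisely the overhead we pay for this approximation: to ultimately get expected error $O(\eps)$, I target an intermediate error of $\eps/\log(1/\eps)$ everywhere, so that the greedy blow-up from Lemma~\ref{lem:subcube-approx} lands back at $O(\eps)$.

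First, I would prove a uniform-convergence statement analogous to Claim~\ref{claim:emploss-closs-trueloss}. For any fixed hypothesis map $\mcH$, the relevant hypothesis class consists of $L \circ \mcH$ as $L$ ranges over subcube lists of length at most $k \coloneqq O(s\log(1/\eps))$, since this is an upper bound on the size of the list \FindSubcube{} returns. Each entry of $L$ is a restriction of codimension at most $d$, giving $n^{O(d)}$ choices per entry and $N \coloneqq n^{O(dk)}$ lists in total. The same sub-Gaussian-maximum argument as in Claim~\ref{claim:emploss-closs-trueloss}, now aiming for precision $\eps/\log(1/\eps)$, requires $m_\test = \Omega(\log N / (\eps/\log(1/\eps))^2) = O(sd\log(n)\log^3(1/\eps)/\eps^2)$ test samples, exactly matching the stated bound. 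As in Lemma~\ref{lemma:dtlearn-correct}, the independence of $\bS_\train$ and $\bS_\test$ lets this conditional bound be applied even when $\bmcH$ is itself random, by first conditioning on $\bmcH$ and then taking iterated expectation.

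Second, I would establish an analog of Claim~\ref{claim:T-with-lowloss}: letting $P^\star$ be the size-$s$ subcube partition granted by the decomposition hypothesis on $D^\star$, I claim $\Ex_{\bmcH}\bracket*{\error(P^\star \circ \bmcH, D^\star_f)} \le O(\eps/\log(1/\eps))$. The argument is essentially word-for-word that of Claim~\ref{claim:T-with-lowloss}, with parameters rescaled to the present setting: leaves $\rho \in P^\star$ carrying $D^\star$-mass at least $\eps/(s\log(1/\eps))$ receive at least $m$ training samples with very high probability whenever $m_\train = \Omega(sm\log(1/\eps)/\eps)$ (by the Chernoff bound~\ref{fact:chernoff-lower}), so the base learner's guarantee on these leaves yields expected error $\eps/\log(1/\eps)$ on each. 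Leaves of smaller mass contribute at most $s \cdot \eps/(s\log(1/\eps)) = \eps/\log(1/\eps)$ in total, regardless of the hypotheses placed at them.

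Combining the pieces: the uniform-convergence bound from the first step transfers the $O(\eps/\log(1/\eps))$ true-error bound on $P^\star \circ \bmcH$ to an $O(\eps/\log(1/\eps))$ expected test-error bound. Invoking Lemma~\ref{lem:subcube-approx} with target $\eps/\log(1/\eps)$ then produces a list $L$ of size $O(s\log(1/\eps))$ with $\error(L \circ \bmcH, \bS_\test) \le O\paren*{(\eps/\log(1/\eps)) \cdot \log(\log(1/\eps)/\eps)} = O(\eps)$. A second application of uniform convergence, now to $L \circ \bmcH$, upgrades this to expected true error $O(\eps)$ of $\bH = L \circ \bmcH$ on $D^\star_f$, which is the desired conclusion. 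The step requiring the most care is the first: one must union-bound only over lists of length $k$ rather than all subcube-list hypotheses, so that $\log N = O(sd\log(n)\log(1/\eps))$ stays manageable, and one must handle the joint randomness of $\bmcH$, $\bS_\train$, and $\bS_\test$ using the same conditioning argument employed in the proof of Lemma~\ref{lemma:dtlearn-correct}.
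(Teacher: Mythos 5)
Your proposal is correct and follows essentially the same approach as the paper: a uniform-convergence bound over subcube lists (the analogue of Claim~\ref{claim:emploss-closs-trueloss}), the existence of a good ground-truth partition hypothesis (the analogue of Claim~\ref{claim:T-with-lowloss}, which the paper states as Corollary~\ref{cor:exists-good-partition}), an application of Lemma~\ref{lem:subcube-approx}, and a second invocation of uniform convergence to transfer test error back to true error. The only superficial difference is your choice of $\eps_{\mathrm{additional}} = \eps/\log(1/\eps)$ rather than the algorithm's $\eps_{\mathrm{additional}} = \eps$, but both parameterizations give the same $O(\eps)$ final bound.
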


First, we will argue that there exists a subcube partition $L$ such that in expectation over $\bS_{\train}$,
after the training phase
the error of 
$L \circ \mcH$ is sufficiently small.
This immediately follows from the proof of \Cref{claim:T-with-lowloss}, as \Cref{claim:T-with-lowloss} is in fact a special case of this statement.
\begin{corollary}[Existence of a subcube partition with low expected error]
\label{cor:exists-good-partition}
In the context of \Cref{lem:partitionlearn-correct}, for any
    \begin{equation*}
        m_{\train} \geq s \cdot \frac{\log(1/\eps)}{\eps} \cdot \max\paren*{2m, 8},
    \end{equation*}
    and any $f \in \mcC$, let $\bS_{\train}$ consist of $m_{\train}$ i.i.d. examples from $D^{\star}_f$. Then, let $\bmcH$ denote the set of hypotheses ($\mcH$ in \Cref{fig:partition-learn}) found by {\sc PartitionLearn}$_A$, where the randomness of $\bmcH$ is the randomness of $\bS_{\train}$ and the base learner $A$. Let $L^\star$ be the ground-truth subcube partition of $D^\star$ (ordered arbitrarily). We then have
    \begin{equation*}
        \Ex_{\bmcH}\bracket*{\error(L^\star \circ \bmcH, D^{\star}_f)} \leq \frac{2\eps}{\log(1/\eps)}.
    \end{equation*}
\end{corollary}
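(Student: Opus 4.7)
My plan is to directly mirror the proof of \Cref{claim:T-with-lowloss}, observing that the only structural features of the ground-truth tree $T^{\star}$ used there are (i) its leaves form a partition into pieces $(D^\star)_\ell \in \mcD$ on which the base learner has the stated accuracy guarantee, and (ii) there are $2^d$ leaves. Here the ground-truth partition $L^\star$ plays the role of $T^\star$, with $s$ parts (instead of $2^d$) and with a per-part target error of $\eps/\log(1/\eps)$ (instead of $\eps$). The desired bound $2\eps/\log(1/\eps)$ is then exactly what the argument of \Cref{claim:T-with-lowloss} yields under this substitution of parameters.

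Concretely, define $p(\rho) \coloneqq \Pr_{\bx \sim D^\star}[\bx \in \rho]$ for each $\rho \in L^\star$. First I would use linearity of expectation to write
\[
\E_{\bmcH}\bracket*{\error(L^\star \circ \bmcH, D^\star_f)} = \sum_{\rho \in L^\star} p(\rho) \cdot \E_{\bmcH}\bracket*{\error(\bmcH(\rho), (D^\star_f)_\rho)}.
\]
Next, by the definition of the subcube decomposition (\Cref{def:subcube-decomp}), each $(D^\star)_\rho \in \mcD$, so the base learner $A$ applied to $(\bS_\train)_\rho$ returns a hypothesis of expected error $\leq \eps/\log(1/\eps)$ on $(D^\star_f)_\rho$ \emph{conditioned on} $|(\bS_\train)_\rho| \geq m$. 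Bounding the error trivially by $1$ on the complementary event, each summand is at most $p(\rho)\cdot\big(\eps/\log(1/\eps) + \Pr[|(\bS_\train)_\rho| < m]\big)$.

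The main technical step, exactly as in \Cref{claim:T-with-lowloss}, is to show
\[
p(\rho)\cdot\Pr\bracket*{|(\bS_\train)_\rho| < m} \;\leq\; \frac{\eps/\log(1/\eps)}{s} \quad \text{for every } \rho \in L^\star.
\]
If $p(\rho) \leq (\eps/\log(1/\eps))/s$, this holds trivially. Otherwise, since $\E[|(\bS_\train)_\rho|] = m_\train \cdot p(\rho)$ and $m_\train \geq (s \log(1/\eps)/\eps)\cdot \max(2m,8)$, we have $m_\train\, p(\rho) \geq 2m$, so the Chernoff bound (\Cref{fact:chernoff-lower}) gives $\Pr[|(\bS_\train)_\rho| < m] \leq e^{-m_\train p(\rho)/8}$. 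Using that the map $p \mapsto p\, e^{-p\, m_\train/8}$ is decreasing once $p \geq 8/m_\train$ (which holds by the $\max(2m,8)$ factor), the product $p(\rho)\cdot e^{-m_\train p(\rho)/8}$ is maximized at $p(\rho) = (\eps/\log(1/\eps))/s$, and one can verify the inequality holds there. This is the same calculation as in \Cref{claim:T-with-lowloss} with the substitutions $2^d \leftrightarrow s$ and $\eps \leftrightarrow \eps/\log(1/\eps)$, and is the only place where the chosen sample size is actually used.

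Summing the per-part bound over the $s$ parts of $L^\star$ yields
\[
\E_{\bmcH}\bracket*{\error(L^\star \circ \bmcH, D^\star_f)} \;\leq\; \sum_{\rho \in L^\star}\paren*{\frac{\eps/\log(1/\eps)}{s} + p(\rho)\cdot \frac{\eps}{\log(1/\eps)}} \;\leq\; \frac{2\eps}{\log(1/\eps)},
\]
which is the claimed bound. I do not anticipate any genuine obstacle beyond the Chernoff tradeoff above, which is the standard ``small leaves contribute little mass'' argument already encapsulated in \Cref{claim:T-with-lowloss}; the only care needed is the parameter bookkeeping for the $s$ and $\log(1/\eps)$ factors, which matches the threshold in the hypothesis $m_\train \geq s(\log(1/\eps)/\eps)\max(2m,8)$.
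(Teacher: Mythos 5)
Your proposal is correct and takes essentially the same approach as the paper: the paper's proof of this corollary is exactly the observation that \Cref{claim:T-with-lowloss}'s argument never uses the treelike structure of the partition, so it goes through with $s$ in place of $2^d$ and $\eps/\log(1/\eps)$ in place of $\eps$. You have simply unpacked that substitution explicitly, and the details check out.
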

\begin{proof}
We may observe that the proof of \Cref{claim:T-with-lowloss} does not require the decomposition of $D^\star$ to have treelike structure;
rather, the assumption of $D^\star$ having a depth-$d$ tree decomposition
is used only to prove that $D^\star$ has a subcube partition of size $2^d$.
This corollary follows by substituting $s$ for $2^d$ and $\frac{\eps}{\log(1/\eps)}$ for $\eps$.
\end{proof}

\subsubsection{Correctness of greedy search}
Now we argue the analogue of \Cref{claim:minimize-emp-loss}, showing that {\sc FindSubcubeList} finds a list $L$ such that $\error(L \circ \mcH, S_\test) \le O(\eps)$. 
Here we no longer argue about sample complexity; instead we fix $S_\test$ and the mapping $\mcH$.
Since here we cannot exactly 
minimize test loss by backtracking search as in the decision tree setting, we approximately minimize it by an approach inspired by greedy approximate set-cover instead.


\begin{lemma}
    \label{lem:subcube-approx}
    The algorithm $\FindSubcube$ in \Cref{fig:findSubcube} has the following guarantee: If there is some disjoint list of $s$ restrictions with depth-$(\leq d)$ restrictions $P$ for which $P \circ \mcH$ has error at most $\eps$ on $S_{\test}$, then $\FindSubcube(S_{\test}, \mcH, \epsa ,s)$ outputs a (not necessarily disjoint) list $L$ satisfying $\error(L \circ \mcH, S_\test) \leq O(\eps \log(1/\epsa) + \epsa)$.
\end{lemma}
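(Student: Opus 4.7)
I would view $\FindSubcube$ as a greedy algorithm for a weighted partial set cover instance: the universe is $S_\test$; ``sets'' are depth-$\le d$ restrictions $\rho$ (with pre-computed hypothesis $\mcH(\rho)$); and, for the current uncovered set $U$, the cost/benefit of picking $\rho$ are $e(\rho, U) \coloneqq |\{(x,y) \in U : x \in \rho,\, \mcH(\rho)(x) \neq y\}|$ and $b(\rho, U) \coloneqq |U_\rho|$. Each iteration would append its chosen $\pi_t$ to $L$ and remove $U \cap \pi_t$ from $U$, halting once $|U| \le \epsa|S_\test|$. The partition $P$ hypothesized by the lemma would be used purely analytically---as a witness that a good cost/benefit trade-off exists at every step---rather than algorithmically.

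The key claim I would establish is that at every iteration $t$, some $\rho^{(i)} \in P$ has both large benefit $b(\rho^{(i)}, U_t) \ge |U_t|/(2s)$ and small error rate $e(\rho^{(i)}, U_t)/b(\rho^{(i)}, U_t) \le 2\eps|S_\test|/|U_t|$. This is averaging plus pigeonhole: since $P$ covers $S_\test$ we have $\sum_i b(\rho^{(i)}, U_t) \ge |U_t|$, and since errors only shrink under restriction to a subset, $\sum_i e(\rho^{(i)}, U_t) \le \eps|S_\test|$. Declaring $\rho^{(i)}$ ``bad'' when $e/b > 2\eps|S_\test|/|U_t|$ bounds the total benefit of bad parts by $|U_t|/2$, so good parts collectively have benefit $\ge |U_t|/2$; pigeonholing over $\le s$ good parts yields the desired index. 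Any natural greedy rule (e.g.\ maximizing $b(\rho, U_t) - K \cdot e(\rho, U_t)$ with $K = |U_t|/(4\eps|S_\test|)$) would select a restriction satisfying both thresholds.

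Given this, I would sum $e(\pi_t, U_t) \le 2\eps|S_\test| \cdot b(\pi_t, U_t)/|U_t|$ across iterations $t = 1, \ldots, T-1$ (all but the last) and use $b(\pi_t, U_t) = |U_t| - |U_{t+1}|$ together with the elementary $(a-b)/a \le \ln(a/b)$ to telescope:
\[ \sum_{t=1}^{T-1} e(\pi_t, U_t) \;\le\; 2\eps|S_\test| \sum_{t=1}^{T-1} \ln\!\tfrac{|U_t|}{|U_{t+1}|} \;=\; 2\eps|S_\test|\,\ln\!\tfrac{|S_\test|}{|U_T|} \;\le\; 2\eps\ln(1/\epsa)\,|S_\test|, \]
where the last inequality uses $|U_T| > \epsa|S_\test|$ from the loop invariant. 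The final iteration would contribute $e(\pi_T, U_T) \le b(\pi_T, U_T) \cdot 2\eps|S_\test|/|U_T| \le 2\eps|S_\test|$, and the residual $\le \epsa|S_\test|$ uncovered points at termination map to $\bot$ under $L \circ \mcH$ and count as errors. Since each $(x,y) \in U_t \cap \pi_t$ avoids every previously chosen restriction, $\pi_t$ is the first $\pi \in L$ consistent with $x$ and the charging is exact, yielding the target $\error(L \circ \mcH, S_\test) \le O(\eps \log(1/\epsa) + \epsa)$. The main subtlety I anticipate is isolating the final iteration in the telescoping sum---otherwise $\ln(|S_\test|/|U_{T+1}|)$ could blow up when $|U_{T+1}|$ is much smaller than $\epsa|S_\test|$---and a lesser matter is reconciling whatever precise greedy rule $\FindSubcube$ uses with the abstract rule above, which I expect to agree up to constants.
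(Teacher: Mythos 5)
Your proposal matches the paper's proof essentially step for step: the existence claim about a restriction in $P$ with simultaneously large benefit and small error rate is the paper's Claim~\ref{claim:greedy-exists} (proved by the same averaging-plus-pigeonhole argument), and the telescoping via $1-x \le \ln(1/x)$ with the last iteration peeled off is exactly the paper's calculation. The only cosmetic differences are that you work with raw counts rather than ratios and you implicitly assume $P$ covers $S_\test$ (the paper carries an extra $(1-p_{\mathrm{total}})$ term to handle uncovered mass, though in the intended application $P$ does cover the support), neither of which changes the argument.
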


\begin{figure}[h] 
  \captionsetup{width=.9\linewidth}
    
    \begin{tcolorbox}[colback = white,arc=1mm, boxrule=0.25mm]
    \vspace{2pt} 
    \FindSubcube$(S_{\test}, \mcH, \epsa, s):$\vspace{6pt}
    
    \textbf{Input:} A test set $S_{\test}$, mapping $\mcH$ from restrictions to hypotheses, and parameters $\epsa, s$. \vspace{6pt}

    \textbf{Output:} A hypothesis $H$.\vspace{6pt}

    \textbf{Initialize:} Set $S_{\remaining} \leftarrow S_{\test}$ and $L$ to an empty list. \vspace{6pt}

    \textbf{Greedily grow list:} While $\frac{|S_{\remaining}|}{|S|} > \epsa$ and at most $\ceil{2s\ln(1/\epsa)}$ iterations have run:
    \begin{enumerate}
            \item Among all depth-$(\leq d)$ restrictions $\rho$ satisfying that a large fraction of the remaining test set is consistent with $\rho$,
            \begin{equation*}
                \frac{\abs*{(S_{\remaining})_{\rho}}}{\abs*{S_{\remaining}}} \geq \frac{1}{2s},
            \end{equation*}
            set $\rho^\star$ to minimize $\error(\mcH(\rho), (S_{\remaining})_{\rho})$.
                
            
            \item Update parameters: Append $\rho^\star$ to $L$ and set $S_{\remaining} \leftarrow S_{\remaining} \setminus (S_{\remaining})_{\rho^\star}.$
    \end{enumerate}
            
  \textbf{Return} $L$.
    \end{tcolorbox}
\caption{An algorithm with the guarantees described in \Cref{lem:subcube-approx}}
\label{fig:findSubcube}
\end{figure} 

The first step of this proof is to show that in every step, there exists some restriction that simultaneously has low error over the remaining test sample and covers
a large enough fraction of it.

\begin{claim}[A greedy piece exists]
    \label{claim:greedy-exists}
    Let $S \in (\bits^n \times \bits)^m$ be a labeled test set and $P \coloneqq \{\rho_1, \ldots, \rho_s\}$ be a set of disjoint restrictions for which $\error(P \circ \mcH, S) \leq \eps$. Then, for any $S_{\remaining} \subseteq S$, there exists some $i \in [s]$ for which,
    \begin{enumerate}
        \item $\rho_i$ contains a large fraction of $S_{\remaining}$,
        \begin{equation}    
            \label{eq:large-piece}
            \Prx_{(\bx,\by) \sim S_{\remaining}}[\bx \in \rho_i] \geq \frac{1}{2s}.
        \end{equation}
        \item $\mcH(\rho_i)$ has little error,
        \begin{equation}
            \label{eq:good-piece}
            \error(\mcH(\rho_i), (S_{\remaining})_{\rho_i}) \leq 2\eps \cdot \frac{|S|}{|S_{\remaining}|}
        \end{equation}
    \end{enumerate}
\end{claim}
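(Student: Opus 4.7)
The plan is to proceed by contradiction using an averaging argument that carefully accounts for the uncovered mass of $S_{\remaining}$. For each $i \in [s]$ we let $n_i \coloneqq |(S_{\remaining})_{\rho_i}|$ and let $e_i$ be the number of examples in $(S_{\remaining})_{\rho_i}$ that $\mcH(\rho_i)$ misclassifies, and we set $u \coloneqq |S_{\remaining}| - \sum_i n_i$ to be the number of remaining examples lying in no $\rho_i$. Uncovered points are necessarily counted as errors of $P \circ \mcH$ (since $P \circ \mcH$ outputs $\bot$ on them), and any within-piece error on $S_{\remaining}$ is also a within-piece error on $S$, so the hypothesis $\error(P \circ \mcH, S) \le \eps$ yields the key budget inequality $\sum_i e_i + u \le \eps|S|$.

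Next we suppose towards contradiction that no $i$ simultaneously satisfies both \eqref{eq:large-piece} and \eqref{eq:good-piece}, and define $I_{\mathrm{large}} \coloneqq \{i : n_i \ge |S_{\remaining}|/(2s)\}$. By assumption every $i \in I_{\mathrm{large}}$ violates \eqref{eq:good-piece}, giving $e_i > (2\eps|S|/|S_{\remaining}|)\, n_i$. The pieces outside $I_{\mathrm{large}}$ collectively cover strictly fewer than $|S_{\remaining}|/2$ points, so combining with the coverage identity $\sum_i n_i = |S_{\remaining}| - u$ we obtain $\sum_{i \in I_{\mathrm{large}}} n_i > |S_{\remaining}|/2 - u$. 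Multiplying by $2\eps|S|/|S_{\remaining}|$ and comparing against the budget inequality then forces $|S_{\remaining}| < 2\eps|S|$, contradicting the regime in which $\FindSubcube$ invokes the claim (via its loop guard $|S_{\remaining}|/|S| > \epsa$, for a suitable choice of $\epsa \ge 2\eps$). In the complementary tiny regime where $2\eps|S|/|S_{\remaining}| > 1$, condition \eqref{eq:good-piece} is vacuous and \eqref{eq:large-piece} follows from pigeonhole over $\sum_i n_i \ge |S_{\remaining}| - u$.

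The main obstacle will be the bookkeeping of the uncovered mass $u$: it simultaneously consumes the error budget on the left of $\sum_i e_i + u \le \eps|S|$ and shrinks the coverage available to the ``large'' pieces on the right, so a naive averaging argument that ignores $u$ loses the constants that make the final bound tight. Once $u$ is tracked carefully, the contradiction is essentially immediate---every ``large'' piece that violates \eqref{eq:good-piece} contributes enough within-piece error that the sum over $I_{\mathrm{large}}$ already nearly saturates the global error budget, leaving no room for the $u$ term.
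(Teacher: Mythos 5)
Your argument is essentially the same as the paper's: both are averaging arguments that play the per-piece errors plus the uncovered mass of $S_{\remaining}$ against the global error budget $\eps|S|$, and then pigeonhole over the $s$ pieces. The paper frames it directly (define $G$ to be the pieces satisfying \eqref{eq:good-piece}, show $\sum_{i\in G}p_i \geq 1/2$ via the budget, then pigeonhole to find a large $p_i$ in $G$); you frame it as a contradiction via $I_{\mathrm{large}}$ and raw counts $n_i, e_i, u$, but the arithmetic is identical, so this counts as the same route.

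One caveat worth noting for both proofs: the averaging step only goes through when $2\eps|S|/|S_{\remaining}| \leq 1$, since otherwise a point contributing error $1$ (an uncovered point) need not exceed the threshold $2\eps|S|/|S_{\remaining}|$. You flag this explicitly (good), and appeal to the loop guard of $\FindSubcube$. However, your fallback for the regime $|S_{\remaining}| < 2\eps|S|$ is not correct: there $u$ can be as large as $|S_{\remaining}|$ itself (e.g., $S_{\remaining}$ could consist entirely of uncovered points while still respecting $\error(P\circ\mcH,S)\leq\eps$), so pigeonhole over $\sum_i n_i = |S_{\remaining}|-u$ gives nothing, and in fact the claim as literally stated can fail in that regime. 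The paper's proof glosses over this same issue---the parenthetical ``$1/2$ of $S_{\remaining}$ would have loss greater than $2\eps\cdot|S|/|S_{\remaining}|$'' tacitly needs $2\eps|S|/|S_{\remaining}|\leq 1$---so the claim should really be read with the implicit hypothesis $|S_{\remaining}| \geq 2\eps|S|$, which is what the application provides.
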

\begin{proof}
    Let $p_i \coloneqq \frac{|(S_{\remaining})_{\rho}|}{|(S_{\remaining})|}$ and $p_{\mathrm{total}} = \sum_{i = 1}^m p_i$. By the definition of error, we have that
    \begin{align*}
         (1 - p_{\mathrm{total}}) + \sum_{i \in [m]} p_i\cdot \error(\mcH(\rho_i), (S_{\remaining})_{\rho}) &= \error(P \circ \mcH, S_{\remaining}) \\
         & \leq \frac{|S|}{|S_{\remaining}|} \cdot \error(P \circ \mcH, S)\\
         & \leq \frac{|S|}{|S_{\remaining}|} \cdot \eps.
    \end{align*}
    Let $G \subseteq [s]$ consist of all $i \in [s]$ satisfying \Cref{eq:good-piece}. Then, we must have that $\sum_{i \in G} p_i \geq 1/2$. Otherwise, the above equation would be violated (since $1/2$ of $S_{\remaining}$ would have loss greater than $2\eps \cdot \frac{|S|}{|S_{\remaining}|}$). Finally, we observe that since $|G| \leq s$, there is some $i \in G$ for which $p_i \geq \frac{1}{2s}$. This $i$ meets both of the desired criteria.
\end{proof}

\begin{proof}[Proof of \Cref{lem:subcube-approx}]
    By \Cref{claim:greedy-exists}, at each step, $\FindSubcube$ finds a restriction $\rho$ satisfying,
    \begin{equation*}
         \frac{\abs*{(S_{\remaining})_{\rho}}}{\abs*{S_{\remaining}}} \geq \frac{1}{2s} \quad\quad\text{and}\quad\quad \error(\mcH(\rho), (S_{\remaining})_{\rho}) \leq 2\eps \cdot \frac{|S_{\test}|}{|S_{\remaining}|}.
    \end{equation*}
     First, we show that when the loop in $\FindSubcube$ terminates, $\frac{|S_{\remaining}|}{|S|} \leq \epsa$. Observe that at each step, the size of $S_{\remaining}$ shrinks by a multiplicative factor of $(1 - 1/(2s))$ or better. Therefore, after $ 2s \ln (1/\epsa)$ iterations, $\frac{|S_{\remaining}|}{|S|} \leq \epsa$.

    Next, we bound the test loss of the hypothesis, $L \circ \mcH$, for the list that $\FindSubcube$ outputs. For this, it will be helpful to introduce some notation:
    \begin{enumerate}
        \item Let $r_i$ be ratio $\frac{|S_{\remaining}|}{|S|}$ at the start of the $i^{\text{th}}$ iteration, so that $r_1 = 1$, $r_{k} > \epsa$, and $r_{k+1} \leq \epsa$. 
        \item Let $\ell_i$ be the error of the hypothesis selected in the $i^{\text{th}}$ iteration. We have the guarantee that $\ell_i \leq \frac{2\eps}{r_{i}}$. 
    \end{enumerate}
    Then, using that $(r_i - r_{i+1})$ fraction of $S_{\test}$ is classified in the $i^{\text{th}}$ step, we can decompose the error of $H$ as\footnote{In the below, we are using that $H$ outputs $\bot$ on the remaining $r_{k+1}$ fraction of points not covered by one of the selected restrictions.}
    \begin{equation*}
        \error(H, S_\test) = r_{k+1} + \sum_{i = 1}^{k}(r_{i} - r_{i+1}) \cdot \ell_i.
    \end{equation*}
    Using the bounds $\ell_i \leq \frac{2\eps}{r_i}$ and $r_{k+1} \leq \epsa$, we have
    \begin{align*}
         \error(H, S_\test) &\leq \epsa + \sum_{i = 1}^k (r_{i} - r_{i+1}) \cdot \frac{2\eps}{r_i}\\
         &= \epsa + 2 \eps \cdot \sum_{i = 1}^k \paren*{1 - \frac{r_{i+1}}{r_i}}.
    \end{align*}
    Note that each term in the above summation is at most $1$. We can therefore remove the last term to obtain the upper bound,
    \begin{align*}
        \error(H, S_\test) &\leq \epsa + 2\eps + 2\eps \cdot \sum_{i = 1}^{k-1} \paren*{1 - \frac{r_{i+1}}{r_i}} \\
        &\leq \epsa + 2\eps + 2\eps \cdot \sum_{i = 1}^{k-1} \ln\paren*{\frac{r_i}{r_{i+1}}} \tag{$1-x \leq \ln(1/x)$}\\
        &= \epsa + 2\eps + 2\eps \cdot \ln\paren*{\frac{r_1}{r_k}} \leq \eps(2 + 2\ln(1/\epsa)) + \epsa. 
    \end{align*}
    Finally, we argue that this quantity is $O(\eps\cdot \log(1/\epsa) + \epsa)$. If $\epsa \leq 1/2$, this holds because $2 + 2\ln(1/\epsa) \leq O(\log(1/\epsa))$. Otherwise, if $\epsa > 1/2$, it holds because the error is upper bounded by $1$ which is $O(\epsa)$.
\end{proof}
We also give a simple analysis of the time complexity of $\FindSubcube$.
\begin{proposition}[$\FindSubcube$ is efficient]
    \label{prop:time-subcube-approx}
    Let $T$ be an upper bound on the time needed to evaluate any hypothesis in $\mcH$. Then, $\FindSubcube(S_{\test}, \eps, s)$ runs in $O(Ts\ln(1/\eps) \cdot |S_{\test}| \cdot n^{O(d)})$ time.
\end{proposition}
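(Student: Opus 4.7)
The plan is a straightforward bookkeeping argument, charging the total work to (i)~the number of iterations of the greedy loop, (ii)~the number of candidate restrictions examined per iteration, and (iii)~the per-restriction cost of evaluating the feasibility and accuracy criteria on the current residual test set.

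First, bound the iteration count. By the explicit termination condition in \FindSubcube, the while loop runs for at most $\lceil 2s\ln(1/\epsa)\rceil = O(s\ln(1/\epsa))$ iterations. Next, bound the per-iteration work. Each iteration scans all depth-$(\leq d)$ restrictions $\rho \in \{\pm 1, \star\}^n$, of which there are at most $\sum_{k=0}^{d}\binom{n}{k}2^{k} = n^{O(d)}$. For each such $\rho$, the algorithm needs two quantities on the current $S_{\remaining}$: the consistent subset $(S_{\remaining})_{\rho}$ (to check the coverage inequality $|(S_{\remaining})_{\rho}|/|S_{\remaining}| \geq 1/(2s)$), and the error $\error(\mcH(\rho),(S_{\remaining})_{\rho})$. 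Identifying $(S_{\remaining})_{\rho}$ is a single linear pass through $S_{\remaining}$ that tests the at most $d$ nonstar coordinates of $\rho$ on each point, at cost $O(d\,|S_{\remaining}|) = O(d\,|S_{\test}|)$; evaluating $\mcH(\rho)$ on each of those points costs at most $T$ per evaluation, for a total of $O(T\,|S_{\test}|)$. Folding $d$ into the $n^{O(d)}$ factor, the per-restriction work is $O(T\,|S_{\test}|)$.

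Combining these, each iteration costs $n^{O(d)}\cdot O(T\,|S_{\test}|)$, and updating $S_{\remaining} \leftarrow S_{\remaining}\setminus (S_{\remaining})_{\rho^\star}$ is absorbed into the same bound. Multiplying by the $O(s\ln(1/\epsa))$ iteration count yields the claimed total runtime $O(T\,s\,\ln(1/\epsa)\cdot |S_{\test}|\cdot n^{O(d)})$.

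There is no real obstacle: the algorithm's structure is explicit and all costs are linear in $|S_{\test}|$ per restriction. The only subtlety worth mentioning is that the $\epsa$ in the statement matches the parameter passed to \FindSubcube (so the $\ln(1/\epsa)$ factor comes directly from the loop bound), and that $d$ disappears into the $n^{O(d)}$ overhead rather than appearing as a separate factor.
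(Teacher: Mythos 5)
Your proof is correct and matches the paper's argument in structure: bound the loop iterations by $O(s\ln(1/\epsa))$, multiply by the $n^{O(d)}$ restrictions examined per iteration, and charge $O(T\cdot|S_{\test}|)$ per restriction for computing its error on the residual test set. You simply spell out the bookkeeping in more detail than the paper's three-sentence argument.
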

\begin{proof}
    The time is dominated by determining the loss of each restriction hypothesis in the loop. The loop runs for at most $O(2s \ln (1/\eps))$ iterations, there are $n^{O(d)}$ many restriction hypotheses, and determining the loss for one hypothesis takes time at most $T \cdot |S_{\test}|$. 
\end{proof}

\subsubsection{Finishing the proof of correctness}
Now we can put the above claims together and argue that {\sc PartitionLearn} finds
a hypothesis with small true distributional error in expectation over $\bS_{\train}$ and $\bS_{\test}$. 

\begin{proof}
There are at most $n^{O(ds')}$ subcube lists of length $s' \coloneqq \ceil{2s\ln(1/\eps)}$ and depth $d$, so by
\Cref{claim:emploss-closs-trueloss}, for any fixed $\mcH$ and $m_\test = O(\frac{ds' \ln n (\ln 1/\eps)^2}{\eps^2}) = O(\frac{ds \ln n (\ln 1/\eps)^3}{\eps^2})$, we have 
\begin{equation}
\label{eq:8}
\Ex_{\bS_\test}\bracket*{\sup_{\text{depth-$d$, length-$s'$ list $L$}}{\abs{\error(L \circ \mcH, D^\star_f) - \error(L \circ \mcH, \bS_\test)}}} \le \frac{\eps}{\log(1/\eps)}.
\end{equation}
By \Cref{cor:exists-good-partition}, the training phase produces a mapping $\bmcH$ is such that for the ground-truth partition $L^\star$, we have
\begin{equation}
\label{eq:9}
\Ex_{\bS_\train, A}[\error(L^\star \circ \bmcH, D^\star_f)] \le \frac{\eps}{\log(1/\eps)}.
\end{equation}
Putting together \Cref{eq:8} and \Cref{eq:9}, we have 
\begin{equation}
\Ex_{\bS_\test, \bS_\train, A}\bracket*{\error(L^\star \circ \mcH, \bS_\test)} \le \frac{2\eps}{\log(1/\eps)}.
\end{equation}

For a fixed $\mcH$ and $S_\test$, we will denote $\eps_\mcH \coloneqq \error(L^\star \circ \mcH, S_\test)$. 
By \Cref{lem:subcube-approx}, with parameter ``$\eps$'' set to $\eps_\mcH$ and parameter ``$\eps_\mathrm{additional}$'' set to $\eps$, we have that {\sc PartitionLearn} returns a hypothesis $L$ such that 
\[\error(L \circ \mcH, S_\test) \le O(\eps_\mcH \log(1/\eps) + \eps).\]

Putting it all together, we have
\begin{align*}
\Ex_{\bS_\test, \bS_\train, A}\bracket*{\error(L \circ \bmcH, \bS_\test)} &\le \Ex_{\bS_\test, \bS_\train, A}\bracket*{O\paren*{\boldsymbol{\eps}_{\bmcH} \log(1/\eps) + \eps}} \\
&\le \bracket*{O\paren*{\frac{3\eps}{\log(1/\eps)} \log(1/\eps)}} \\
&= O(\eps).
\end{align*}
Finally, another application of \Cref{eq:8}, gives that $\error(L \circ \bmcH,, D^{\star}_f)$ is, in expectation, at most $O(\eps) + \eps/\log(1/\eps) \leq O(\eps)$.
\end{proof}

\subsection{Running time}
Now we will analyze the running time of {\sc PartitionLearn}.

\begin{lemma}
		\label{lem:time-partition}
{\sc PartitionLearn} runs in time 
\[n^{O(d)} \cdot \paren*{t \log t + \frac{\log^4(1/\eps)}{\eps^2}}. \]
\end{lemma}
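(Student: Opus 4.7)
The plan is to bound the runtime of $\textsc{PartitionLearn}$ by separately analyzing its training and test phases, invoking \Cref{prop:time-subcube-approx} for the latter.

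First I would handle the training phase. The algorithm enumerates every restriction $\rho \in \{\pm 1, \star\}^n$ of depth at most $d$, of which there are $\sum_{k=0}^{d}\binom{n}{k}2^{k}=n^{O(d)}$. For each, we filter $S_{\train}$ to $(S_{\train})_\rho$ (cost $O(n\cdot |S_{\train}|)$) and invoke $A$ at cost $t$; since $A$ must read its input, the filter cost is dominated by $t$, giving a total training-phase cost of $n^{O(d)}\cdot t$.

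Next I would handle the test phase, which is a single call to $\FindSubcube(S_{\test},\mcH,\eps,s)$. By \Cref{prop:time-subcube-approx}, this runs in $O(T\cdot s\ln(1/\eps)\cdot |S_{\test}|\cdot n^{O(d)})$ time, where $T$ upper bounds the evaluation cost of any $h_\rho \in \mcH$. Each $h_\rho$ is produced in time $t$ and thus has circuit size $O(t)$, so by the standard simulation bound of \cite{AB09} it can be evaluated in $T=O(t\log t)$ time. Substituting $|S_{\test}|=m_{\test}=O(sd\log(n)\log^3(1/\eps)/\eps^2)$ and using $s \le 2^d$ to absorb $s$, $d$, and $\log n$ into the $n^{O(d)}$ prefactor yields a test-phase bound of $n^{O(d)}\cdot t\log t \cdot \log^4(1/\eps)/\eps^2$.

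Finally I would combine the two phases. The cleanest way to collapse the product $t\log t \cdot \log^4(1/\eps)/\eps^2$ into the additive form $t\log t + \log^4(1/\eps)/\eps^2$ is to precompute the evaluation table $\{h_\rho(x) : \rho \text{ of depth } \le d,\ x \in S_{\test}\}$ once at the start and then execute the $\FindSubcube$ loop as pure lookups: each iteration costs $O(m_{\test}\cdot n^{O(d)})$ for the coverage-and-error counts, which over $O(s\ln(1/\eps))$ iterations gives $n^{O(d)}\cdot \log^4(1/\eps)/\eps^2$ after absorption, while the precomputation of $h_\rho(x)$ values is charged alongside the $n^{O(d)}\cdot t$ training-phase invocations of $A$ against the $n^{O(d)}\cdot t\log t$ budget.

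The main obstacle is this last bookkeeping step: verifying that the precomputation can indeed be amortized against the training phase so that the two terms remain additive rather than multiplicative, and that every $s$, $d$, $\log n$, and $\log(1/\eps)$ factor appearing in $m_{\test}$ and in the inner loop folds cleanly into either the $n^{O(d)}$ prefactor or the $\log^4(1/\eps)/\eps^2$ factor.
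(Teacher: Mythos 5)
Your training-phase and test-phase accounting (invoke $A$ on all $n^{O(d)}$ restrictions; apply \Cref{prop:time-subcube-approx} with $T = O(t\log t)$ and $|S_\test| = m_\test$; absorb $d$, $s$, $\log n$ into $n^{O(d)}$) exactly matches the paper's own proof. The issue is your final step.

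The precomputation idea does not deliver the additive form, and in fact cannot: building the table $\{h_\rho(x) : \depth(\rho)\le d,\ x\in S_\test\}$ requires evaluating $n^{O(d)}$ hypotheses on $m_\test$ points each, at cost $O(t\log t)$ per evaluation, for a total of
\[
n^{O(d)}\cdot m_\test\cdot t\log t \ =\ n^{O(d)}\cdot \frac{sd\log n\,\log^3(1/\eps)}{\eps^2}\cdot t\log t
\ =\ n^{O(d)}\cdot t\log t\cdot\frac{\log^3(1/\eps)}{\eps^2},
\]
which is still a \emph{product} of the $t\log t$ factor and the $\tfrac{\log^3(1/\eps)}{\eps^2}$ factor (the $\ln(1/\eps)$ iterations then give $\log^4$). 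Since the training phase only affords $n^{O(d)}\cdot t$, the precomputation cannot be ``charged against'' it. So your attempt to turn the product into a sum does not go through, as you suspected.

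It is worth noting that the paper's own proof has the same issue: substituting $T=O(t\log t)$ and $m_\test = O(ds\log n\,\log^3(1/\eps)/\eps^2)$ into the $O(T s\ln(1/\eps)\cdot|S_\test|\cdot n^{O(d)})$ bound of \Cref{prop:time-subcube-approx} and absorbing $d,s,\log n$ yields
\[
n^{O(d)}\cdot t\log t\cdot\frac{\log^4(1/\eps)}{\eps^2},
\]
a multiplicative bound, yet the lemma (and \Cref{thm:subcube-partition}) state the additive form $n^{O(d)}\cdot\paren*{t\log t + \tfrac{\log^4(1/\eps)}{\eps^2}}$. The paper never explains how the product becomes a sum; this looks like a slip in the stated bound rather than a gap you introduced. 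The honest conclusion from the argument you (and the paper) run is the multiplicative bound $n^{O(d)}\cdot t\log t\cdot\tfrac{\log^4(1/\eps)}{\eps^2}$, and your proof would be correct if the lemma statement were corrected to match.
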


\begin{proof}
The training phase runs in time $t \cdot n^{O(d)}$, since it runs $A$ on each restriction of depth $\le d$.
The test phase, by \Cref{prop:time-subcube-approx},
runs in time $O(Ts\ln(1/\eps) \cdot |S_{\test}| \cdot n^{O(d)})$, where $T$ is a bound on the time
required to evaluate a hypothesis returned by $A$.
The size of $S_\test$ is $O(\frac{ds \log n \log^3(1/\eps)}{\eps^2})$,
and we have $T \le O(t \log t)$ (see \Cref{lemma:dtlearn-efficient}).
We can eliminate the factors of $d$, $s$, and $\log n$ since $d \le n$ and $s \le n^{O(d)}$, and the desired running time bound follows. 
\end{proof}


\fi

\section{Limitations of \cite{BLMT-lifting}}
\label{sec:BLMT limitations}
\subsection{Information-theoretic lower bounds for \cite{BLMT-lifting}'s approach}

Recall that the approach of \cite{BLMT-lifting} worked in two distinct steps.
\begin{enumerate}
    \item First, they use subcube-conditional queries to learn the unknown distribution $D^{\star}$. In more detail, they attempt to find to find a small $d$ for which $D^{\star}$ admits a depth-$d$ decomposition into distributions that are uniform on a subcube, and learn this decomposition.
    \item They use this decomposition of $D^{\star}$ to learn the unknown function $f$.
\end{enumerate}

We show that this first step is information-theoretically impossible to perform efficiently if we only have access to random samples, rather than subcube-conditional queries.
\begin{theorem}[Impossibility of determining the depth of a distribution using few random samples]
    \label{thm:BLMT-lowerbound}
    Let the base distributions, $\mcD$, consist of the uniform distribution and all of its restrictions. It takes $\Omega(2^{n/2})$ samples from $D^\star$ to, with success probability at least $2/3$, distinguish between
    \begin{enumerate}
        \item $D{\star}$ has a depth-$0$ decomposition into distributions in $\mcD$, meaning that $D^{\star}$ is the uniform distribution.
        \item $D^{\star}$ robustly requires a depth-$(d \coloneqq n - O(\log n))$ decomposition, meaning any $D'$ satisfying $\TV(\mcD^{\star}, D') \leq 1/3$ requires a depth $d$ decomposition into distributions in $\mcD$.
    \end{enumerate}
\end{theorem}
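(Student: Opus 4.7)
The plan is to reduce from uniformity testing. On the ``yes'' side I take $D^\star = U_{\bits^n}$, which trivially lies in $\mcD$ and hence has depth $0$. On the ``no'' side I take $D^\star = D_{\boldsymbol{f}}$, where $\boldsymbol{f}:\bits^{n-1}\to\bits$ is a uniformly random function and $D_f$ denotes the uniform distribution on the graph $\sset{(f(y), y) : y \in \bits^{n-1}}$. Writing $N \coloneqq 2^n$, the standard $\Omega(\sqrt{N})$ sample-complexity lower bound for distinguishing $U_{\bits^n}$ from a uniform distribution supported on a random set of size $N/2$ (a birthday-paradox-style collision argument, or equivalently Paninski's uniformity-testing lower bound with $\eps = \Omega(1)$) handles the sample side. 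What remains is to show that with probability $\geq 9/10$ over $\boldsymbol{f}$, the distribution $D_{\boldsymbol{f}}$ is robustly depth-$(n - O(\log n))$.

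The heart of the plan is a structural equivalence: \emph{$D_f$ is within $\TV$ distance $1/3$ of some depth-$d$ distribution in $\mcD$ if and only if $f$ is $1/3$-approximable (in Hamming distance) by a depth-$(d{-}1)$ decision tree on the variables $x_2, \ldots, x_n$}. To prove this, I would compute $\TV(D_f, D_{T, p})$ for an arbitrary depth-$d$ tree $T$ and mass assignment $p$, then optimize over $p$. Setting $m \coloneqq 2^{n-d}$, any leaf $\ell$ of $T$ that does not fix $x_1$ contributes a fixed amount $m/N$ to the $\ell_1$-distance regardless of $p_\ell$, while each leaf fixing $x_1$ contributes $2\min(a_\ell, m-a_\ell)/N$ under the optimal choice $p_\ell \in \sset{0, 2m/N}$, where $a_\ell = \abs*{\ell \cap \mathrm{supp}(D_f)}$. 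Simplification yields
\[
\TV(D_f, D_{T, p^\star}) = \tfrac{1}{2} - (m/N)\paren*{\Phi_{L_0}(f) + \Phi_{L_1}(f)},
\]
where $L_0, L_1$ are the depth-$(d{-}1)$ partitions of $\bits^{n-1}$ induced by the fixed-$x_1{=}0$ and fixed-$x_1{=}1$ leaves of $T$, and $\Phi_P(f) \coloneqq \sum_{\sigma \in P} \abs*{b_\sigma - 1/2}$, with $b_\sigma$ the fraction of $y \in \sigma$ with $f(y) = 0$. Maximizing over $T$ gives $2\Phi^\star(f)$, where $\Phi^\star$ is the maximum of $\Phi_P(f)$ over all depth-$(d{-}1)$ partitions of $\bits^{n-1}$, and the condition $\TV \leq 1/3$ translates exactly into: some depth-$(d{-}1)$ partition of $\bits^{n-1}$, together with majority prediction on each subcube, approximates $f$ to error $\leq 1/3$---i.e., $f$ is $1/3$-approximable by a depth-$(d{-}1)$ decision tree.

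Given this equivalence, the ``no'' case follows from a direct counting bound: the number of depth-$(d{-}1)$ decision trees on $n-1$ variables with $\bits$-valued outputs is at most $2^{O(2^{d-1}\log n)}$, and each such tree is within Hamming distance $2^{n-1}/3$ of at most $\binom{2^{n-1}}{\leq 2^{n-1}/3} \leq 2^{H(1/3) \cdot 2^{n-1}}$ distinct functions $f$ (where $H$ is the binary entropy function and $H(1/3) < 1$). A union bound over trees then shows that the fraction of $f$ that are $1/3$-approximable is at most $2^{O(2^{d-1}\log n) - (1 - H(1/3)) \cdot 2^{n-1}}$, which drops below $1/10$ whenever $d \leq n - \log\log n - O(1)$. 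Since $n - O(\log n) \leq n - \log\log n - O(1)$ for $n$ large, this subsumes the regime $d = n - O(\log n)$ in the theorem statement; combining with the uniformity-testing lower bound yields the claimed $\Omega(2^{n/2})$ sample lower bound.

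The main technical obstacle will be justifying the structural equivalence: careful case analysis on the tree structure is required to confirm that the minimum of $\TV(D_f, D_{T, p})$ is attained by the symmetric configuration with $x_1$ at the root and $L_0 = L_1 = P^\star$, ruling out the possibility that tree structures using unfixed-$x_1$ leaves or asymmetric choices $L_0 \neq L_1$ could achieve smaller $\TV$. Once this equivalence is in hand, the counting argument and the reduction to uniformity testing are essentially routine.
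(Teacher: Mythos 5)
Your proposal is a genuinely different route from the paper's. The paper takes the ``no'' distribution to be $\Unif(\bS)$ for a uniformly random subset $\bS$ of size $2^n/2$, decomposes $\TV$ leaf-by-leaf exactly as you do, and then shows concentration of $|\bS \cap \ell|$ via Hoeffding (without replacement) for \emph{each} of the at most $3^n$ restrictions $\ell$ of depth $\leq d$---a union bound over leaves rather than over trees. Your instance instead places the mass on the graph of a random function and reformulates the per-leaf lower bound as the decision-tree approximability of $f$, then counts trees. A few things to firm up. First, the collision/indistinguishability lower bound is not literally Paninski's theorem, since $D_{\boldsymbol{f}}$ is not supported on a uniformly random half of $\bits^n$: you should argue directly that with $m=o(2^{n/2})$ samples, no ``$y$-collision'' (agreement in the last $n-1$ coordinates) occurs with probability $1-o(1)$, and that conditioned on distinct $y$'s the two sample distributions are identical because $\boldsymbol{f}$ is uniform. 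Second, only one direction of the claimed structural equivalence is actually used, and it does hold, but with a small off-by-one: for a general depth-$d$ tree over $\bits^n$, restricting to $x_1=0$ or $x_1=1$ yields depth-$\le d$ (not $d-1$) trees over $\bits^{n-1}$, since paths not querying $x_1$ keep their full length; the per-leaf lower bound then gives $\min(\error(T'_0,f),\error(T'_1,f)) \le \TV(D_f,D')$, which is what you need, but the counting should be over depth-$\le d$ trees. This costs nothing since your counting bound tolerates $d$ up to $n-\log\log n - O(1)$. Third, be careful that leaves of the depth-$d$ tree need not all have codimension $d$, so $m$ should be $2^{n-\depth(\ell)}$ per leaf rather than a global $2^{n-d}$; again this is cosmetic. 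Compared to the paper's argument, yours is somewhat more involved to make airtight (the per-leaf-to-decision-tree reformulation, the $L^{(*)}$ bookkeeping), but it yields a nicer conceptual statement (distance of $D_f$ to the depth-$d$ class is governed by decision-tree approximability of $f$) and, after cleanup, actually applies to a slightly larger range of $d$.
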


Our proof of \Cref{thm:BLMT-lowerbound} builds on the classic result that testing whether $D^{\star}$ is uniform or far from uniform takes $\Omega(2^{n/2})$ samples \citep{Pan08}. That classic result uses the following approach: Suppose we draw $\bS \subseteq \bits^n$ uniformly among all subsets of $\tfrac{2^n}{2}$ elements. Then, $D_{\bS} = \Unif(\bS)$ is guaranteed to be far from uniform, since $\bS$ contains only half of the domain. Furthermore, a sample of $m \leq 2^{n/2}$ points from $D_{\bS}$ is unlikely to have any collisions. The key observation is that, if no collisions occur, the algorithm cannot tell if it received a sample from the uniform distribution over $\bits^n$ or a sample from $D_{\bS}$, since $\bS$ is unknown to the algorithm.

One difficulty of directly applying this result is that $D_{\bS}$ may only require a depth-$1$ decomposition---for example, if $S$ contains $x$ iff $x_1 = 1$. The key novel technical step in the proof of \Cref{thm:BLMT-lowerbound} is the following lemma showing this is unlikely.
\begin{lemma}
    \label[lemma]{lem:BLMT-lb-depth}
    Let the base distributions, $\mcD$, be the uniform distribution and all of its restrictions, and $d = n - O(\log n)$. For $\bS$ be drawn uniformly over all size-$\frac{2^n}{2}$ subsets of $\bits^n$ and $D_{\bS} = \Unif(\bS)$,
    \begin{equation*}
        \Pr_{\bS}\bracket*{\TV(D_{\bS}, D') \leq 1/3\text{ for some }D'\text{with a depth-$d$ decomposition by $\mcD$}} \leq \frac{1}{100}.
    \end{equation*}
\end{lemma}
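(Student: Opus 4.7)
\textbf{Proof proposal for \Cref{lem:BLMT-lb-depth}.} The plan is to reduce the statement to a combinatorial claim about $\bS$: with high probability, no codimension-$d$ subcube of $\bits^n$ contains more than a $2/3$ fraction of its points in $\bS$. Suppose $D'$ has a depth-$d$ decomposition into distributions from $\mcD$. Since $\mcD$ consists of the uniform distribution and its restrictions, $D' = \sum_{\ell \in T} p_\ell \cdot \Unif(C_\ell)$, where $\{C_\ell\}_{\ell \in T}$ is the partition of $\bits^n$ induced by a depth-$d$ decision tree $T$ into $2^d$ codimension-$d$ subcubes of size $2^{n-d}$, and $(p_\ell)_{\ell \in T}$ is a probability vector. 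Since $D_{\bS}$ is supported on $\bS$, the assumption $\TV(D_{\bS}, D') \leq 1/3$ gives $D'(\bits^n \setminus \bS) \leq 1/3$. Writing $r_\ell := |\bS \cap C_\ell|/2^{n-d}$ for the density of $\bS$ in $C_\ell$, we have $\sum_\ell p_\ell (1 - r_\ell) = D'(\bits^n \setminus \bS) \leq 1/3$, so $\sum_\ell p_\ell r_\ell \geq 2/3$; as this is a convex combination of the $r_\ell$'s, some cube in the partition must satisfy $r_\ell \geq 2/3$. In particular, \emph{some} codimension-$d$ subcube $C$ of $\bits^n$ satisfies $|\bS \cap C|/|C| \geq 2/3$.

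The remaining step is a concentration-plus-union-bound argument showing this latter event has probability at most $1/100$. For any fixed codimension-$d$ subcube $C$ of size $N := 2^{n-d}$, the quantity $|\bS \cap C|$ is hypergeometric with mean $N/2$, so a standard Hoeffding-type tail bound (e.g.\ via the fact that sampling without replacement is dominated by sampling with replacement, or via Serfling's inequality) gives $\Pr\bigl[|\bS \cap C| \geq (2/3)N\bigr] \leq \exp(-\Omega(N)) = \exp(-\Omega(2^{n-d}))$. There are at most $\binom{n}{d} 2^d \leq (2n)^d$ codimension-$d$ subcubes of $\bits^n$, so the union bound gives an overall failure probability of at most $(2n)^d \exp(-\Omega(2^{n-d}))$. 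For $d = n - c \log n$ with a sufficiently large constant $c$ (say $c = 2$), we have $(2n)^d \leq 2^{O(n \log n)}$ while $\exp(-\Omega(2^{n-d})) = \exp(-\Omega(n^c))$, so the product is easily $\leq 1/100$ for all large $n$.

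I don't anticipate a serious obstacle. The key delicate step is the reduction from TV-closeness to the existence of a dense subcube, which crucially uses that each leaf of the decomposition is \emph{exactly} uniform on its codimension-$d$ cube; any deviation from uniformity on leaves would need to be absorbed into the TV budget by a separate argument. One might also worry that union bounding over \emph{all} codimension-$d$ subcubes is wasteful since only cubes arising from a decision-tree partition can appear in a decomposition, but every decision-tree leaf is itself a codimension-$d$ subcube, so the overcount is harmless and the $(2n)^d$ bound leaves plenty of slack when $d = n - \Theta(\log n)$.
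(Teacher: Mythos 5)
Your proof is correct and takes a genuinely different route from the paper's. The paper lower-bounds $\TV(D_{\bS}, D')$ by decomposing the $\ell_1$-distance over the leaves of the decision tree and showing each leaf $\ell$ contributes at least $\frac{1}{3}2^{-\depth(\ell)}$; this requires \emph{two-sided} concentration of $\textbf{count}(\ell)$ (each leaf must have between $\frac{1}{3}$ and $\frac{2}{3}$ of its points in $\bS$). You instead use the one-sided bound $\TV(D_{\bS}, D') \geq D'(\supp(D_{\bS})^c) = D'(\bits^n\setminus\bS)$, observe that this ``miss mass'' is a $p_\ell$-weighted average of $(1 - r_\ell)$, and conclude that $\TV \leq 1/3$ forces some leaf to have density $\geq 2/3$ in $\bS$ --- so you only need the \emph{upper} tail of the concentration. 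The remaining Hoeffding-without-replacement-plus-union-bound step is the same in spirit (the paper unions over all $3^n$ restrictions of depth $\leq d$, you union over the $\binom{n}{d}2^d$ codimension-$d$ subcubes; both are $2^{O(n)}$-ish and need $d = n - c\log n$ with $c$ bounded away from $1$, which is within the $O(\log n)$ slack of the statement; your looser bound $(2n)^d$ pushes you to $c \geq 2$, but the tighter $\binom{n}{d}2^d$ would allow $c$ down to any constant $> 1$, matching the paper).

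One point worth making explicit: the caveat you flag at the end --- that the argument uses that each leaf conditional is \emph{exactly} $\Unif(C_\ell)$ on the full codimension-$d$ leaf --- is shared with the paper. Since $\mcD$ includes all restrictions of uniform, $(D')_\ell$ could in principle be uniform on a proper subcube $C'_\ell \subsetneq C_\ell$ of much higher codimension, in which case neither your miss-mass identity $D'(\bits^n\setminus\bS) = \sum_\ell p_\ell(1-r_\ell)$ nor the paper's claim that ``$D'(x)$ is constant on $\ell$'' holds as written. The paper silently makes the same assumption (``the marginal distribution of $D'$ on any leaf of $T$ is uniform''). Handling the general case requires an extra argument --- e.g., splitting leaves into those whose supporting subcube is large enough for concentration and those that are so small that $D_{\bS}$ assigns them negligible mass, which costs a constant factor in the TV bound. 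So your instinct is right that this is the delicate spot, but your proposal is not missing anything relative to what the paper does; the two arguments stand on exactly the same footing.
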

\textbf{Proof sketch:} A natural approach to proving \Cref{lem:BLMT-lb-depth} is a two step process.
\begin{enumerate}
    \item For any fixed depth-$d$ decision trees $T$, proving that it is very unlikely a $D'$ that is uniform over the leaves of $T$ is close to $D_{\bS}$.
    \item Union bounding over the $\approx n^{2^d}$ unique depth-$d$ decision trees $T$.
\end{enumerate}
This approach requires union bounding over a doubly exponential number of decision trees. To avoid this doubly exponential quantity, we observe that it is sufficient to prove that all possible depth-$d$ leaves have ``high error". This is because, the distance of a $D'$ to be $D_{\bS}$ can be decomposed into the sum of a corresponding distance measure over every leaf $\ell$ of $T$. Furthermore, there are at most $2^{O(n)}$ possible leaves, so we only need to union bound over a singly-exponential number of leaves.


This proof will use Hoeffding's concentration inequality, applied to random variables drawn \emph{without replacement}.
\begin{fact}[Hoeffding's inequality for samples drawn \emph{without replacement}, \citep{Hoe63}]
    \label[fact]{fact:Hoe-wor}
    Let $S \in [0,1]^m$ be a bounded finite population, and let $\bx_1, \ldots, \bx_n$ be drawn \emph{uniformly without replacement} from $S$, and $\bX$ their sum. Then,
    \begin{equation*}
        \Pr[\abs*{\bX - \mu} \geq n\eps] \leq 2e^{-2\eps^2n} \quad\quad\text{where}\quad\mu \coloneqq \Ex[\bX].
    \end{equation*}
\end{fact}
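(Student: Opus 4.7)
The plan is to reduce the without-replacement bound to the classical Hoeffding inequality for i.i.d.\ sums via a convex-ordering inequality, and then finish with the standard Chernoff/MGF argument. Let $\by_1, \ldots, \by_n$ be drawn i.i.d.\ uniformly from $S$ (i.e.\ \emph{with} replacement), and let $\bY \coloneqq \by_1 + \cdots + \by_n$. Since each draw has the same one-dimensional marginal under either sampling scheme, $\Ex[\bY] = \mu$ as well. The technical heart of the proof is the claim
\begin{equation}
\label{eq:conv-order-plan}
\Ex[\phi(\bX)] \le \Ex[\phi(\bY)] \quad \text{for every convex } \phi : \mathbb{R} \to \mathbb{R}.
\end{equation}
Morally, sampling without replacement is a mean-preserving smoothing of i.i.d.\ sampling, so any convex functional should be no larger on $\bX$ than on $\bY$.

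To prove \eqref{eq:conv-order-plan}, I would invoke the classical convex-ordering inequality of Hoeffding between sums with and without replacement from a finite population, which has several short proofs in the literature. The conceptual one-liner is a swap/coupling argument: couple a without-replacement sample to an i.i.d.\ sample by a sequence of collision-resolving substitutions, where each substitution replaces a duplicated coordinate by a uniform choice from the currently unused population elements. Each such substitution is a mean-preserving move of a single coordinate, and hence, averaged over the choice of replacement, weakly decreases $\Ex[\phi(\cdot)]$ by Jensen's inequality. Iterating until no repeats remain gives the uniform-without-replacement law and thus \eqref{eq:conv-order-plan}. I expect this step to be the main technical obstacle: the inequality is classical, but getting a clean coupling whose fixed point is exactly the without-replacement distribution requires some bookkeeping.

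Given \eqref{eq:conv-order-plan}, apply it to $\phi(t) = e^{\lambda (t - \mu)}$ with $\lambda > 0$, which is convex, to obtain
\begin{equation*}
\Ex\!\left[e^{\lambda (\bX - \mu)}\right] \;\le\; \Ex\!\left[e^{\lambda (\bY - \mu)}\right] \;=\; \prod_{i=1}^{n} \Ex\!\left[e^{\lambda (\by_i - \Ex[\by_i])}\right] \;\le\; e^{n \lambda^{2}/8},
\end{equation*}
where the last inequality is Hoeffding's lemma applied to each bounded $[0,1]$-valued i.i.d.\ factor $\by_i$. Markov's inequality then yields the Chernoff tail
\begin{equation*}
\Pr[\bX - \mu \ge n \eps] \;\le\; e^{-\lambda n \eps + n \lambda^{2}/8},
\end{equation*}
and optimizing at $\lambda = 4\eps$ gives $e^{-2 n \eps^{2}}$. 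The lower tail is handled symmetrically by running the same argument on the shifted population $\{1 - s : s \in S\} \subseteq [0,1]$, and a union bound over the two tails produces the stated factor of $2$.
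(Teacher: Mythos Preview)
The paper does not prove this statement at all: it is stated as a \emph{fact} with a citation to Hoeffding's original 1963 paper and is then used as a black box in the proof of \Cref{lem:BLMT-lb-depth}. So there is no ``paper's proof'' to compare against.

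Your argument is correct and is in fact the classical one due to Hoeffding himself. The convex-ordering inequality \eqref{eq:conv-order-plan} is Theorem~4 of \cite{Hoe63}, and the remainder (Hoeffding's lemma on the i.i.d.\ factors, Chernoff optimization at $\lambda = 4\eps$, symmetry for the lower tail, union bound) is the standard route to the two-sided bound. The only part you flag as potentially delicate---the coupling proof of \eqref{eq:conv-order-plan}---is indeed the one step that needs care, but any of the standard proofs (Hoeffding's original martingale/exchangeability argument, or the substitution coupling you sketch) suffices. Nothing is missing.
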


\begin{proof}[Proof of \Cref{lem:BLMT-lb-depth}]
    This proof is broken into two pieces. First, we show that, with high probability over the choice of $\bS$, the following holds: For all restrictions $\ell$ of depth $\leq d$,
    \begin{equation}
        \label{eq:count-concentrated}
        \textbf{count}(\ell) \in \bracket*{\frac{1}{3} \cdot 2^{n-\depth(\ell)}, \frac{2}{3} \cdot 2^{n-\depth(\ell)}},
    \end{equation}
    where $\textbf{count}(\ell)$ is the random variable counting the number of $x$ consistent with $\ell$ that are contained in $\bS$. Then, we show that if \Cref{eq:count-concentrated} holds for all $\ell$ of depth $\leq d$, then $D_{\bS}$ is far from all distributions $D'$ with decompositions of depth at most $d$.

    Consider any fixed $\ell$ of depth at most $d$. Then, since $\bS$ is drawn uniformly among all size-$\tfrac{2^n}{2}$ subsets of $\bits^n$, the indicators $\Ind[x \in \bS]$ for each $x \in \bits^n$ are drawn uniformly without replacement from a population containing $\tfrac{2^n}{2}$ many $1$s and $\tfrac{2^n}{2}$ many $0$s. Therefore, applying \Cref{fact:Hoe-wor} with $\eps = 1/6$,
    \begin{equation*}
        \Pr\bracket*{\textbf{count}(\ell) \notin \bracket*{\frac{1}{3} \cdot 2^{n-\depth(\ell)}, \frac{2}{3} \cdot 2^{n-\depth(\ell)}}} \leq 2\exp\paren*{-\frac{2 \cdot 2^{n-\depth(\ell)}}{36}} \leq 2\exp\paren*{-\frac{2 \cdot 2^{n-d}}{36}}.
    \end{equation*}
    For our choice of $d = n - O(\log n)$, the above probability is less than $3^{-n}/100$. There are at most $3^n$ restrictions $\ell$, since every restriction can be specified by a string in $\sset{\pm 1, \star}^n$, so a union bound gives that \Cref{eq:count-concentrated} holds for all $\ell$ with depth $\leq d$ with probability at least $0.99$.

    Next, we show that if \Cref{eq:count-concentrated} holds for all such $\ell$, then $D_{\bS}$ is far from all $D'$ with depth at most $d$. Consider any such $D'$. Then, there is some decision tree $T$ of depth at most $d$ such that the marginal distribution of $D'$ on any leaf of $T$ is uniform. We can decompose the TV distance between $D_{\bS}$ and $D'$ as a summation over the leaves of $T$,
    \begin{equation*}
        \TV(D_{\bS}, D') = \frac{1}{2} \cdot \sum_{x \in \bits^n} \abs*{D_{\bS}(x) - D'(x)} =\sum_{\text{leaves }\ell \in T} \underbrace{\frac{1}{2} \cdot \sum_{x \text{ consistent with }\ell} \abs*{D_{\bS}(x) - D'(x)}}_{\coloneqq \error(\ell)}.
    \end{equation*}
    Since $D'$ is uniform over $\ell$, $D'(x)$ must be constant on every $x$ consistent with $\ell$. Furthermore, $D_{\bS}(x) = \frac{2}{2^n} \cdot \Ind[x \in \bS]$ since $D_{\bS}$ is uniform over all $\frac{2^n}{2}$ points contained in $\bS$. Therefore,
    \begin{align*}
        \error(\ell) &\geq \frac{1}{2} \cdot\min_{p \in [0,1]}\sset*{\textbf{count}(\ell) \cdot \abs*{p - \frac{2}{2^n}} + (2^{n-\depth(\ell)} -\textbf{count}(\ell)) \cdot p}.\\
        &=  \min\paren*{\textbf{count}(\ell), 2^{n-\depth(\ell)} - \textbf{count}(\ell)} \cdot \frac{1}{2^n}.
    \end{align*}
    Conditioned in \Cref{eq:count-concentrated} holding, we therefore have that
    \begin{equation*}
        \TV(D_{\bS}, D') = \sum_{\ell \in T} \error(\ell) \geq \sum_{\ell \in T}\frac{2^{n - \depth(\ell)}}{3 \cdot 2^n} =\frac{1}{3} \cdot \sum_{\ell \in T} 2^{-\depth(\ell)}.
    \end{equation*}
    Finally, we observe that every $x \in \bits^n$ must be covered by exactly one leaf $\ell \in T$, and each $\ell \in T$ covers $2^{n-\depth(\ell)}$ unique $x$. Therefore $ \sum_{\ell \in T} 2^{-\depth(\ell)} = 1$. Combining this with the above gives the desired result.
\end{proof}

We are now ready to prove the main result of this section.
\begin{proof}[Proof of \Cref{thm:BLMT-lowerbound}]
    Assume, for the sake of contradiction, there exists an algorithm $A$ that uses $m = \frac{2^{n/2}}{20}$ samples with the following guarantee: If its samples are drawn iid from $\Unif(\bits^n)$, it outputs $1$ with probability at least $2/3$. If its samples are drawn iid from some $D^{\star}$ that robustly requires a depth-$(d \coloneqq n - \Omega(\log n))$ decomposition, then $A$ outputs $0$ with probability at least $2/3$. Define,
    \begin{equation*}
        p \coloneqq \Ex[A(\bx)] \quad\quad\text{where $\bx_1, \ldots, \bx_m$ be drawn uniformly \emph{without replacement} from $\bits^n$}.
    \end{equation*}
    We consider two cases based on whether $p$ is large or small and show $A$ fails in both.

    \pparagraph{Case 1:} $(p \leq 1/2)$. In this case, we will show that when its input distribution truly is uniform, $A$ does not correctly output $1$ with probability at least $2/3$. For any $y_1, \ldots, y_m$, let $\mathrm{collision}(y)$ indicate whether there is some $i \neq j$ for which $y_i = y_j$. Then,
    \begin{equation*}
        \Prx_{\by \sim \Unif(\bits^n)^m}\bracket*{A(\by) = 1} \leq \Prx_{\by}[\mathrm{collision}(\by) = 1] + \Prx_{\by}[A(\by) = 1 \mid \mathrm{collision}(\by) = 0].
    \end{equation*}
    The probability that $\by_i = \by_j$ for any $i \neq j$ is $\frac{1}{2^n}$. By a union bound over all $\binom{m}{2}$ choices of $i\neq j$ for which $y_i$ could be equal to $y_j$, we see that
    \begin{equation*}
        \Prx_{\by}[\mathrm{collision}(\by) = 1] \leq \frac{\binom{m}{2}}{2^n} \leq 0.01.
    \end{equation*}
    To bound the other term, we observe by symmetry that, conditioned on them being unique, the distribution of $\by_1, \ldots, \by_m$ is exactly that of $m$ samples drawn uniformly \emph{without replacement} from $\bits^n$. Therefore, $\Pr_{\by}[A(\by) = 1 \mid \mathrm{collision}(\by) = 0] = p$. Combining these we have that
    \begin{equation*}
         \Prx_{\by \iid \Unif(\bits^n)}\bracket*{A(\by) = 1}  \leq 0.01 + p \leq 0.51.
    \end{equation*}
    As desired, we show that $A$ failed in this case.
    
    \pparagraph{Case 2:} $(p > 1/2)$. In this case, we will show there is some $D^{\star}$ that robustly requires a depth-$d$ decomposition for which $A$ does not output $0$ with probability at least $2/3$. Suppose we drew $\bS$ uniformly without and then draw $\by_1, \ldots, \by_m \iid D_{\bS} = \Unif(\bS)$. Then, as before,
    \begin{equation*}
        \Pr_{\bS, \by \sim (D_{\bS})^m}[A(\by) = 0] \leq \Prx_{\by}[\mathrm{collision}(\by) = 1] + \Prx_{\by}[A(\by) = 0 \mid \mathrm{collision}(\by) = 0].
    \end{equation*}
    Here, we have a similar bound on the probability of a collision. In this case, the probability that $\by_i = \by_j$ is $\frac{1}{|\bS|} = \frac{2}{2^n}$, and therefore by a similar union bound,
    \begin{equation*}
        \Prx_{\by}[\mathrm{collision}(\by) = 1] \leq \frac{2\cdot \binom{m}{2}}{2^n} \leq 0.01.
    \end{equation*}
    Once again, conditioned on them being unique the distribution of $\by_1, \ldots, \by_m$ is exactly that of $m$ samples drawn uniformly \emph{without replacement} from $\bits^n$. Therefore, $\Pr_{\by}[A(\by) = 0 \mid \mathrm{collision}(\by) = 0] = 1-p \leq 0.5$. Combining with the above, we have that,
    \begin{equation*}
        \Pr_{\bS, \by \sim (D_{\bS})^m}[A(\by) = 0]  \leq 0.51.
    \end{equation*}
    By \Cref{lem:BLMT-lb-depth}, with probability at least $0.99$, $D_{\bS}$ robustly requires a depth-$d$ decomposition. Therefore, if it were the case that on every such distribution, $A$ outputs $0$ with probability at least $2/3$, it would also be the case that
    \begin{equation*}
        \Pr_{\bS, \by \sim (D_{\bS})^m}[A(\by) = 0]  \geq 2/3 \cdot \Pr[\text{$D_{\bS}$ robustly requires a depth-$d$ decomposition}] =0.66.
    \end{equation*}
    We have arrived at a contradiction, and therefore $A$ must fail on at least one such distribution.
\end{proof}

\subsection{Why \cite{BLMT-lifting} can't lift robust learners}
We briefly summarize why the approach of \cite{BLMT-lifting} cannot lift robust learners. Recall their approach first decomposes $D^{\star}$ into uniform distributions on distinct leaves of a decision tree, and then runs the base learner on each of these uniform distributions. The crux of the issue is that their algorithm for learning $D^{\star}$ is not robust. In more detail, given samples $(D^{\star})'$ that is close, in TV distance, to a $D^{\star}$ which has a depth-$d$ decomposition into uniform distributions, their algorithm is \textsl{not} guaranteed to produce a tree with leaves on which $(D^{\star})'$ is close to uniform. Hence, even if the base learner is robust, it might not be given inputs with distributions that are close to uniform.

The reason their decomposition algorithm is not robust is that they need to estimate, for each coordinate $i \in [n]$, a quantity called the ``influence" of $i$ on the distribution $(D^{\star})'$. These estimates need to be computed to very small error (asymptotically smaller than the error of the resulting decomposition). Therefore, by only corrupting $\mcD^{\star}$ by a small amount, an adversary can force these estimates to err but a relatively large amount and cause \cite{BLMT-lifting}'s lifter to fail.

\subsection{Why \cite{BLMT-lifting} can only handle uniform base distributions}

Another downside of \cite{BLMT-lifting}'s approach is that they can only decompose distributions into uniform base distributions. For example, their decomposition algorithm fails if the base distributions are $\mcD_{\mathrm{prod}} \coloneqq \sset{\text{all product distributions}}$. The issue here is, once again, the use of influences in \cite{BLMT-lifting}'s algorithm. They use the influence of a coordinate $i \in [n]$ as a proxy for whether $x_i$ must be queried to decompose $D^{\star}$ into uniform pieces. This is justified by the fact that, if the base distributions are uniform, the only coordinates with nonzero influence will be coordinates queried by the true decomposition of $D^{\star}$. However, this is no longer true if the base distributions are $\mcD_{\mathrm{prod}}$. Even when $d = 0$ (i.e. $D^{\star}$ is itself just a product distribution), all $n$ coordinates can have high influence. Therefore, influence no longer provides a good proxy for what to query.






\section*{Acknowledgments}

We thank Adam Klivans as well as the COLT reviewers for helpful feedback.

Guy, Carmen, and Li-Yang are supported by NSF awards 1942123, 2211237, 2224246, a Sloan Research Fellowship, and a Google Research Scholar Award. Guy is also supported by a Jane Street Graduate Research Fellowship and Carmen by an NSF GRFP. Jane is supported by NSF awards 2006664 and 310818 and an NSF GRFP.

\bibliographystyle{alpha}
\bibliography{ref}

\end{document}